\documentclass[12pt]{article}
\usepackage{amsmath,amssymb,amsthm}
\usepackage{psfrag, url}
\usepackage{graphicx}
\usepackage{times}
\usepackage{multirow}
\usepackage{color}
\usepackage[authoryear]{natbib}
\usepackage{rotating}
\usepackage{bbm}
\usepackage{latexsym}
\usepackage{subfigure}

%\usepackage{math}
% definition de macros pour les equations mathematiques

% definitions de Gabriella

\def\pmb#1{\setbox0=\hbox{#1}%
  \kern-.02em\copy0\kern-\wd0
  \kern-.035em\copy0\kern-\wd0
  \kern-.02em\raise.03em\box0 }

\newcommand{\vv}[1]{\mbox{{\boldmath $#1$}}}
\newcommand{\cc}[1]{{\cal{#1}}}

\newcommand{\tp}{^{\top}}

\setlength{\parskip}{\medskipamount}

% definitions de Radu.

\def\p3{\cc{P}^3}
\def\avect{\mbox{\boldmath $a$}}
\def\bvect{\mbox{\boldmath $b$}}

\def\evect{\mbox{\boldmath $e$}}
\def\fvect{\mbox{\boldmath $f$}}
\def\gvect{\mbox{\boldmath $g$}}

\def\svect{\mbox{\boldmath $s$}}

\def\vvect{\mbox{\boldmath $v$}}
\def\wvect{\mbox{\boldmath $w$}}

\def\Avect{\mbox{\boldmath $A$}}
\def\Bvect{\mbox{\boldmath $B$}}

\def\Fvect{\mbox{\boldmath $F$}}
\def\Gvect{\mbox{\boldmath $G$}}

\def\fmat{\mbox{\bf f}}
\def\gmat{\mbox{\bf g}}

\def\smat{\mbox{\bf s}}

\def\Hmat{\mbox{\bf H}}
\def\Imat{\mbox{\bf I}}

\def\Vmat{\mbox{\bf V}}

\def\EE{\mathbb{E}}

% vector: \vector{O}{M} puts an arrow on OM. 

%\newtheorem{theorem}{Theorem}
%\newtheorem{proposition}{Proposition}
%\newtheorem{definition}{Definition}
%\newtheorem{corollary}{Corollary}[proposition]

%\usepackage{notations}

%primary notations
%speaker
\def\spkr{\svect}

\def\sind{n}

\def\smind{N}
\def\outind{\smind+1}
\def\rcx{x}
\def\rcy{y}
\def\rcz{z}

%visual
\def\vobs{\fvect}
\def\vobss{\fmat}
\def\vfunc{\mathcal{F}}
\def\vind{m}

\def\vcu{u}
\def\vcv{v}
\def\vcd{d}

%auditory

\def\aobss{\gvect}
\def\afunc{\mathcal{G}}

%misc

\def\iterind{q}
\def\iter{{(\iterind)}}
\def\itern{{(\iterind+1)}}

%derived notations
%visual

%auditory

%misc

%indexed notations
%speaker

%visual

\def\vobsind{\vobs_\vind}

%auditory
%\def\approbq{{\approb}^{(q)}}

%misc
%\def\avjacind{\avjac(\vestind)}

\def\params{\mbox{\boldmath $\theta$}}
\def\paramsq{\params^{\iter}}
\def\paramsqn{\params^{\itern}}

\def\normal{{\mathcal{N}}}

\def\uni{\mathcal{U}}
\def\lhood{{\mathcal{L}}}
\def\nablavect{\mbox{\boldmath $\nabla$}}
\def\condexp{Q}
\def\fcondexp{Q_\fosm}
\def\scondexp{Q_\sosm}

\newcommand{\argmax}{\operatornamewithlimits{argmax}}

% \DeclareMathOperator*{\argmax}{arg\,max}

%%%%%
%% Paired MM Notations
%%%%%
\def\fos{\mathbb{F}}
\def\fosm{\mathcal{F}}
\def\fobs{\fvect}
\def\Fobs{\Fvect}
\def\fobsavg{\bar{\fobs}}
\def\fobss{\fmat}
\def\find{m}
\def\fmind{M}
\def\fdim{r}
\def\fass{A}
\def\fsass{a}
\def\fpprob{\alpha}
\def\fasss{\Avect}
\def\fsasss{\avect}
\def\fpr{\pi}
\def\fvar{{\bf \Sigma}}

\def\fvol{V}

\def\fobsind{\fobs_\find}
\def\fobsmind{\fobs_\fmind}
\def\fobsavgind{\fobsavg_\cind}

\def\fassind{\fass_\find}
\def\fassmind{\fass_\fmind}
\def\fsassind{\fsass_\find}

\def\fprind{\fpr_\cind}
\def\fprindq{\fprind^{\iter}}
\def\fprindiq{\fpr_i^{\iter}}
\def\fprindqn{\fprind^{\itern}}
\def\fprmind{\fpr_\cmind}
\def\fprout{\fpr_\outind}
\def\fproutq{\fprout^{\iter}}
\def\fvarind{\fvar_\cind}

\def\fvarindq{\fvarind^{\iter}}
\def\fvarindiq{\fvar_i^{\iter}}
\def\fvarindqn{\fvarind^{\itern}}
\def\fvarmind{\fvar_\cmind}

\def\fpprobind{\fpprob_{\find\cind}}
\def\fpprobindq{\fpprobind^{\iter}}
\def\fpprobout{\fpprob_{\find,\outind}}
\def\fpproboutq{\fpprobout^{\iter}}

\def\sos{\mathbb{G}}
\def\sosm{\mathcal{G}}
\def\sobs{\gvect}
\def\Sobs{\Gvect}
\def\sobsavg{\bar{\gvect}}
\def\sobss{\gmat}
\def\sind{k}
\def\smind{K}
\def\sdim{p}
\def\sass{B}
\def\ssass{b}
\def\spprob{\beta}
\def\sasss{\Bvect}
\def\ssasss{\bvect}
\def\spr{\lambda}
\def\svar{{\bf \Gamma}}

\def\svol{U}

\def\sobsind{\sobs_\sind}
\def\sobsmind{\sobs_\smind}
\def\sobsavgind{\sobsavg_\cind}

\def\sassind{\sass_\sind}
\def\sassmind{\sass_\smind}
\def\ssassind{\ssass_\sind}

\def\sprind{\spr_\cind}
\def\sprindq{\sprind^{\iter}}
\def\sprindiq{\spr_i^{\iter}}
\def\sprindqn{\sprind^{\itern}}
\def\sprmind{\spr_\cmind}
\def\sprout{\spr_\outind}
\def\sproutq{\spr^{(q)}_\outind}
\def\svarind{\svar_\cind}

\def\svarindq{\svarind^{\iter}}
\def\svarindiq{\svar_i^{\iter}}
\def\svarindqn{\svarind^{\itern}}
\def\svarmind{\svar_\cmind}

\def\spprobind{\spprob_{\sind\cind}}
\def\spprobindq{\spprobind^{\iter}}
\def\spprobout{\spprob_{\sind,\outind}}
\def\spproboutq{\spprobout^{\iter}}

\def\pars{\mathbb{S}}
\def\param{\svect}
\def\pparams{\smat}

\def\cind{n}
\def\cmind{N}
\def\pdim{d}
\def\outind{{\cmind+1}}

\def\fpprobsum{\bar{\fpprob}}
\def\fpprobsumind{\fpprobsum_{\cind}}

\def\spprobsum{\bar{\spprob}}
\def\spprobsumind{\spprobsum_{\cind}}

\def\corrmat{\Vmat}
\def\fcorrmat{\corrmat_{f}}
\def\scorrmat{\corrmat_{g}}

\def\paramind{\param_\cind}
\def\paramindq{\paramind^{\iter}}
\def\paramindiq{\param_i^{\iter}}
\def\paramindqn{\paramind^{\itern}}
\def\parammind{\param_\cmind}

\newtheorem{theorem}{Theorem}

{%
\color{black}
% \hrulefill
% \vspace*{0.5cm}%
% \end{minipage}
}

{ \begin{list}%
	{$\bullet$}%
	{\setlength{\labelwidth}{25pt}%
	 \setlength{\leftmargin}{30pt}%
	 \setlength{\itemsep}{\parsep}}}%
{ \end{list} }

%%% margins
\textheight 23.4cm
\textwidth 14.65cm
\oddsidemargin 0.375in
\evensidemargin 0.375in
\topmargin  -0.55in
\interfootnotelinepenalty=10000

\theoremstyle{plain}
\newtheorem{lemma}{Lemma}

\begin{document}

%\title{\Large\bf Conjugate Mixture Models for Clustering Multimodal Data}

%\author{Vasil Khalidov, Florence Forbes and Radu Horaud\\
%INRIA Grenoble Rh\^one-Alpes\\
%655, avenue de l'Europe\\
%38330 Montbonnot Saint-Martin, FRANCE\\
%Corresponding author: Vasil.Khalidov@inrialpes.fr}

%\maketitle

\hspace{13.9cm}1

%\ \vspace{10mm}\\
\begin{center}
{\LARGE Conjugate Mixture Models for Clustering Multimodal
  Data\footnote{This work was supported by the Perception-on-Purpose
    project, under EU grant FP6-IST-2004-027268.}}\\ \vspace{3mm}
\textit{Neural Computation, Volume 23, Issue 2,  February 2011, 517-557}
\end{center}
\ \\
{\bf Vasil Khalidov$^{\dagger}$,  Florence Forbes$^{\dagger}$ and  Radu Horaud$^{\dagger}$}\\
{\small $^{\dagger}$INRIA Grenoble Rh\^one-Alpes,
655, avenue de l'Europe\\
38330 Montbonnot Saint-Martin, FRANCE}\\

\ \vspace{-15mm}
%\ \\[-5mm]
\begin{flushleft}
{\bf Keywords:} mixture models, EM algorithm, multisensory fusion,
audiovisual integration, Lipschitz continuity, global
optimization.
\end{flushleft}
\thispagestyle{empty}
\markboth{}{NC instructions}
\ \vspace{-20mm}\\

\begin{abstract}
The problem of multimodal clustering arises whenever the data are
gathered with several physically different sensors. Observations
from different modalities are not necessarily aligned in the sense
there there is no obvious way to associate or to compare them in
some common space. A solution may consist in considering multiple
clustering tasks independently for each modality. The main
difficulty with such an approach is to guarantee that the unimodal
clusterings are mutually consistent. In this paper we show that
multimodal clustering can be addressed within a novel framework,
namely \textit{conjugate mixture models}. These models exploit the
explicit transformations that are often available between an
unobserved parameter space (objects) and each one of the
observation spaces (sensors). We formulate the problem as a
likelihood maximization task and we derive the associated
\textit{conjugate expectation-maximization} algorithm. The
convergence properties of the proposed algorithm are thoroughly
investigated. Several local/global optimization techniques are
proposed in order to increase its convergence speed. Two
initialization strategies are proposed and compared. A consistent
model-selection criterion is proposed. The algorithm and its
variants are tested and evaluated within the task of 3D
localization of several speakers using both auditory and visual
data.

\end{abstract}

\section{Introduction}
The unsupervised clustering of multimodal data is a key capability
whenever the goal is to group observations that are gathered using
several physically different sensors. A typical example is the
computational modelling of biological \textit{multisensory
perception}. This includes the issues of how a human detects
objects that are both seen and touched
\citep{PougetDeneveDuhamel2002,ernst02humans}, seen and heard
\citep{AnastasioPattonBelkacemBoussaid2000,King2004,King2005} or
how a human localizes one source of sensory input in a natural
environment in the presence of competing stimuli and of a variety
of noise sources~\citep{HaykinChen2005}. More generally,
\textit{multisensory fusion}~\citep{HallMcMullen2004,Mitchell2007}
is highly relevant in various other research domains, such as
target tracking~\citep{SmithSingh2006} based on radar and sonar
data~\citep{naus04simultaneous,coiras07rigid}, mobile robot
localization with laser rangefinders and
cameras~\citep{CastellanosTardos99}, robot manipulation and object
recognition using both tactile and visual
data~\citep{Allen95,JoshiSanderson99}, underwater navigation based
on active sonar and underwater
cameras~\citep{MajumderSchedingDurrantWhyte2001}, audio-visual
speaker detection~\citep{beal03graphical,
perez04data,fisher04speaker}, speech
recognition~\citep{Heckmann02,nefian02dynamic,ShaoBarker2008}, and
so forth.

When the data originates from a single object, finding the best
estimates for the object's characteristics is usually referred to as a
{\it pure fusion} task and it reduces to combining multisensor observations
in some optimal way~\citep{beal03graphical,kushal06audiovisual,SmithSingh2006}.
For example, land and underwater robots fuse data from several sensors to
build a 3D map of the ambient space irrespective of the number of objects
present in the environment~\citep{CastellanosTardos99,MajumderSchedingDurrantWhyte2001}.
The problem is much more complex when several objects are present and
when the task implies their detection, identification, and
localization. In this case one has to consider two processes simultaneously:
{\em (i)~segregation}~\citep{FisherIII2001} which assigns each observation either to an object
or to an \textit{outlier} category and
{\em (ii)~estimation} which computes the parameters of each object based on
the group of observations that were assigned to that object.
In other words, in addition to fusing observations from different sensors,
multimodal analysis requires the assignment of each observation to
one of the objects.

This observation-to-object association problem can be cast into a
probabilistic framework. Recent
multisensor data fusion methods able to handle several objects are based on
particle filters~\citep{checka04multiple,chen04realtime,gaticaperez07audiovisual}.
Notice, however, that the dimensionality of the parameter space
grows exponentially with the number of objects,
causing the number of required particles to increase dramatically
and augmenting computational costs.
A number of efficient sampling procedures were
suggested~\citep{chen04realtime,gaticaperez07audiovisual} to keep the problem
tractable. Of course this is done at the cost of loss in model generality, and
hence these attempts are strongly application-dependent.
Another drawback of such models is that they cannot provide estimates of
accuracy and importance of each modality with respect to each object.
The sampling and distribution estimation are performed in the parameter space, but
no statistics are gathered for the observation spaces.
Recently~\citep{hospedales08structure} extended the single-object model
of~\citep{beal03graphical} to multiple objects:
several single-object models are incorporated into the multiple-object model
and the number of objects is selected by an additional hidden node,
which thus accounts for model selection.
We remark that this method also suffers from exponential growth in
the number of possible models.
%and in fact has not been tested for the cases
%with more than two objects.

In the case of unimodal data, the problems of grouping observations and
of associating groups with objects can be cast into the framework of
standard data clustering which can be solved using a variety of parametric or
non-parametric techniques.
%The problem of associating multimodal observations with objects can also be
%cast into the framework of unsupervised clustering, where observations
%are assigned to groups based on a distance measure.
%The observations from the same group are more similar to each other
%than those from different groups.
%Similarity is often assessed based on a distance measure between observations.
%However this is possible only in the case of unimodal data.
The problem of \textit{clustering multimodal
data} raises the difficult question of how to group together
observations that belong to different physical spaces with
different dimensionalities, e.g., how to group visual data
with auditory data? When the observations from two different
modalities can be {\it aligned} pairwise, a
natural solution is to consider
%a multimodal observation space made of
the Cartesian product of two unimodal spaces.
Unfortunately, such an alignment is not possible in most practical
cases. Different sensors operate at different frequency rates and
hence the number of observations gathered with one sensor can be quite
different from the number of observations gathered with another
sensor. Consequently, there is no
obvious way to align the observations pairwise. Considering all
possible pairs would result in a combinatorial blow-up and
typically create abundance of erroneous observations corresponding
to inconsistent solutions.

Alternatively, one may consider several unimodal
clusterings, provided that the relationships between a common object
space and several observation spaces can be explicitly specified.
%In parallel, one may also
%enforce the
%relationships existing between these unimodal clusterings,
%provided by prior knowledge
%available with the mappings from object space to sensor spaces.
%the
%the physical properties of the sensors and on their spatial layout.
%We will refer to these relationships as {\em consistency
 %conditions}.
\textit{Multimodal clustering} then results in a
number of unimodal clusterings that are jointly governed by
the same unknown parameters characterizing the object space.

%As it will be explained in detail below,
%multimodal clustering naturally fits in the framework of mixture
%distributions \cite{McLachlanPeel2000}. Indeed, each object generates
%a number of unimodal densities. It it
%therefore tempting to model each
%observation space by a mixture model. While all the mixtures
%(accross all the observation spaces) share the same number of components and are
%conditioned by the same object characteristics, each unimodal
%mixture has its own parameters that can be estimated from the
%correspoding data. Therefore, one can adopt the
%expectation-maximization (EM) algorithm~\citep{DLR,McL96} for
%estimating the parameters of the unimodal mixture distributions
%as well as the characteristics of the set of objects that generated
%the observed data.
%using maximum likelihood with hidden variables.
%
%It has the advantage, over a number of
%clustering techniques, to deal with soft assignments that limit
%the effect of possible labelling errors on the model parameter
%estimates. However, such a separate treatment does not guarantee
%consistency across the clustering results obtained from the
%different modalities. This difficulty explains the few amount of
%works that deal with mixture models in multisensory analysis.

The original contribution of this paper is to show how
the problem of \textit{clustering multimodal data} can be addressed
within the framework of mixture models~\citep{McLachlanPeel2000}.
We propose a variant of the EM algorithm~\citep{DLR,McL96} specifically
designed to estimate object-space parameters that are indirectly
observed in several sensor spaces. The convergence properties of the
proposed algorithm are thoroughly investigated and several efficient
implementations are described in detail.
The proposed model is composed of a number of modality-specific mixtures.
These mixtures are jointly governed by a set of common \textit{object-space
parameters} (which will be referred to as the \textit{tying
parameters}), thus insuring consistency between the sensory data
and the object space being sensed. This is
done using explicit transformations from the
unobserved parameter space (object space) to each of the observed spaces
(sensor spaces). Hence, the proposed model is able to deal with
observations that live in spaces with
different physical properties such as dimensionality,
space metric, sensor sampling rate, etc.
We believe that linking the object space with the sensor spaces
based on object-space-to-sensor-space transformations
%is not only unifying but
%also it
has more discriminative power than existing multisensor fusion
techniques and hence performs better in terms of multiple object
identification and localization.
To the best of our knowledge, there has been no attempt to use a generative
model, such as ours, for
the task of multimodal data interpretation.

In Section~\ref{sec:conjclust} we formally introduce the concept of
\textit{conjugate mixture models}. Standard Gaussian mixture models
(GMM) are used to model the unimodal data. The parameters of these
Gaussian mixtures are governed by the object parameters
through a number of object-space-to-sensor-space
transformations (one transformation for each sensing
modality). Through the paper we will assume a very general class of
transformations, namely non-linear Lipschitz continuous functions (see
below).
In Section~\ref{sec:conjem} we cast the multimodal data clustering
problem in the framework of maximum likelihood and
we explicitly derive the expectation and
maximization steps of the associated EM algorithm. While the E-step of
the proposed algorithm is
standard, the M-step implies non-linear optimization of the
expected complete-data log-likelihood with respect to the object
parameters. We investigate efficient local and global optimization
methods.
More specifically, in Section~\ref{sec:alganal_ls} we prove that, provided that the
object-to-sensor functions as well as their first derivatives are
Lipschitz continuous, the gradient of the expected complete-data
log-likelihood is Lipschitz continuous as well. The immediate
consequence is that a number of
recently proposed optimization algorithms
specifically designed to solve Lipschitzian
global optimization problems can be used within the M-step of the
proposed algorithm~\citep{zhigljavsky08stochastic}. Several of these
algorithms combine a local maximum search procedure with an
initializing scheme to determine, at each iteration, {\it good}
initial values from which the local search should be performed. This
implies that the proposed EM algorithm has guaranteed convergence properties.
Section~\ref{sec:alganal_gs} discusses several possible local search
initialization schemes, leading to different convergence speeds.
In Section~\ref{sec:init} we propose and compare two possible strategies
to initialize the EM algorithm.
Section~\ref{sec:bic} is devoted to a consistent criterion to determine
the number of objects.
Section~\ref{sec:results} illustrates the proposed method with
the task of audiovisual object detection and localization using
binocular vision and binaural hearing.
Section~\ref{section:experimental-validation} analyses in
detail the performances of the proposed model under various practical
conditions with both simulated and real data. Finally,
Section~\ref{sec:discuss} the paper and provides directions for future
work.

\section{Mixture Models for Multimodal Data}
\label{sec:conjclust}

We consider $\cmind$ objects $n=1 \ldots N$. Each object $n$
is characterized by a parameter vector of dimension $d$, denoted by $\paramind
\in
\pars\subseteq\mathbb{R}^\pdim$. The set
$\pparams=\{\param_1,\ldots,\paramind,\ldots,\parammind\}$ corresponds
to the unknown \textit{tying parameters}. The objects are observed
with a number of physically different sensors. Although, for the sake
of clarity, we will consider two modalities, generalization is straightforward.
Therefore, the observed data consists of
two sets of observations denoted respectively by
$\fobss=\{\fobs_1,\ldots,\fobsind,\ldots,\fobsmind\}$ and
$\sobss=\{\sobs_1,\ldots,\sobsind,\ldots,\sobsmind\}$ lying in two
different observation spaces of dimensions $r$ and $p$,
$\fobsind\in\fos\subseteq\mathbb{R}^{\,\fdim}$ and
$\sobsind\in\sos\subseteq\mathbb{R}^\sdim$.

One key ingredient of our approach is that we consider the
transformations:
\begin{equation}
\label{eq:the-mappings}
\left\{
\begin{array}{c}
\fosm:\pars\to\fos \\
\sosm:\pars\to\sos
\end{array}
\right.
\end{equation}
that map $\pars$ respectively into the observation spaces  $\fos$ and
$\sos$. These transformations are defined by the physical and
geometric properties of the sensors and they are supposed to be
known. We treat the general case when both $\fosm$ and $\sosm$
are non-linear.

An assignment variable is associated with each observation, thus
indicating the object that generated the observation:
$\fasss=\{\fass_1,\ldots,\fassind,\ldots,\fassmind\}$
and $\sasss=\{\sass_1,\ldots,\sassind,$ $\ldots,\sassmind\}$.
Hence, the
segregation process is cast into a hidden variable problem. The
notation $\fassind=\cind$ (resp. $\sassind=\cind$) means that the
observation $\fobsind$ (resp. $\sobsind$) was generated by object
$\cind$. In order to account for erroneous observations, an additional
$\outind$-th fictitious object is introduced to represent an outlier
category. The notation $\fassind=\outind$ (resp.
$\sassind=\outind$) means that $\fobsind$ (resp.
$\sobsind$) is an outlier. Note that we will also use the following
standard convention: upper case letters for random variables ($\fasss$
and $\sasss$) and lower case letters for their realizations ($\fsasss$
and $\ssasss$).
The usual conditional
independence assumption leads to:
\begin{equation}
P(\fobss,\sobss|\fsasss,\ssasss)=\prod\limits_{\find=1}^{\fmind}P(\fobsind|\fsassind)\prod\limits_{\sind=1}^{\smind}P(\sobsind|\ssassind).
\label{eq:condindependency}
\end{equation}
In addition, all assignment variables are assumed to be
independent, i.e.:
\begin{equation}
P(\fsasss,\ssasss)=\prod\limits_{\find=1}^{\fmind}P(\fsassind)\prod\limits_{\sind=1}^{\smind}P(\ssassind).
\label{eq:independency}
\end{equation}
As discussed in Section~\ref{sec:discuss}, more
general cases could be considered. However, we focus on the
independent case for it captures most of the features relevant to
the conjugate clustering task and because more general dependence structures
could be reduced to the independent case via the use of
appropriate variational approximation techniques
\citep{jordan98introduction,cfp03}.

Next we define the following probability density functions, for all $n=1 \ldots N,
N+1$, for all $\fobsind \in \fos$ and for all $\sobsind \in \sos$:
\begin{eqnarray}
\label{eq:first_density}
P_{\cind}^{\fos}(\fobsind) &=&  P(\fobsind|\fassind=\cind), \\
\label{eq:second_density}
P_{\cind}^{\sos}(\sobsind)    &=&  P(\sobsind|\sassind=\cind).
\end{eqnarray}

More specifically, the likelihoods for an observation to belong to an object $n$ are
Gaussian distributions whose means $\fosm(\paramind)$ and
$\sosm(\paramind)$ correspond to the object's parameter vector $\paramind$
mapped to the observations spaces by the transformations $\fosm$ and $\sosm$:
\begin{eqnarray}
\label{eq:lhoods-1}
P_{\cind}^{\fos}(\fobsind) &=&
%\normal_\fdim(\fobsind;\;\fosm(\paramind), \fvarind) \\
\normal (\fobsind;\;\fosm(\paramind), \fvarind), \\
\label{eq:lhoods-2}
P_{\cind}^{\sos}(\sobsind) &=&
%\normal_\sdim(\sobsind;\;\sosm(\paramind),\svarind)
\normal (\sobsind;\;\sosm(\paramind),\svarind),
\end{eqnarray}
with:
\begin{equation}
\label{eq:gdistribution}
\normal (\fobsind;\;\fosm(\paramind), \fvarind) =
\frac{1}{(2\pi)^{\fdim/2}|\fvarind |^{1/2}} \exp
\left(-\frac{1}{2}\|\fobsind - \fosm(\paramind)\|^2_{\fvarind}\right),
\end{equation}
where the notation $\|\vvect-\wvect\|^2_\fvar$ stands for the
Mahalanobis distance $(\vvect-\wvect)\tp\fvar^{-1}(\vvect-\wvect)$
and $\tp$ stands for the transpose of a matrix.
The likelihoods of outliers are taken as two uniform distributions:
\begin{eqnarray}
\label{eq:lhoods-3}
P_{\outind}^{\fos}(\fobsind)=\uni(\fobsind;\fvol), \\
\label{eq:lhoods-4}
P_{\outind}^{\sos}(\sobsind)=\uni(\sobsind;\svol),
\end{eqnarray}
where $\fvol$ and $\svol$ denote the respective support volumes.
We also define the prior probabilities $\pi = (\pi_1, \ldots, \pi_n, \ldots,\pi_\outind)$ and $\lambda
= (\lambda_1, \ldots, \lambda_n, \ldots, \lambda_\outind)$:
\begin{eqnarray}
\fprind &=& P(\fassind=\cind), \quad\forall \find=1 \ldots \fmind, \\
\label{eq:priors}
\sprind &=& P(\sassind=\cind), \quad\forall \sind=1 \ldots \smind.
\end{eqnarray}

Therefore, $\fobsind$ and $\sobsind$ are distributed according to two
($\outind$)-component mixture models, where each mixture is made of
$\cmind$ Gaussian components and one uniform component:
\begin{eqnarray}
\label{eq:first-mixture}
P(\fobsind) &=&
\sum\limits_{\cind=1}^{\cmind} \fprind
\normal(\fobsind;\;\fosm(\paramind), \fvarind) + \fprout
\uni(\fobsind;\fvol), \\
\label{eq:second-mixture}
 P(\sobsind) &=&
\sum\limits_{\cind=1}^{\cmind} \sprind
\normal(\sobsind;\;\sosm(\paramind), \svarind) + \sprout
\uni(\sobsind;\svol).
\end{eqnarray}

The log-likelihood of the observed data can then be written as:
\begin{eqnarray}
\lhood(\fobss, \sobss, \params) &=&\sum\limits_{\find=1}^\fmind
\log
\left(\sum\limits_{\cind=1}^{\cmind} \fprind
  \normal(\fobsind;\;\fosm(\paramind), \fvarind) + \fprout
  \uni(\fobsind;\fvol)\right) +\nonumber\\
 \label{eq:lhood}
&+& \sum\limits_{\sind=1}^\smind \log
\left(\sum\limits_{\cind=1}^{\cmind} \sprind
\normal(\sobsind;\;\sosm(\paramind), \svarind) + \sprout
\uni(\sobsind;\svol)\right)
\label{eq:log-lhood}
\end{eqnarray}
where:
\begin{equation}
\params=\{\fpr_1,\ldots,\fprmind,\fprout,\spr_1,\ldots,\sprmind,\sprout,\param_1,\ldots,\parammind,
\fvar_1,\ldots,\fvarmind,\svar_1,\ldots,\svarmind\}
\label{eq:parameter-vector}
\end{equation}
denotes the
set of all unknown parameters to be estimated using a
maximum likelihood principle.

% Typically $\fdim$ and
%$\sdim$ are smaller than $\pdim$, so that the parameter inference
%cannot be carried out based on data from only one space, but
%instead all the data in all the spaces should be considered
%simultaneously.

%So we prefer to view the parameter estimation task as a conjugate clustering problem that is solved in the two spaces simultaneously.

\begin{figure*}[tb]
\begin{center}
     \includegraphics[width=\textwidth, type=pdf, ext=.pdf, read=.pdf]{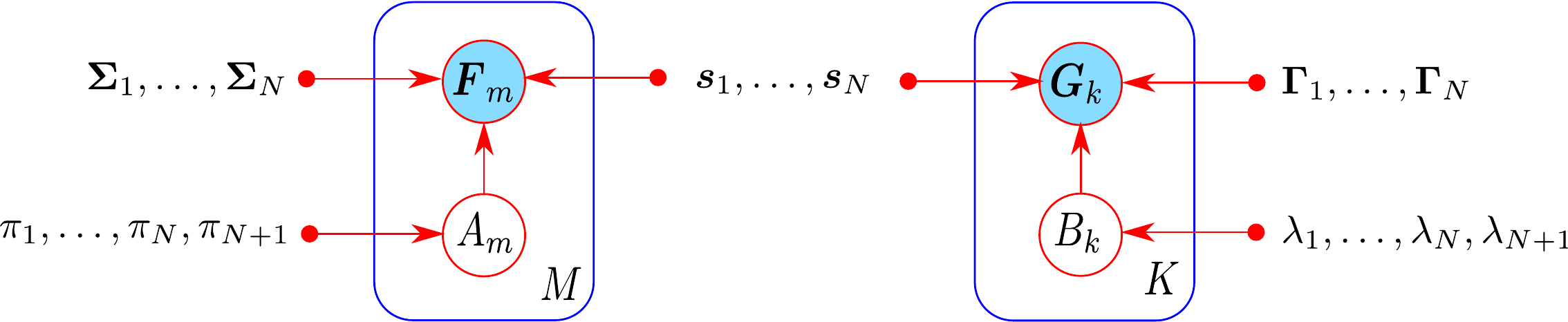}
\end{center}
\caption{\label{fig:graphmod} Graphical representation of the
conjugate mixture model. Circles denote random variables, plates
(rectangles) around them represent multiple similar nodes, their
number being given in the plates.}
\end{figure*}

The graphical representation of our conjugate mixture model is shown
in Figure~\ref{fig:graphmod}. We adopted the graphical
notation introduced in~\citep{Bishop2006} to represent similar nodes in a more
compact way: the $\fmind$ (resp. $\smind$) similar nodes are indicated with a
{\it plate}. The two sensorial modalities are linked by the \textit{tying
parameters} $\param_1, \ldots \param_\cmind$ shown in between the two plates.

\section{Generalized EM for Clustering Multimodal Data}
\label{sec:conjem}

Given the probabilistic model just described, we wish to determine the
parameter vectors associated with the objects that generated
observations in two different sensory spaces. It is well known that direct maximization of
the observed-data log-likelihood~(\ref{eq:lhood}) is
difficult to achieve. The expectation-maximization (EM)
algorithm~\citep{DLR,McL96} is a
standard approach to maximize likelihood functions of type
(\ref{eq:lhood}). It is based on the following representation, for
two arbitrary values of the parameters $\params$ and
$\tilde{\params}$:
\begin{eqnarray}
\lhood(\fobss, \sobss, \params)&=&\condexp(\params,\tilde{\params}) +
H(\params,\tilde{\params}), \\
\mbox{with}\quad \condexp(\params,\tilde{\params})&=&\mathbb{E}[\log P(\fobss,
\sobss, \fasss,\sasss; \params)\;|\;\fobss, \sobss; \tilde{\params}],\\
\mbox{and}\quad H(\params,\tilde{\params})&=&-\mathbb{E}[\log
P(\fasss,\sasss\;|\;\fobss, \sobss; \params)|\fobss, \sobss;
\tilde{\params}],
\end{eqnarray}
where the expectations are taken over the hidden variables
$\fasss$ and $\sasss$. Each iteration $q$ of EM proceeds in two steps:
\begin{itemize}
\item {\it Expectation}.
For the current values $\paramsq$ of the
parameters, compute the conditional
expectation with respect to variables $\fasss$ and $\sasss$:
\begin{equation}
\condexp(\params,\paramsq)=\sum\limits_{\fsasss\in\{1
\ldots\outind\}^\fmind} \sum\limits_{\ssasss\in\{1
\ldots\outind\}^\smind} P(\fsasss, \ssasss|\fobss,
\sobss;\;\paramsq) \; \log P(\fobss, \sobss, \fsasss,
\ssasss;\;\params)
\label{eq:condexp}
\end{equation}
\item {\it Maximization}. Update  the parameter set $\paramsq$ by maximizing
(\ref{eq:condexp}) with respect to $\params$:
\begin{equation}
\paramsqn=\argmax\limits_{\params}\condexp(\params,\paramsq)
\label{label:argmaxtheta}
\end{equation}
\end{itemize}

It is well known that the EM algorithm
increases the target function $\lhood(\fobss, \sobss,
\params)$ in (\ref{eq:lhood}), i.e., the sequence of estimates
$\{\paramsq\}_{q \in \mathbb{N}}$  satisfies $\lhood(\fobss, \sobss,
\paramsqn)\geq\lhood(\fobss, \sobss, \paramsq)$. Standard EM deals
with the parameter estimation of a single mixture model, and a closed
form solution for (\ref{label:argmaxtheta}) exists in this case.
%However, in the M
%step, a closed form expression for $\paramsqn$ may not  always
%exist.
When the maximization (\ref{label:argmaxtheta}) is difficult to
achieve, various generalizations of EM are proposed. The M step
can be relaxed by requiring just an increase rather than an
optimum. This yields Generalized EM (GEM) procedures~\citep{McL96}
(see~\citep{boyles83convergence} for a result on the convergence
of this class of algorithms). The GEM algorithm
 searches for some $\paramsqn$ such that
$\condexp(\paramsqn,\paramsq)\geq\condexp(\paramsq,\paramsq)$.
Therefore  it provides a sequence of estimates that still verifies
the non-decreasing likelihood property although the convergence
speed is likely to decrease. In the case of conjugate mixture
models, we describe in more detail the specific forms of the E
and M steps in the following sections.

\subsection{The Expectation Step}
Using (\ref{eq:independency})-(\ref{eq:priors}) the conditional
expectation (\ref{eq:condexp}) can be decomposed as:
\begin{equation}
\condexp(\params,\paramsq)=\fcondexp(\params,\paramsq)+\scondexp(\params,\paramsq),
\label{eq:expectation-decompose}
\end{equation}
with
\begin{align}
\fcondexp(\params,\paramsq)&=\sum\limits_{\find=1}^\fmind\sum\limits_{\cind=1}^\outind
\fpprobindq\log\big(\fprind P(\fobsind|\fassind=\cind;\;\params)\big),\\
\quad\scondexp(\params,\paramsq)&=\sum\limits_{\sind=1}^\smind\sum\limits_{\cind=1}^\outind
\spprobindq\log\big(\sprind
P(\sobsind|\sassind=\cind;\;\params)\big),
\end{align}
where $\fpprobindq$ and $\spprobindq$ denote the posterior
probabilities $\fpprobindq=P(\fassind=\cind | \fobsind; \paramsq)$
and $\spprobindq=P(\sassind=\cind | \sobsind; \paramsq)$. Their
expressions can be derived straightforwardly from Bayes' theorem, $\forall \cind=1\ldots\cmind$:
\begin{align}
\fpprobindq=&\frac{\fprindq\normal(\fobsind;\fosm(\paramindq),
\fvarindq)} {\sum\limits_{i=1}^\cmind
\fprindiq\normal(\fobsind;\fosm(\paramindiq), \fvarindiq) +
\fvol^{-1}\fproutq},\label{eq:posts1}\\
\quad
\spprobindq=&\frac{\sprindq\normal(\sobsind;\;\sosm(\paramindq),
\svarindq)} {\sum\limits_{i=1}^\cmind
\sprindiq\normal(\sobsind;\sosm(\paramindiq), \svarindiq) +
\svol^{-1}\sproutq}. \label{eq:posts2}
\end{align}
and $\fpproboutq=1-\sum\limits_{\cind=1}^\cmind \fpprobindq$ and $\spproboutq=1-\sum\limits_{\cind=1}^\cmind \spprobindq$.
Using (\ref{eq:lhoods-1})-(\ref{eq:lhoods-4}) the expressions above
further lead to:
%\begin{equation}
%\fcondexp(\params,\paramsq)= -\frac{1}{2}\sum\limits_{\find=1}^\fmind\sum\limits_{\cind=1}^\cmind
%\fpprobindq\left(\|\fobsind-\fosm(\paramind)\|^2_{\fvarind}+\log((2\pi)^\fdim|\fvarind|\fprind^{-2})\right)-\frac{1}{2}\sum\limits_{\find=1}^\fmind\fpproboutq \log(\fvol^2\fprout^{-2})
%\end{equation}
\begin{align}
\fcondexp(\params,\paramsq)=&-\frac{1}{2}\sum\limits_{\find=1}^\fmind\sum\limits_{\cind=1}^\cmind
\fpprobindq\left(\|\fobsind-\fosm(\paramind)\|^2_{\fvarind}+\log((2\pi)^\fdim|\fvarind|\fprind^{-2})\right)-\nonumber\\
&-\frac{1}{2}\sum\limits_{\find=1}^\fmind\fpproboutq \log(\fvol^2\fprout^{-2}),
\label{eq:condexpgauss1}\\
\scondexp(\params,\paramsq)=&-\frac{1}{2}\sum\limits_{\sind=1}^\smind\sum\limits_{\cind=1}^\cmind
\spprobindq\left(\|\sobsind-\sosm(\paramind)\|^2_{\svarind}+\log((2\pi)^\sdim|\svarind|\sprind^{-2})\right)-\nonumber\\
&-\frac{1}{2}\sum\limits_{\sind=1}^\smind\spproboutq \log(\svol^2\sprout^{-2}).
\label{eq:condexpgauss2}
\end{align}
%The notation $\|\vvect-\wvect\|^2_\fvar$ stands for the
%Mahalanobis distance $(\vvect-\wvect)\tp\fvar^{-1}(\vvect-\wvect)$
%($\tp$ means transpose) between vectors $\vvect$ and $\wvect$,
%given a positive definite matrix $\fvar$.
%We write $|\fvar|$ for the determinant of $\fvar$.

\subsection{The Maximization Step}

In order to carry out the maximization (\ref{label:argmaxtheta}) of the
conditional expectation (\ref{eq:condexp}), its
derivatives with respect to the model parameters are set to zero.
This leads to the standard update expressions for priors, more
specifically $\forall \cind=1,\ldots,\outind$:
\begin{eqnarray}
\label{eq:msteppriors}
\fprindqn &=&
\frac{1}{\fmind}\sum\limits_{\find=1}^{\fmind}\fpprobindq, \\
\label{eq:msteppriors-more}
\sprindqn &=&
\frac{1}{\smind}\sum\limits_{\sind=1}^{\smind}\spprobindq.
\end{eqnarray}

The covariance matrices are governed by the tying parameters
$\paramindqn \in \pars$ through the functions $\fosm$ and $\sosm$, $\forall
\cind=1,\ldots,\cmind$:
\begin{eqnarray}
\label{eq:optvars}
\fvarindqn (\paramindqn) &=& \frac{1}{\sum\limits_{\find=1}^{\fmind}\fpprobindq}
\sum\limits_{\find=1}^{\fmind}\fpprobindq
(\fobsind-\fosm(\paramindqn))(\fobsind-\fosm(\paramindqn))\tp, \\
\label{eq:optvars-more}
\svarindqn (\paramindqn) &=& \frac{1}{\sum\limits_{\sind=1}^{\smind}\spprobindq}
\sum\limits_{\sind=1}^{\smind}\spprobindq
(\sobsind-\sosm(\paramindqn))(\sobsind-\sosm(\paramindqn))\tp.
\end{eqnarray}
For every $\cind=1,\ldots,\cmind$, $\paramindqn$ is the parameter vector
such that:
\begin{equation}
\paramindqn = \argmax\limits_{\param} Q_{\cind}^{\iter}(\param),
\end{equation}
where
\begin{eqnarray}
Q_{\cind}^{\iter}(\param)=
&-&\sum\limits_{\find=1}^{\fmind}\fpprobindq
(\|\fobsind-\fosm(\param)\|^2_{\fvar_n(\param)} +
\log|\fvar_n(\param)|)-\nonumber\\
 \label{eq:mstepmax}
&-&\sum\limits_{\sind=1}^{\smind}\spprobindq
(\|\sobsind-\sosm(\param)\|^2_{\svar_n(\param)} +
\log|\svar_n(\param)|).
\end{eqnarray}
We stress that the covariances $\fvar_n(\param)$ and $\svar_n(\param)$ in
(\ref{eq:optvars}) and (\ref{eq:optvars-more}) are considered as  functions of
$\param \in \pars$.  Hence, at each iteration of the algorithm, the overall
update of the tying parameters can be split into $\cmind$ identical
optimization tasks of the form~(\ref{eq:mstepmax}).
These tasks  can be solved in parallel. In general,
$\fosm$ and $\sosm$ are non-linear transformations and hence
there is no simple closed-form expression for the estimation of the tying parameters.

% The
%local search procedure has to find and returns a local maximum
%$\paramind^{\infty}$ starting from any arbitrary
%$\paramind^{(0)}$, satisfying
%$F_{\cind}^{\iter}(\paramind^{\infty}) \geq
%F_{\cind}^{\iter}(\paramind^{(0)}$.

\subsection{Generalized EM for Conjugate Mixture Models}
\label{sec:indep:em:gem}

The initial parameters selection of the proposed EM algorithm for conjugate mixture models uses the procedure
{\it Initialize} that is given in Section~\ref{sec:init}.
The maximization step uses two procedures, referred to as {\it Choose} and {\it Local Search} which are explained in
detail in Sections~\ref{sec:alganal_ls} and~\ref{sec:alganal_gs}.
To determine the number of objects we define the procedure
{\it Select} that is derived in Section~\ref{sec:bic}. The overall EM procedure is outlined below:
\begin{enumerate}
\item Apply procedure {\it Initialize} to initialize the parameter vector:\\
$\params^{(0)}=\{
\fpr_1^{(0)},\ldots,\fpr_\outind^{(0)}, \spr_1^{(0)},\ldots,\spr_\outind^{(0)},
\param_1^{(0)},\ldots,\param_\cmind^{(0)},
\fvar_1^{(0)},\ldots,\fvar_\cmind^{(0)},\svar_1^{(0)},\ldots,\svar_\cmind^{(0)}\}$;
\item \textit{E step}: compute $\condexp(\params,\paramsq)$ using equations (\ref{eq:posts1}) to (\ref{eq:condexpgauss2});
\item \textit{M step}: estimate $\paramsqn$ using the following sub-steps:
    \begin{enumerate}
    \item \textit{The priors.} Compute
      $\fpr_1^{\itern},\ldots,\fpr_\outind^{\itern}$ and
      $\spr_1^{\itern},\ldots,\spr_\outind^{\itern}$
      using~(\ref{eq:msteppriors}) and (\ref{eq:msteppriors-more});
    \item \textit{The tying parameters.} For each $\cind=1\ldots\cmind$:
      \begin{itemize}
        \item Apply procedure {\it Choose} to determine an
        initial value, denoted by  $\tilde{\param}_n^{(0)}$, as
        proposed in
    Section~\ref{sec:alganal_gs};
    \item Apply procedure {\it Local Search} to each
      $Q_{\cind}^{\iter}(\param)$ as defined in (\ref{eq:mstepmax}) starting from
    $\tilde{\param}_n^{(0)}$ and set the result to
    $\param_n^{\itern}$ using the eq.~(\ref{eq:mstepmeans}) specified below;
     \end{itemize}
    \item \textit{The covariance matrices.} For every
      $\cind=1\ldots\cmind$, use (\ref{eq:optvars}) and (\ref{eq:optvars-more}) to compute
      $\fvar_n^{\itern}$ and $\svar_n^{\itern}$;
    \end{enumerate}
\item \textit{Check for convergence}: Terminate, otherwise go to Step 2;
\item  Apply procedure \textit{Select}, use~(\ref{eq:bic-criterion}) specified below to determine the best $\cmind$;
\end{enumerate}

This algorithm uses the following procedures:

\begin{itemize}

\item {\it Initialize}: this procedure aims at providing the
initial parameter values $\params^{(0)}$. Its performance has
a strong impact on the time required for the algorithm to converge.
In Section~\ref{sec:init} we propose different initialization strategies based on
single-space cluster detection.

\item {\it Select}: this procedure applies the BIC-like criterion
to determine the number of objects $\cmind$.
In Section~\ref{sec:bic} propose the consistent criterion
for the case of conjugate mixture models.

\item {\it Choose}: the goal of this procedure
is to provide at each M step initial values
$\tilde{\param}_1^{(0)},\ldots,\tilde{\param}_\cmind^{(0)}$ which
are likely to be close to the global maxima of the functions
$Q_{\cind}^{\iter}(\param)$ in (\ref{eq:mstepmax}).  The exact
form of this procedure is important to ensure the ability of the
subsequent {\it Local Search} procedure to find these global
maxima. We will use results on global search algorithms~\citep{zhigljavsky08stochastic}
and propose different variants in Section~\ref{sec:alganal_gs}.

\item {\it Local Search}: an important
requirement of this procedure is that it finds a local maximum of the
$Q_{\cind}^{\iter}(\param)$'s starting from any arbitrary point
in $\pars$. In this work, we will consider procedures that consist
in iterating a local update of the
form ($\nu$ is the iteration index):
\begin{equation}
\tilde{\paramind}^{(\nu+1)}=\tilde{\paramind}^{(\nu)}+\Hmat_{\cind}^{(q,
\nu) }\nablavect Q_{\cind}^{\iter}(\tilde{\paramind}^{(\nu)}),
\label{eq:mstepmeans}
\end{equation}
with $\Hmat_{\cind}^{(q, \nu)}$ being a positive definite matrix that
may vary with $\nu$. When the gradient $\nablavect
Q_{\cind}^{\iter}(\param)$ is Lipschitz continuous with some
constant $L_{\cind}^{\iter}$, an appropriate choice  that
guarantees the increase of
$Q_{\cind}^{\iter}(\tilde{\param}^{(\nu)})$ at each iteration
$\nu$, is to choose $\Hmat_{\cind}^{(q, \nu)}$ such that it verifies
$\|\Hmat_{\cind}^{\,(q, \nu)}\|\leq 2/L_{\cind}^{\iter}$.

Different choices for $\Hmat_{\cind}^{(q, \nu)}$ are possible and they correspond
to different optimization methods that belong, in general, to the variable metric class.
For example $\Hmat_{\cind}^{(q, \nu)}=\frac{2}{L_{\cind}^{\iter}}\Imat$
leads to gradient ascent, while taking $\Hmat_{\cind}^{(q, \nu)}$ as a
scaled inverse of the Hessian matrix
%$\Hmat_{Q_{\cind}^{\iter}}$
would lead to a Newton-Raphson optimization step. Other
possibilities include Levenberg-Marquardt and quasi-Newton
methods.
\end{itemize}

%In the next section we prove Theorem~\ref{lemma:lipschitz}:
%If both $\fosm$ and $\sosm$ as well as their derivatives are Lipschitz
%continuous,  then the gradient $\nablavect Q_{\cind}^{\iter}(\param)$
%is also Lipschitz continuous.  The  form of our Lipschitz constant
%$L_{\cind}^{\iter}$ is given by
%formula~(\ref{eq:optimfunclipconst}).

\section{Analysis of the {\it Local Search} Procedure}
\label{sec:alganal_ls}

Each instance of~(\ref{eq:mstepmax}) for $\cind=1,\ldots,\cmind$
can be solved independently. In this section
we focus on providing a set of conditions under which each iteration
of our algorithm guarantees that the objective function
 $Q_{\cind}^{\iter}(\param)$ in (\ref{eq:mstepmax}) is increased.
We start by
rewriting~(\ref{eq:mstepmax}) more conveniently in order to perform
the optimization with respect to $\param \in \pars$. To simplify
the notation, the iteration index $\iterind$ is sometimes omitted.
We simply write $Q_\cind(\param)$ for $Q_\cind^{\iter}(\param)$.

Let $\fpprobsumind=\sum_{\find=1}^\fmind \fpprobindq$
and $\spprobsumind=\sum_{\sind=1}^\smind \spprobindq$
denote the average object weights in each one of the two modalities.
We introduce
$\fpprob_{\cind} =\fpprobsumind^{-1} (\fpprob^{(q)}_{1\cind}, \ldots, \fpprob^{(q)}_{\fmind\cind})$
and
$\spprob_{\cind} =\spprobsumind^{-1} (\spprob^{(q)}_{1\cind}, \ldots, \spprob^{(q)}_{\smind\cind})$
the discrete probability distributions obtained by normalizing the object weights. We denote by
$\Fobs$ and $\Sobs$ the random variables that take their values in the discrete sets
$\{\fobs_1,\ldots,\fobsind,\ldots,\fobsmind\}$ and
$\{\sobs_1,\ldots,\sobsind,\ldots,\sobsmind\}$. It follows that
the expressions for the optimal variances (\ref{eq:optvars}) and
(\ref{eq:optvars-more}) as
functions of $\param$, can be rewritten as:
\begin{align}
\label{eq:optvarsbar0}
\fvarindqn(\param) =& \EE_{\fpprob_{\cind}}[
\big(\Fobs-\fosm(\param)\big)
\big(\Fobs-\fosm(\param)\big)\tp], \\
\label{eq:optvarsbar0-more}
\svarindqn(\param) = & \EE_{\spprob_{\cind}}[
\big(\Sobs-\sosm(\param)\big)
\big(\Sobs-\sosm(\param)\big)\tp],
\end{align}
where $\EE_{\fpprob_{\cind}}$ and $\EE_{\spprob_{\cind}}$ denote the expectations
with respect to the distributions $\fpprob_{\cind}$ and $\spprob_{\cind}$. Using some
standard projection formula, it follows that the covariances are:
\begin{align}
\label{eq:optvarsbar}
\fvarindqn(\param) = & \fcorrmat+ \vvect_f \vvect_f\tp, \\
\label{eq:optvarsbar-more}
\svar^{\itern}(\param) = & \scorrmat+ \vvect_g \vvect_g\tp,
\end{align}
where $\fcorrmat$ and $\scorrmat$ are the covariance matrices of
$\Fobs$ and $\Sobs$ respectively under distributions $\fpprob_{\cind}$
and $\spprob_{\cind}$, and $\vvect_f$ and $\vvect_g$ are vectors defined by:
\begin{eqnarray}
\vvect_f = \EE_{\fpprob_{\cind}}[\Fobs]-\fosm(\param), \\
\vvect_g = \EE_{\spprob_{\cind}}[\Sobs]-\sosm(\param).
\end{eqnarray}
For convenience we omit the index $\cind$ for
$\fcorrmat$, $\scorrmat$, $\vvect_f$ and $\vvect_g$.
Let $\fobsavgind=\EE_{\fpprob_{\cind}}[\Fobs]$ and
$\sobsavgind=\EE_{\spprob_{\cind}}[\Sobs]$. This yields:
\begin{align}
\fobsavgind&=\fpprobsumind^{-1}\sum\limits_{\find=1}^\fmind\fpprobindq\fobsind, \\
\sobsavgind&=\spprobsumind^{-1}\sum\limits_{\sind=1}^\smind\spprobindq\sobsind, \\
\fcorrmat&=\fpprobsumind^{-1}\sum\limits_{\find=1}^\fmind\fpprobindq\fobsind\fobsind\tp - \fobsavgind\fobsavgind\tp,\\
\scorrmat&=\spprobsumind^{-1}\sum\limits_{\sind=1}^\smind\spprobindq\sobsind\sobsind\tp - \sobsavgind\sobsavgind\tp.
\end{align}
Next we derive  a simplified expression for
$Q_{\cind}(\param)$ in (\ref{eq:mstepmax}) in order to investigate its
properties. Notice that one can write (\ref{eq:mstepmax}) as the sum
$Q_{\cind}(\param)=Q_{\cind,\fosm}(\param)+Q_{\cind,\sosm}(\param)$, with:
\begin{equation}
\label{eq:firstpart-of-Q}
Q_{\cind,\fosm}(\param)=-\sum\limits_{\find=1}^{\fmind}\fpprobindq
(\|\fobsind-\fosm(\param)\|^2_{\fvarindqn(\param)} + \log|\fvarindqn(\param)|),
\end{equation}
and a similar expression for
$Q_{\cind,\sosm}(\param)$. Eq.~(\ref{eq:firstpart-of-Q}) can be written:
\begin{equation}
Q_{\cind,\fosm}(\param)=-\fpprobsumind (\EE_{\fpprob_{\cind}}[
(\Fobs-\fosm(\param))\tp \fvarindqn (\param)^{-1}(\Fobs-\fosm(\param))]
+ \log|\fvarindqn(\param)|). \label{QF}
\end{equation}
The first term of (\ref{QF}) can be further divided into two terms:
\begin{align}
&\EE_{\fpprob_{\cind}}[(\Fobs-\fosm(\param))\tp
\fvarindqn(\param)^{-1}(\Fobs-\fosm(\param))] = \nonumber\\
=&\EE_{\fpprob_{\cind}}[(\Fobs-\fobsavgind)\tp
\fvarindqn(\param)^{-1}(\Fobs-\fobsavgind)] + \vvect_f\tp
{\fvarindqn(\param)}^{-1} \vvect_f.
\label{eq:QF-first}
\end{align}
The Sherman-Morrison formula applied to (\ref{eq:optvarsbar})
%and (\ref{eq:optvarsbar})
leads to
\begin{equation}
\fvarindqn(\param)^{-1}=\fcorrmat^{-1}-\fcorrmat^{-1}\vvect_f\vvect_f\tp\fcorrmat^{-1}/(1+D_{\cind,\fosm}(\param)),
\end{equation}
with:
\begin{equation}
D_{\cind,\fosm}(\param)=\|\fosm(\param)-\fobsavgind\|_{\fcorrmat}^2.
\end{equation}
It follows that (\ref{eq:QF-first}) can be written as the sum of:
\begin{equation}
\EE_{\fpprob_{\cind}}[(\Fobs-\fobsavgind)\tp
\fvarindqn(\param)^{-1}(\Fobs-\fobsavgind)] = C_f -
\frac{D_{\cind,\fosm}(\param)}{1+ D_{\cind,\fosm}(\param)},
\end{equation}
and of
\begin{equation}
\vvect_f\tp
{\fvarindqn(\param)}^{-1} \vvect_f =
\frac{D_{\cind,\fosm}(\param)}{1+ D_{\cind,\fosm}(\param)}.
\end{equation}
Hence the first term of~(\ref{QF}), namely~(\ref{eq:QF-first}) is equal to $C_f$ which
is constant with respect to $\param$.
Moreover, applying the matrix determinant
lemma to the second term of~(\ref{QF}) we successively obtain:
\begin{align}
\log|\fvarindqn(\param)|&=\log|\fcorrmat + \vvect_f\vvect_f\tp| =
\log|\fcorrmat| + \log(1 + \vvect_f\tp\fcorrmat^{-1}\vvect_f) = \nonumber\\
&=\log|\fcorrmat| + \log(1 + D_{\cind,\fosm}(\param)).
\end{align}
It follows that there is only one term depending on $\param$ in~(\ref{QF}):
\begin{equation}
Q_{\cind,\fosm}(\param)=-\fpprobsumind \left( C_f + \log|\fcorrmat| + \log(1 + D_{\cind,\fosm}(\param)) \right).
\end{equation}
Repeating the same derivation for the second sensorial modality
we obtain the following equivalent form of
(\ref{eq:mstepmax}):
\begin{equation}
Q_\cind(\param)=-\fpprobsumind \log(1 + D_{\cind,\fosm}(\param))- \spprobsumind
\log(1 + D_{\cind,\sosm}(\param)) + C, \label{eq:mstepmaxequiv}
\end{equation}
where $C$ is some constant not depending on $\param$.
%where
%$D_{\fosm}(\param)=\|\fosm(\param)-\fobsavg\|_{\fcorrmat}^2$,
%$D_{\sosm}(\param)=\|\sosm(\param)-\sobsavg\|_{\scorrmat}^2$ and
%$C$ is some constant not depending on $\param$.

Using this form of $Q_\cind(\param)$, we can now investigate the
properties of its gradient $\nablavect Q_\cind(\param)$.
It appears that under some regularity assumptions on $\fosm$ and $\sosm$,
the gradient $\nablavect Q_\cind(\param)$ is bounded and Lipschitz continuous.
The corresponding theorem is formulated and proved.
First we establish as a lemma some technical results, required to prove the theorem.
In what follows, for any matrix $\Vmat$, the matrix norm used is the operator norm
$\|\Vmat\|=\sup\limits_{\|\vvect\|=1}\|\Vmat \vvect\|$.
For simplicity, we further omit the index $\cind$.

\begin{lemma} \label{lemma:matrixineq}
Let $\Vmat$ be a symmetric positive definite matrix. Then the
function
$$\varphi(\vvect)=\|\Vmat\vvect\|/(1+\vvect\tp\Vmat\vvect)$$  is
bounded by
$\varphi(\vvect)\leq C_{\varphi}(\Vmat)$
with $C_{\varphi}(\Vmat)= \sqrt{\|\Vmat\|}/2$
and is Lipschitz continuous:
$$\forall\vvect,\tilde{\vvect}\quad\|\varphi(\vvect)-\varphi(\tilde{\vvect})\|\leq
L_{\varphi}(\Vmat)\|\vvect-\tilde{\vvect}\|,$$
where $L_{\varphi}(\Vmat)=\|\Vmat\|(1 +\mu(\Vmat)/2)$ is the Lipschitz
constant and
$\mu(\Vmat)=\|\Vmat\|\|\Vmat^{-1}\|$ is the condition number of $\Vmat$.
\end{lemma}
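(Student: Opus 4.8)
\emph{Proof plan.} The statement splits into two independent claims — a uniform bound and a Lipschitz estimate — and I would treat them separately, exploiting throughout that $\Vmat$ is symmetric positive definite, so that $\Vmat^2\preceq\|\Vmat\|\,\Vmat$ in the Loewner order and hence $\|\Vmat\vvect\|^2=\vvect\tp\Vmat^2\vvect\leq\|\Vmat\|\,\vvect\tp\Vmat\vvect$ for every $\vvect$.

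For the bound, the plan is to reduce the multivariate estimate to a one-dimensional optimization. Writing $t=\vvect\tp\Vmat\vvect\geq 0$, the inequality above gives $\|\Vmat\vvect\|\leq\sqrt{\|\Vmat\|}\,\sqrt{t}$, hence $\varphi(\vvect)\leq\sqrt{\|\Vmat\|}\,\sqrt{t}/(1+t)$. It then suffices to maximize the scalar function $t\mapsto\sqrt{t}/(1+t)$ on $[0,\infty)$; substituting $s=\sqrt{t}$ shows its maximum is $1/2$, attained at $t=1$, which yields $\varphi(\vvect)\leq\sqrt{\|\Vmat\|}/2=C_{\varphi}(\Vmat)$.

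For the Lipschitz estimate, the plan is to bound $\sup_{\vvect\neq\zerovect}\|\nablavect\varphi(\vvect)\|$ and then invoke the mean value inequality (the domain of $\vvect$ being convex). Away from the origin $\varphi$ is differentiable, and with $a=\|\Vmat\vvect\|$ and $b=\vvect\tp\Vmat\vvect$ the quotient rule gives $\nablavect\varphi=\Vmat^2\vvect/\big(a(1+b)\big)-2a\,\Vmat\vvect/(1+b)^2$. I would bound the two summands separately. For the first, $\|\Vmat^2\vvect\|\leq\|\Vmat\|\,a$ and $1+b\geq 1$ give a contribution at most $\|\Vmat\|$. For the second, its norm equals $2a^2/(1+b)^2$; using $a^2=\|\Vmat\vvect\|^2\leq\|\Vmat\|^2\|\vvect\|^2$ together with the SPD lower bound $b\geq\|\vvect\|^2/\|\Vmat^{-1}\|$ gives $a^2\leq\|\Vmat\|\,\mu(\Vmat)\,b$, which combined with the scalar bound $b/(1+b)^2\leq 1/4$ bounds the second contribution by $\|\Vmat\|\,\mu(\Vmat)/2$. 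Adding the two yields $\|\nablavect\varphi(\vvect)\|\leq\|\Vmat\|\big(1+\mu(\Vmat)/2\big)=L_{\varphi}(\Vmat)$.

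The one point that needs care — and the only genuine obstacle — is that $\varphi$ fails to be differentiable at $\vvect=\zerovect$, where it behaves like the norm $\|\Vmat\vvect\|$. I would dispatch this by observing that $\varphi$ is continuous everywhere while the gradient bound holds on the open dense set complement of $\{\zerovect\}$: restricting to the segment joining any $\vvect$ and $\tilde{\vvect}$, the resulting scalar function is continuous and has derivative bounded by $L_{\varphi}(\Vmat)\|\vvect-\tilde{\vvect}\|$ except at the at most one parameter value where the segment meets the origin, hence is $L_{\varphi}(\Vmat)$-Lipschitz along the segment. I note in passing that using the sharper $a^2\leq\|\Vmat\|\,b$ in place of $a^2\leq\|\Vmat\|\mu(\Vmat)\,b$ would already give the smaller constant $\tfrac{3}{2}\|\Vmat\|$, so the stated $L_{\varphi}(\Vmat)$ is comfortably valid since $\mu(\Vmat)\geq 1$.
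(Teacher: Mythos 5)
Your proposal is correct and follows essentially the same route as the paper: the bound is obtained by reducing to a one-dimensional maximization, and the Lipschitz constant by bounding the gradient and splitting it into the same two contributions (the paper merely works in the transformed variable $\wvect=\Vmat\vvect$ and then rescales by $\|\Vmat\|$, whereas you differentiate $\varphi$ directly in $\vvect$). Your explicit treatment of the non-differentiability at $\vvect=\zerovect$, and the remark that the sharper bound $\|\Vmat\vvect\|^2\leq\|\Vmat\|\,\vvect\tp\Vmat\vvect$ already yields the constant $\tfrac{3}{2}\|\Vmat\|\leq L_{\varphi}(\Vmat)$, are small improvements over the paper's argument, which glosses over both points.
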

\begin{proof}
We start by introducing $\wvect=\Vmat\vvect$ so that
$\varphi(\vvect)=\tilde{\varphi}(\wvect)=\|\wvect\|/(1+\wvect\tp\Vmat^{-1}\wvect).$
As soon as
$\wvect\tp\Vmat^{-1}\wvect\geq\lambda_{\mathrm{min}}\|\wvect\|^2$
(where we denoted by $\lambda_{\mathrm{min}}$ the smallest
eigenvalue of $\Vmat^{-1}$, so that in fact
$\lambda_{\mathrm{min}}=\|\Vmat\|^{-1}$), to find the maximum of
$\tilde{\varphi}(\wvect)$ we should maximize the expression $t /
(1 + \lambda_{\mathrm{min}}t^2)$ for $t=\|\wvect\|\geq0$. It is
reached at the point $t^*=\lambda_{\mathrm{min}}^{-1/2}$.
Substituting this value into the original expressions gives
$\varphi(\vvect)\leq \sqrt{\|\Vmat\|}/2$.

To compute the Lipschitz constant $L_{\varphi}$ we consider the
derivative:
\begin{align}
\|\nablavect\tilde{\varphi}'(\wvect)\|&=\frac{\left\|(1 +
\wvect\tp\Vmat^{-1}\wvect)\wvect -
2\|\wvect\|^2\Vmat^{-1}\wvect\right\|}{\|\wvect\|(1+\wvect\tp\Vmat^{-1}\wvect)^2}\leq
1 +
\frac{2\|\Vmat^{-1}\|\|\wvect\|^2}{(1+\wvect\tp\Vmat^{-1}\wvect)^2},\nonumber
\end{align}
from where we find that $\|\nablavect
\tilde{\varphi}'(\wvect)\|\leq 1 + \mu(\Vmat)/2$, and so
$L_{\varphi}=\|\Vmat\|(1 + \mu(\Vmat)/2).\quad\blacksquare$
\end{proof}

This lemma yields the following main result for the gradient $\nablavect Q$:

\begin{theorem} \label{lemma:lipschitz}
Assume functions $\fosm$ and $\sosm$ and their derivatives
$\fosm'$ and $\sosm'$ are Lipschitz continuous with constants
$L_{\fosm}$, $L_{\sosm}$, $L'_{\fosm}$ and $L'_{\sosm}$
respectively. Then the gradient $\nablavect Q$ is bounded and
Lipschitz continuous with some constant $L$.
\end{theorem}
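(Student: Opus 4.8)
The plan is to start from the reduced form~(\ref{eq:mstepmaxequiv}), in which the only dependence on $\param$ sits inside the two scalars $D_{\cind,\fosm}(\param)$ and $D_{\cind,\sosm}(\param)$, and to differentiate explicitly. Since $Q$ splits as the sum of an $\fosm$-term and a $\sosm$-term (plus a constant) of identical structure, and since a finite sum of bounded Lipschitz maps is again bounded and Lipschitz, it is enough to treat one modality, say $\fosm$, and then add the analogous estimate for $\sosm$. A preliminary observation I would record is that $\fcorrmat$ and $\scorrmat$ do \emph{not} depend on $\param$: they are built only from the data and from the fixed posteriors of the current iteration. Assuming the observations are non-degenerate, they are symmetric positive definite, which is exactly what lets me take $\Vmat=\fcorrmat^{-1}$ in Lemma~\ref{lemma:matrixineq}.

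Next I would compute the gradient. Writing $\fosm'(\param)$ for the Jacobian of $\fosm$ and using $D_{\cind,\fosm}(\param)=(\fosm(\param)-\fobsavgind)\tp\fcorrmat^{-1}(\fosm(\param)-\fobsavgind)$, the chain rule gives
\[
\nablavect Q_{\cind,\fosm}(\param)=-2\fpprobsumind\,(\fosm'(\param))\tp\,
\frac{\fcorrmat^{-1}\big(\fosm(\param)-\fobsavgind\big)}{1+D_{\cind,\fosm}(\param)}.
\]
The decisive point is that the vector factor here is exactly $\Phi(\fosm(\param)-\fobsavgind)$, where $\Phi(\vvect)=\fcorrmat^{-1}\vvect/(1+\vvect\tp\fcorrmat^{-1}\vvect)$ is the vector field whose Euclidean norm is the function $\varphi$ of Lemma~\ref{lemma:matrixineq} with $\Vmat=\fcorrmat^{-1}$. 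Thus $\nablavect Q_{\cind,\fosm}$ is a product of the matrix $(\fosm'(\param))\tp$ and the vector $\Phi(\fosm(\param)-\fobsavgind)$, and both factors are objects I already control.

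Boundedness follows immediately: $\|\nablavect Q_{\cind,\fosm}(\param)\|\leq 2\fpprobsumind\,\|\fosm'(\param)\|\,\|\Phi(\fosm(\param)-\fobsavgind)\|\leq 2\fpprobsumind\,L_{\fosm}\,C_{\varphi}(\fcorrmat^{-1})$, because Lipschitz continuity of $\fosm$ forces $\|\fosm'(\param)\|\leq L_{\fosm}$, while the lemma gives $\|\Phi\|=\varphi\leq C_{\varphi}$. Adding the $\sosm$-term bounds $\|\nablavect Q\|$ uniformly in $\param$. For the Lipschitz estimate I would write $\nablavect Q_{\cind,\fosm}=-2\fpprobsumind\,A(\param)B(\param)$ with $A(\param)=(\fosm'(\param))\tp$ and $B(\param)=\Phi(\fosm(\param)-\fobsavgind)$, and use the elementary inequality $\|A(\param)B(\param)-A(\tilde\param)B(\tilde\param)\|\leq\|A(\param)\|\,\|B(\param)-B(\tilde\param)\|+\|B(\tilde\param)\|\,\|A(\param)-A(\tilde\param)\|$. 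Here $A$ is bounded by $L_{\fosm}$ and Lipschitz with constant $L'_{\fosm}$ (precisely the hypothesis that $\fosm'$ is Lipschitz), and $B$ is bounded by $C_{\varphi}$ and, being a composition of the Lipschitz map $\Phi$ with the Lipschitz map $\fosm$, is Lipschitz with constant $L_{\varphi}(\fcorrmat^{-1})\,L_{\fosm}$. This produces a Lipschitz constant $2\fpprobsumind\big(L_{\fosm}^2\,L_{\varphi}+C_{\varphi}\,L'_{\fosm}\big)$ for the $\fosm$-term; adding the matching $\sosm$-term yields the desired global $L$.

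The one place that needs care, and what I expect to be the main subtlety, is that the product rule requires the \emph{vector field} $\Phi$ itself to be Lipschitz, whereas Lemma~\ref{lemma:matrixineq} is literally stated only for its norm $\varphi$. This is not a genuine gap: the estimate carried out in the proof of the lemma bounds $1/g+2\|\Vmat^{-1}\|\,\|\wvect\|^2/g^2$ with $g=1+\wvect\tp\Vmat^{-1}\wvect$, and the same quantity is an operator-norm bound for the full Jacobian of the field $\wvect\mapsto\wvect/g$; passing back through the linear change $\wvect=\Vmat\vvect$ picks up the factor $\|\Vmat\|$, so $\Phi$ inherits exactly the Lipschitz constant $L_{\varphi}(\fcorrmat^{-1})$. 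Since the denominator $g\geq 1$ never vanishes, $\Phi$ is smooth with a uniformly bounded Jacobian and is therefore globally Lipschitz, which is all the above assembly needs.
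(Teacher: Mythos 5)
Your proposal is correct and follows essentially the same route as the paper's proof: the same explicit gradient formula $\nablavect Q_{\fosm}(\param)=2\fpprobsum\,{\fosm'}\tp(\param)\fcorrmat^{-1}(\fobsavg-\fosm(\param))/(1+D_{\fosm}(\param))$, the same use of Lemma~\ref{lemma:matrixineq} with $\Vmat=\fcorrmat^{-1}$ for the bound, the same product-rule splitting of $\nablavect Q_{\fosm}(\param)-\nablavect Q_{\fosm}(\tilde\param)$ into a Jacobian-difference term and a $\Phi$-difference term, and the same final constant $L=2\fpprobsum(L'_{\fosm}C_{\varphi}+L_{\fosm}^2L_{\varphi})+2\spprobsum(L'_{\sosm}C_{\varphi}+L_{\sosm}^2L_{\varphi})$. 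Your closing remark is also well taken: the paper invokes Lemma~\ref{lemma:matrixineq} where what is actually needed is Lipschitz continuity of the vector field $\vvect\mapsto\Vmat\vvect/(1+\vvect\tp\Vmat\vvect)$ rather than of its norm, and your Jacobian computation correctly shows this holds with the same constant $L_{\varphi}(\Vmat)$, so you have tightened rather than altered the argument.
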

\begin{proof}
From (\ref{eq:mstepmaxequiv}) the gradient $\nablavect Q$  can be written as:
\begin{eqnarray}
\nablavect Q(\param)&=&\nablavect Q_{\fosm}(\param) + \nablavect
Q_{\sosm}(\param)= \nonumber \\
 \label{eq:optimfuncgrad}
&=& \frac{2\fpprobsum
{\fosm'}\tp(\param)\fcorrmat^{-1}(\fobsavg-\fosm(\param)) }{1 +
D_{\fosm}(\param)} +\frac{2\spprobsum
{\sosm'}\tp(\param)\scorrmat^{-1}(\sobsavg-\sosm(\param)) }{1 +
D_{\sosm}(\param)}.
\end{eqnarray}
It follows from Lemma~\ref{lemma:matrixineq} that
\scalebox{0.9}{$\left\|\nablavect Q_{\fosm}(\param)\right\| \leq
2L_{\fosm}\fpprobsum C_{\varphi}(\fcorrmat^{-1})\;$} and
\scalebox{0.9}{$\;\left\|\nablavect Q_{\sosm}(\param)\right\| \leq
2L_{\sosm}\spprobsum C_{\varphi}(\scorrmat^{-1})$}. The norm of the
gradient is then bounded by:
\begin{equation}
\left\|\nablavect Q(\param)\right\| \leq 2L_{\fosm}\fpprobsum
C_{\varphi}(\fcorrmat^{-1}) + 2L_{\sosm}\spprobsum
C_{\varphi}(\scorrmat^{-1}). \label{eq:optimfuncgradbound}
\end{equation}
Considering the norm $\left\|\nablavect Q_{\fosm}(\param) -
\nablavect Q_{\fosm}(\tilde{\param})\right\|$, we introduce
$\vvect_1=\fobsavg - \fosm(\param)$ and $\vvect_2=\fobsavg -
\fosm(\tilde{\param})$. Then we have:
\begin{align}
\left\|\nablavect Q_{\fosm}(\param) - \nablavect
Q_{\fosm}(\tilde{\param})\right\|
&\leq 2 \fpprobsum\left(\left\|\frac{(\fosm'(\param)-\fosm'(\tilde{\param}))\tp\fcorrmat^{-1}\vvect_1}
{1+\|\vvect_1\|^2_{\fcorrmat}}\right\|\right.+\nonumber\\
&+\left.\left\|\frac{{\fosm'}\tp(\tilde{\param})\fcorrmat^{-1}\vvect_2}{1+\|\vvect_2\|^2_{\fcorrmat}}
-\frac{{\fosm'}\tp(\tilde{\param})\fcorrmat^{-1}\vvect_1}{1+\|\vvect_1\|^2_{\fcorrmat}}
\right\| \right).
\end{align}
Using Lemma \ref{lemma:matrixineq} with $\fcorrmat^{-1}$ we have:
$$\left\|\nablavect Q_{\fosm}(\param) - \nablavect Q_{\fosm}(\tilde{\param})\right\| \leq
2\fpprobsum\big(L'_{\fosm} C_{\varphi}(\fcorrmat^{-1}) +
L^2_{\fosm} L_{\varphi}(\fcorrmat^{-1})\big)
\|\param-\tilde{\param}\|.$$ The same derivations can be performed
for $\nablavect Q_{\sosm}(\param)$, so that finally we get:
\begin{equation}
\left\|\nablavect Q_{\sosm}(\param) - \nablavect
Q_{\sosm}(\tilde{\param})\right\|\leq L \|\param-\tilde{\param}\|,
\end{equation}
where the Lipschitz constant is given by:
\begin{equation}
L=2\fpprobsum \left(L'_{\fosm} C_{\varphi}(\fcorrmat^{-1}) + L^2_{\fosm} L_{\varphi}(\fcorrmat^{-1})\right)+
2\spprobsum\left(L'_{\sosm} C_{\varphi}(\scorrmat^{-1}) + L^2_{\sosm} L_{\varphi}(\scorrmat^{-1})\right).
\label{eq:optimfunclipconst}
\end{equation}
$\blacksquare$
\end{proof}

To actually construct the non-decreasing sequence in
(\ref{eq:mstepmeans}), we make use of the following fundamental
result on variable metric gradient ascent algorithms.
\begin{theorem}[\citep{polyak87introduction}]\label{theorem:nondec} Let the function $Q:\mathbb{R}^\pdim\to\mathbb{R}$ be differentiable on $\mathbb{R}^\pdim$ and its gradient
$\nablavect Q$ be Lipschitz continuous with constant $L$. Let the
matrix $\Hmat$ be positive definite, such that
$\|\Hmat\|\leq\frac{2}{L}$. Then the sequence
$Q(\tilde{\param}^{(\nu)})$, defined by $\tilde{\param}^{(\nu+1)}
= \tilde{\param}^{(\nu)} + \Hmat \nablavect
Q(\tilde{\param}^{(\nu)})$ is non-decreasing.
\end{theorem}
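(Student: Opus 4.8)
The plan is to derive the result from two ingredients: the quadratic ascent estimate guaranteed by Lipschitz continuity of $\nablavect Q$, and a spectral bound on $\Hmat$ that exploits the hypothesis $\|\Hmat\|\leq 2/L$. Throughout I take $\Hmat$ to be symmetric positive definite, consistent with the variable-metric setting and with Lemma~\ref{lemma:matrixineq}.

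First I would establish the standard ascent inequality
$$Q(\param') \geq Q(\param) + \nablavect Q(\param)\tp(\param'-\param) - \frac{L}{2}\|\param'-\param\|^2$$
valid for all $\param,\param'\in\mathbb{R}^\pdim$. This comes from writing the increment as the line integral $\int_0^1 \nablavect Q(\param + t(\param'-\param))\tp(\param'-\param)\,dt$, subtracting $\nablavect Q(\param)\tp(\param'-\param)$, and controlling the remainder with the Lipschitz estimate $\|\nablavect Q(\param+t(\param'-\param)) - \nablavect Q(\param)\|\leq Lt\|\param'-\param\|$; the elementary integration $\int_0^1 t\,dt=\tfrac12$ produces the constant $L/2$.

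Next I would insert the update rule. Setting $\gvect=\nablavect Q(\tilde{\param}^{(\nu)})$ and taking $\param=\tilde{\param}^{(\nu)}$, $\param'=\tilde{\param}^{(\nu+1)}=\tilde{\param}^{(\nu)}+\Hmat\gvect$, so that $\param'-\param=\Hmat\gvect$, the ascent inequality becomes
$$Q(\tilde{\param}^{(\nu+1)}) - Q(\tilde{\param}^{(\nu)}) \geq \gvect\tp\Hmat\gvect - \frac{L}{2}\gvect\tp\Hmat^2\gvect = \gvect\tp\Big(\Hmat-\tfrac{L}{2}\Hmat^2\Big)\gvect,$$
using $\|\Hmat\gvect\|^2=\gvect\tp\Hmat^2\gvect$ for symmetric $\Hmat$. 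It then remains to check that $\Hmat-\tfrac{L}{2}\Hmat^2$ is positive semidefinite: its eigenvalues are $\lambda\,(1-\tfrac{L}{2}\lambda)$ as $\lambda$ runs over the eigenvalues of $\Hmat$, and since for a symmetric positive definite matrix the operator norm equals the largest eigenvalue, the hypothesis $\|\Hmat\|\leq 2/L$ forces every $\lambda\in(0,2/L]$ and hence $1-\tfrac{L}{2}\lambda\geq 0$. Thus the increment is non-negative and the sequence $Q(\tilde{\param}^{(\nu)})$ is non-decreasing.

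I expect the only delicate point to be the first step: one must justify the integral representation (legitimate because $Q$ is differentiable on all of $\mathbb{R}^\pdim$) and track the constant carefully, since it is exactly the factor $L/2$ that pairs with the threshold $2/L$ to make the quadratic form vanish precisely at the spectral boundary $\lambda=2/L$. Everything after that is a routine diagonalization of the symmetric matrix $\Hmat$.
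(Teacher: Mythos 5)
Your proof is correct. Note that the paper does not actually prove this statement---it is imported verbatim from \citep{polyak87introduction}---so there is no in-paper argument to compare against; your derivation (the ascent inequality $Q(\param')\geq Q(\param)+\nablavect Q(\param)\tp(\param'-\param)-\tfrac{L}{2}\|\param'-\param\|^2$ from the Lipschitz gradient, followed by diagonalizing $\Hmat-\tfrac{L}{2}\Hmat^2$ and using $\|\Hmat\|\leq 2/L$ to make its eigenvalues $\lambda(1-\tfrac{L}{2}\lambda)$ nonnegative) is the standard one found in Polyak. Your explicit flagging of the symmetry assumption on $\Hmat$ is appropriate, since without it ``positive definite'' and the identity $\|\Hmat\gvect\|^2=\gvect\tp\Hmat^2\gvect$ would both need revisiting, and symmetry is indeed what the paper's variable-metric choices (scaled identity, inverse Hessian) supply.
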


This result shows that for any functions $\fosm$ and $\sosm$ that
verify the conditions of Theorem~\ref{lemma:lipschitz}, using
(\ref{eq:mstepmeans}) with $\Hmat=\frac{2}{L}{\bf I}$, we are able
to construct a non-decreasing sequence and an appropriate \textit{Local
Search} procedure. Notice however, that its guaranteed theoretical convergence speed is
linear. It can be improved in several ways.

First,  the
optimization {\em direction} can be adjusted. For certain problems,
the matrix $\Hmat$ can be chosen as in variable metric algorithms,
such as Newton-Raphson method, quasi-Newton methods or
Levenberg-Marquardt method, provided that it satisfies the
conditions of Theorem~\ref{theorem:nondec}. Second, the
optimization {\em step size} can be increased based on local
properties of the target function. For example, at iteration
$\nu$, if when considering the functions $\fosm$ and $\sosm$ on
some restricted domain $\pars^{(\nu)}$ there exist smaller local
Lipschitz constants $L^{(\nu)}_{\fosm}$, $L^{(\nu)}_{\sosm}$,
$L^{\prime (\nu)}_{\fosm}$ and $L^{\prime (\nu)}_{\sosm}$, $\Hmat$
can be set to $\Hmat=\frac{2}{L^{(\nu)}}{\bf I}$ with $L^{(\nu)}$
smaller than $L$. It follows that
$\|\tilde{\param}^{(\nu+1)}-\tilde{\param}^{(\nu)}\| \leq
\frac{2}{L^{(\nu)}}\|\nablavect Q(\tilde{\param}^{(\nu)})\|$,
which means that one can take the local constants,
$L^{(\nu)}_{\fosm}$, $L^{(\nu)}_{\sosm}$, $L^{\prime
(\nu)}_{\fosm}$ and $L^{\prime (\nu)}_{\sosm}$ if they are valid
in the ball $\mathbb{B}_{\rho^{(\nu)}}(\tilde{\param}^{(\nu)})$
with
\begin{equation}
\rho^{(\nu)}=\frac{2}{L^{(\nu)}} \left(
2L^{(\nu)}_{\fosm}\fpprobsum C_{\varphi}(\fcorrmat^{-1}) +
2L^{(\nu)}_{\sosm}\spprobsum C_{\varphi}(\scorrmat^{-1}) \right).
\label{eq:radius}
\end{equation}

\section{Global Search and the {\it Choose} Procedure}
\label{sec:alganal_gs}

Theorem~\ref{lemma:lipschitz} allows us to use the improved global
random search techniques for Lipschitz continuous
functions~\citep{zhigljavsky91theory}. These algorithms are known
to converge, in the sense that generated point sequences fall
infinitely often into an arbitrarily small neighbourhood of the
optimal points set. For more details and convergence conditions
see Theorem~3.2.1 and the discussion that follows
in~\citep{zhigljavsky91theory}. A proper choice of the initial
value $\tilde{\param}^{(0)}$ not only guarantees to find the
global maximum, but can also be used to increase the convergence
speed. A basic strategy is to draw samples in $\pars$, according
to some sequence of distributions over $\pars$, that verifies the
convergence conditions of global random search methods. However,
the speed of convergence of such an algorithm is quite low.

Global random search methods can also be significantly improved by
taking into account some specificities of the target function.
Indeed, in our case, function~(\ref{eq:mstepmaxequiv}) is made of
two parts for which the optimal points are known and are
respectively  $\fobsavg$ and $\sobsavg$. If there exists
$\tilde{\param}^{(0)}$ such that
$\tilde{\param}^{(0)}\in\fosm^{-1}(\fobsavg)\cap\sosm^{-1}(\sobsavg)$,
then it is the global maximum and the M step solution is found.
Otherwise, one can sample $\pars$ in the vicinity of the set
$\fosm^{-1}(\fobsavg)\cup\sosm^{-1}(\sobsavg)$ to focus on a
subspace that is likely to contain the global maximum. This set
is, generally speaking, a union of two manifolds. For sampling
methods on manifolds we refer to~\citep{zhigljavsky91theory}. An
illustration of this technique is given in
Section~\ref{sec:results}.

Another possibility is to use  a heuristic that
function~(\ref{eq:mstepmaxequiv}) does not change much after one
iteration of the EM algorithm. Then, the initial point
$\tilde{\param}^{(0)}$ for the current iteration can be set to the
optimal value computed at the previous iteration. However, in
general, this simple strategy does not yield the global maximum,
as can be seen from the results in Section~\ref{sec:results1}.

\section{Algorithm Initialization and the {\it Initialize} Procedure}
\label{sec:init}

In this section we focus on the problem of selecting the initial values
$\params^{(0)}$ for the model parameters. As it is often the case with
iterative optimization algorithms, the closer $\params^{(0)}$ is to the
optimal parameter values, the less time the algorithm would require to
converge. Within the framework of conjugate mixture models we formulate
two initialization strategies, namely the {\it Observation Space
  Candidates} (OSC) strategy and the
{\it Parameter Space Candidates} (PSC) strategy, that attempt to find a good
initialization.

The {\it Observation Space Candidates} strategy consists in searching for
cluster centers in single modality spaces $\fos$ and $\sos$ to further
map them into the parameter space $\pars$, and select the best candidates.
More specifically, we randomly select an observation $\fobsind$ (or
$\sobsind$) and run the mean shift algorithm~\citep{comaniciu02meanshift}
in the corresponding space to find local modes of the distribution,
which are called {\it candidates}.
The sets of candidate points $\{\hat{\fobs}_i\}_{i\in I}$ and $\{\hat{\sobs}_j\}_{j\in J}$
are further rarefied, that is if $\|\hat{\fobs}_{i_1} - \hat{\fobs}_{i_2}\|\leq \varepsilon_{\mathrm{f}}$
for some $i_1 \neq i_2$ and for some threshold $\varepsilon>0$, we eliminate one of these points.
These rarefied sets are then mapped to $\pars$.
If one of the observation space mappings, for example $\fosm$, is non-injective,
for each $\hat{\fobs}_i$ we need to select a point $\param_i\in\fosm^{-1}(\hat{\fobs}_i)$
that is the best in some sense. We consider observations density
in the other observation spaces around an image of $\param_i$
as the optimality measure of $\param_i$. This can be estimated through
calculation of the k-th nearest neighbour distance (k-NN) in the corresponding observation space.
The final step is to choose $\cmind$ points out of these candidates to initialize
the cluster centers $\{\param_1,\ldots,\param_\cmind\}$,
so that the inter-cluster distances are maximized. This can be done using,
for example, hierarchical clustering.
The variances $\fvar_1,\ldots,\fvar_\cmind$ and $\svar_1,\ldots,\svar_\cmind$
are then calculated by standard empirical variance formulas based on
observations, that are closest to the corresponding class center.
The priors $\fpr_1,\ldots,\fpr_{\outind}$ and $\spr_1,\ldots,\spr_{\outind}$ are set to be equal.

The {\it Parameter Space Candidates} strategy consists in mapping all the observations
to the parameter space $\pars$, and performing subsequent clustering in that space.
More specifically, for every observation $\fobsind$ and $\sobsind$ we find an optimal
point from the corresponding preimage $\fosm^{-1}(\fobsind)$ and $\sosm^{-1}(\sobsind)$.
The optimality condition is the same as in the previous strategy,
that is we compare the local observation densities using k-NN distances.
Then one proceeds with selecting local modes in space $\pars$ using the mean-shift algorithm,
and initializing $\cmind$ cluster centers $\{\param_1,\ldots,\param_\cmind\}$
from all the candidates thus calculated. The estimation of variances and priors
is exactly the same as in the previous strategy.

The second strategy proved to be better when performing simulations
(see Section~\ref{section:experimental-validation}).
This can be explained by possible errors
in finding the preimage of an observation space point in the parameter space.
Thus mapping a rarefied set of candidates to the parameter space
is less likely to make a good guess in that space
than mapping all the observations and finding the candidates directly in the parameter space.

\section{Estimating the Number of Components and the {\it Select} Procedure}
\label{sec:bic}

To choose the $\cmind$ that best corresponds to the data, we perform model
selection based on a criterion that resembles
the BIC criterion~\citep{Schwarz78}.
We consider the score function of the form
\begin{equation}
\mathrm{BIC}_\cmind=-2\lhood(\fobss, \sobss, \hat{\params}_\cmind) + D_\cmind \log(\fmind + \smind),
\label{eq:bic-criterion}
\end{equation}
where $\hat{\params}_\cmind$ is the ML estimate obtained by the proposed EM algorithm,
$\lhood(\fobss, \sobss, \params)$ is given by~(\ref{eq:log-lhood}) and
$D_\cmind = \cmind \left(\pdim + 2 + \frac{1}{2}(\fdim^2+\sdim^2+\fdim+\sdim)\right)$
is the dimensionality of the model.

As in the case of (non-conjugate) Gaussian mixture models,
we cannot derive the criterion from the Laplace approximation
of the probability $P(\fobss, \sobss | \cmind = \cmind_0)$
because of the Hessian matrix of $\lhood(\fobss, \sobss, \params)$
that is not necessarily positive definite~\citep{aitkin85estimation,quinn87anote}.
Nevertheless, we can use the same arguments as those used in~\citep{keribin00consistent}
for Gaussian mixture models to show that the criterion is consistent, i.e.
if $\cmind_*$ is the number of components in the real model that generated $\fobss$ and $\sobss$, then
\begin{equation}
\cmind_\mathrm{BIC}\to \cmind_* \quad\mbox{a.s.}, \quad\mbox{when}\quad\fmind,\smind\to\infty,
\label{eq:bic-consistency}
\end{equation}
provided variances $\fvar_1,\ldots,\fvar_\cmind, \svar_1,\ldots,\svar_\cmind$ are non-degenerate
and the sequence $\frac{\fmind}{\fmind+\smind}$ has only one accumulation point (i.e. has a limit).

The BIC-like criterion~(\ref{eq:bic-criterion}) shows good performance on both
simulated and real data (see Section~\ref{section:experimental-validation}),
choosing correctly the number of objects in all the cases.

\section{Clustering Using Auditory and Visual Data}
\label{sec:results}

We illustrate the method in the case of audiovisual (AV) objects.
Objects could be characterized both by their locations in space
and by their auditory status, i.e., whether they are emitting
sounds or not. These object characteristics are not directly
observable and hence they need to be inferred from sensor data,
e.g., cameras and microphones. These sensors are based on
different physical principles, they operate with different
bandwidths and sampling rates, and they provide different types of
information. On one side, light waves convey useful visual
information only indirectly, on the premise that they reflect onto
the objects' surfaces. A natural scene is composed of many
objects/surfaces and hence the task of associating visual data
with objects is a difficult one. On the other side, acoustic waves
convey auditory information directly from the emitter to the
receiver but the observed data is perturbed by the presence of
reverberations, of other sound sources, and of background noise.
Moreover, very different methods are used to extract information
from these two sensor types. A wide variety of computer vision
principles exist for extracting 3D points from a single image or
from a pair of stereoscopic cameras~\citep{ForsythPonce2003} but
practical methods are strongly dependent on the lighting
conditions and on the properties of the objects' surfaces
(presence or absence of texture, color, shape, reflectance, etc.).
Similarly, various algorithms were developed to locate sound
sources using a microphone pair based on interaural time
differences (ITD) and on interaural level differences
(ILD)~\citep{brown06casabook,christensen07integrating}, but these
cues are difficult to interpret in natural settings due to the
presence of background noise and of other reverberant objects. A
notable improvement consists in the use a larger number of
microphones~\citep{dibiase01robust}. Nevertheless, the extraction
of 3D sound source positions from several microphone observations
results in inaccurate estimates. We show below that our method can
be used to combine visual and auditory observations to detect and
localize objects. A typical example where the conjugate mixture
models framework may help is the task of locating several speaking
persons.

Using the same notations
as above, we consider two sensor spaces. The multimodal data consists of
$M$
visual observations $\vobss$ and of
$K$ auditory observations $\aobss$. We consider
data that are recorded over a short time interval $[t_1,t_2]$, such that
one can reasonably assume that the AV objects have a stationary
spatial location. Nevertheless, it is not assumed here that the AV
objects, e.g., speakers, are static: lip movements, head and hand
gestures are tolerated.
We address the problem of estimating the spatial locations of all the
objects that are both seen and heard. Let $N$ be the number of objects
and in this case each object is described by a three dimensional
parameter vector
$\paramind=(\rcx_\cind,\rcy_\cind,\rcz_\cind)\tp$.

The AV data are gathered using a pair of stereoscopic cameras
and a pair of omnidirectional microphones, i.e., binocular vision and
binaural hearing. A visual observation vector
$\vobsind=(\vcu_\vind,\vcv_\vind,\vcd_\vind)\tp$ corresponds to a 2D
image location $(\vcu_\vind,\vcv_\vind)$ and to an associated binocular
disparity $\vcd_\vind$. Considering a
projective camera model~\citep{Faugeras93} it is straightforward to
define an invertible function $\vfunc:\mathbb{R}^3\to\mathbb{R}^3$ that maps
$\svect=(\rcx,\rcy,\rcz)\tp$ onto $\vv{f}=(\vcu,\vcv,\vcd)\tp$:
\begin{equation}
\vfunc(\spkr) =  \left(\frac{\rcx}{\rcz}, \frac{\rcy}{\rcz}, \frac{1}{\rcz}\right)\tp \quad
\mbox{and } \quad \vfunc^{-1}(\vobs) =
\left(\frac{ \vcu}{\vcd}, \frac{ \vcv}{\vcd}, \frac{1}{\vcd} \right)\tp. \label{eq:vfuncdef}
\end{equation}
This model corresponds to a rectified camera pair~\citep{HartleyZisserman00}
and it can be easily generalized to more
complex binocular geometries~\citep{HH08,hansard07patterns}. Without
loss of generality one can use a sensor-centered coordinate system to
represent the object locations.
%Note that cases when $\vcd$ is close to zero correspond to points
%on very distant objects (for fronto-parallel setup of cameras)
%from which no 3D structure can be recovered. So it is reasonable
%to set a threshold and disregard the observations that contain
%small values of $\vcd$.

Similarly one can use the auditory equivalent of disparity, namely the
{\em interaural time difference} (ITD) widely used by auditory scene
analysis methods~\citep{brown06casabook}. The function
$\afunc:\mathbb{R}^3\to\mathbb{R}$ maps $\svect=(\rcx,\rcy,\rcz)\tp$
onto a 1D audio observation:
\begin{equation}
g=\afunc(\spkr)=\frac{1}{c}\Bigl(\|\spkr-\spkr_{M_1}\|-\|\spkr-\spkr_{M_2}\|\Bigr).
\label{eq:afuncdef}
\end{equation}
Here $c$ is the sound speed and
%$c\approx 343\mathrm{ms}^{-1}$
%is the speed of sound and
$\spkr_{M_1}$ and $\spkr_{M_2}$ are the 3D locations of the two
microphones in the sensor-centered coordinate system. Each isosurface
defined by~(\ref{eq:afuncdef}) is
represented by one sheet of a two-sheet hyperboloid in 3D. Hence, each
audio observation $g$ constrains the location of the auditory source
to lie onto a 2D manifold.

In order to perform audiovisual clustering based on the conjugate EM
algorithm, Theorem~\ref{lemma:lipschitz} (Section~\ref{sec:alganal_ls}) must hold for both
(\ref{eq:vfuncdef}) and (\ref{eq:afuncdef}), namely the functions
$\vfunc$ and $\afunc$ and their derivatives are Lipschitz
continuous. We prove the following theorem:

\begin{theorem} \label{theorem:AVmapping}
The functions $\fosm$, $\fosm'$, $\sosm$ and $\sosm'$ are
Lipschitz continuous with constants
$L_\fosm=z_{\mathrm{min}}^{-1}\sqrt{3}$,
$L'_\fosm=z_{\mathrm{min}}^{-2}$,
$L_\sosm=\|\param_{\mathrm{M_1}}-\param_{\mathrm{M_2}}\|(cR)^{-1}$
and $L'_\sosm=3(cR)^{-1}$ in the domain $\pars= \{
    |z|>z_{\mathrm{min}}>1\}
\cap
    \Big\{ \min \{
        \|\param-\param_{\mathrm{M_1}}\|,
        \|\param-\param_{\mathrm{M_2}}\|
    \}>R>1\Big\}.$
\end{theorem}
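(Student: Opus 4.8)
The plan is to reduce each of the four claims to a single supremum bound on a derivative. Recall that a $C^{1}$ map $f$ on a convex domain satisfies $\|f(\param)-f(\tilde\param)\|\le\big(\sup_{\param}\|Df(\param)\|\big)\,\|\param-\tilde\param\|$ by the mean value inequality, so the Lipschitz constant of $\fosm=\vfunc$ is controlled by $\sup\|\Jmat_{\vfunc}\|$, that of its derivative $\fosm'$ by the supremum norm of the second-derivative tensor, and likewise for $\sosm=\afunc$ through its gradient and Hessian. Thus the whole theorem amounts to four elementary norm estimates over $\pars$, each exploiting exactly one of the two domain constraints $|z|>z_{\mathrm{min}}>1$ and $\min\{\|\param-\param_{\mathrm{M_1}}\|,\|\param-\param_{\mathrm{M_2}}\|\}>R>1$. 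The purpose is to produce precisely the four constants $L_\fosm,L'_\fosm,L_\sosm,L'_\sosm$ that the hypotheses of Theorem~\ref{lemma:lipschitz} require.

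For the visual map I would differentiate $\vfunc(\param)=(x/z,\,y/z,\,1/z)\tp$ directly; its nonzero Jacobian entries are $z^{-1}$ (twice), $-xz^{-2}$, $-yz^{-2}$ and $-z^{-2}$, so $\|\Jmat_{\vfunc}\|\le\|\Jmat_{\vfunc}\|_{F}=\sqrt{2z^{-2}+(x^{2}+y^{2})z^{-4}+z^{-4}}$. Using $|z|>z_{\mathrm{min}}>1$ the fourth term is dominated by the first, and discarding the transverse contribution $(x^{2}+y^{2})z^{-4}$ leaves $\sqrt{2z^{-2}+z^{-4}}\le\sqrt3\,z_{\mathrm{min}}^{-1}=L_\fosm$. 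A further differentiation gives second-order partials of the form $z^{-2}$, $z^{-3}$ and $xz^{-3}$, and the same reasoning yields $L'_\fosm=z_{\mathrm{min}}^{-2}$. The one delicate point is that the transverse entries $xz^{-2},yz^{-2}$ (and $xz^{-3}$) are \emph{not} controlled by $z_{\mathrm{min}}$ alone; I would handle them by carrying the camera's bounded field-of-view constraint, which keeps the image coordinates $x/z,y/z$ of order one, or by restricting to the bounded working region of $\pars$. I regard this as bookkeeping rather than a conceptual difficulty.

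The audio map is where the real content lies. Writing $\rvect_{i}=\param-\param_{\mathrm{M}_i}$ and $r_{i}=\|\rvect_{i}\|$, the gradient is $\nablavect\afunc=\tfrac1c(\rvect_{1}/r_{1}-\rvect_{2}/r_{2})$, a difference of two unit vectors, for which the naive bound is only $2/c$. The key observation is that $\rvect_{1}-\rvect_{2}=\param_{\mathrm{M_2}}-\param_{\mathrm{M_1}}$ is a \emph{fixed} vector of length $b:=\|\param_{\mathrm{M_1}}-\param_{\mathrm{M_2}}\|$, so the two directions become nearly parallel as the source recedes. Concretely I would split $\rvect_{1}/r_{1}-\rvect_{2}/r_{2}=[\rvect_{1}(r_{2}-r_{1})+r_{1}(\rvect_{1}-\rvect_{2})]/(r_{1}r_{2})$ and use the reverse triangle inequality $|r_{2}-r_{1}|\le b$ together with $r_{i}>R$, or equivalently argue by the law of sines that the angle $\theta$ subtended at $\param$ by the baseline satisfies $\sin\theta\le b/R$; either way $\|\nablavect\afunc\|$ is of order $b/(cR)$, giving $L_\sosm$. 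For $L'_\sosm$ the Hessian of each range $r_{i}$ is the scaled projector $r_{i}^{-1}(\Imat-r_{i}^{-2}\rvect_{i}\rvect_{i}\tp)$, whose norm is exactly $r_{i}^{-1}<R^{-1}$; summing the two contributions with the triangle inequality bounds $\|\nablavect^{2}\afunc\|$ by a multiple of $(cR)^{-1}$, namely $L'_\sosm=3(cR)^{-1}$.

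I expect the main obstacle to be the gradient estimate for $\afunc$: converting a difference of two unit vectors into a quantity proportional to the baseline length requires the algebraic split above (or the law-of-sines refinement to pin down the sharp constant), and it is precisely here that the lower bound $R>1$ on the distance to each microphone is indispensable, since both $\nablavect\afunc$ and $\nablavect^{2}\afunc$ are singular as $\param$ approaches a microphone. The visual estimates are routine once the transverse terms are controlled, and assembling the four constants into the hypotheses of Theorem~\ref{lemma:lipschitz} is then immediate.
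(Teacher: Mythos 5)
Your proposal is correct in outline and follows the same skeleton as the paper's proof: compute the first and second derivatives of $\fosm$ and $\sosm$ explicitly and bound their norms on $\pars$, with the constants for $\sosm$ and $\sosm'$ coming from the lower bound $R$ on the distance to the microphones. Three points of comparison are worth recording. (i)~For $L_\sosm$ the paper simply asserts $\|\evect_1-\evect_2\|\le\|\param_{\mathrm{M_1}}-\param_{\mathrm{M_2}}\|/R$ for the unit vectors $\evect_i=(\param-\param_{\mathrm{M}_i})/\|\param-\param_{\mathrm{M}_i}\|$; your algebraic split only gives twice that, and the law-of-sines variant does not close as stated because $\|\evect_1-\evect_2\|=2\sin(\theta/2)\ge\sin\theta$. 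The clean way to get the stated constant is the sharp inequality $\bigl\|\avect/\|\avect\|-\bvect/\|\bvect\|\bigr\|\le 2\|\avect-\bvect\|/(\|\avect\|+\|\bvect\|)$, which with both norms exceeding $R$ yields exactly $\|\param_{\mathrm{M_1}}-\param_{\mathrm{M_2}}\|/R$. (A factor of $2$ is immaterial for Theorem~\ref{lemma:lipschitz}, but the theorem claims the sharper value.) (ii)~Your concern about the transverse Jacobian entries $-xz^{-2},-yz^{-2}$ is well founded and points to a gap in the paper's own argument: the authors bound $\|\fosm'\|$ by the spectral radius $\max\{|z|^{-1},z^{-2}\}$, but $\fosm'$ is a non-normal triangular matrix whose operator norm is not controlled by its eigenvalues when $|x/z|$ or $|y/z|$ is large, and the stated domain does not bound these ratios; some field-of-view restriction of the kind you invoke is genuinely needed, and even then your Frobenius estimate with $|x/z|,|y/z|\le1$ gives $\sqrt{5}\,z_{\mathrm{min}}^{-1}$ rather than the claimed $\sqrt{3}\,z_{\mathrm{min}}^{-1}$. (iii)~For $L'_\sosm$ your direct triangle inequality on the two scaled projectors gives $2(cR)^{-1}$, slightly better than the paper's $3(cR)^{-1}$, which comes from the decomposition into a $|r_1^{-1}-r_2^{-1}|$ term plus a rank-one difference. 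Finally, both you and the paper pass from a supremum bound on a derivative to a Lipschitz bound via the mean-value inequality, which strictly requires a word about the non-convexity of $\pars$ (it excludes two balls); this affects neither argument materially.
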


\begin{proof}
The derivatives of $\fosm$ and $\sosm$ are given by:
\begin{eqnarray}
\fosm'(\param)&=&\frac{1}{\rcz}\left[
\begin{array}{lll}1 & 0 & -\rcx/\rcz \\ 0 & 1 & -\rcy/\rcz \\ 0 & 0 & -1/\rcz
\end{array}\right]
\\
\sosm'(\param)&=&\frac{1}{c}\left(\frac{\param-\param_{\mathrm{M_1}}}{\|\param-\param_{\mathrm{M_1}}\|} - \frac{\param-\param_{\mathrm{M_2}}}{\|\param-\param_{\mathrm{M_2}}\|}\right).
\end{eqnarray}

The eigenvalues of $\fosm'(\param)$ are $1/\rcz$ and $-1/\rcz^2$,
so $\|\fosm'(\param)\|\leq\max\{z^{-1},z^{-2}\}\leq
z_{\mathrm{min}}^{-1}$, from which it follows that $L_\fosm$ can
be taken as $L_\fosm=z_{\mathrm{min}}^{-1}\sqrt{3}$. Also
$\|\fosm'(\param) - \fosm'(\tilde{\param})\|\leq
\max\{|z^{-1}-\tilde{z}^{-1}|,|z^{-2}-\tilde{z}^{-2}|\}\leq
z_{\mathrm{min}}^{-2}\|\param-\tilde{\param}\|$, so that
$L'_\fosm$ can be set to $L'_\fosm=z_{\mathrm{min}}^{-2}$.

Introducing $\evect_1 =
\frac{\param-\param_{\mathrm{M1}}}{\|\param-\param_{\mathrm{M_1}}\|}$
and $\evect_2 =
\frac{\param-\param_{\mathrm{M2}}}{\|\param-\param_{\mathrm{M_2}}\|}$,
it comes $\|\evect_1\| = \|\evect_2\| = 1$ and
$\sosm'(\param)=\frac{1}{c}(\evect_1 - \evect_2)$. Provided that
$\|\param-\param_{\mathrm{M_1}}\|$ and
$\|\param-\param_{\mathrm{M_2}}\|$ are both greater than $R$, it
follows  $\|\sosm'(\param)\|=\frac{1}{c}\|\evect_1-\evect_2\|\leq
\|\param_{\mathrm{M_1}}-\param_{\mathrm{M_2}}\|(cR)^{-1}$ and so
$L_\sosm=\|\param_{\mathrm{M_1}}-\param_{\mathrm{M_2}}\|(cR)^{-1}$.
Then, the second derivative of $\sosm$ is given by
$$
\sosm''(\param)=\frac{1}{c\|\param-\param_{\mathrm{M_1}}\|}(\Imat-\evect_1\evect_1\tp)-\frac{1}{c\|\param-\param_{\mathrm{M_2}}\|}(\Imat-\evect_2\evect_2\tp).
$$
so that
$\|\sosm''(\param)\|\leq\left|\frac{1}{c\|\param-\param_{\mathrm{M1}}\|}-\frac{1}{c\|\param-\param_{\mathrm{M2}}\|}\right|
+ \sup\limits_{\|\vvect\|=1} \frac{2\evect_1\evect_1\tp
\vvect}{c\min \{\|\param-\param_{\mathrm{M1}}\|,
\|\param-\param_{\mathrm{M2}}\|\}} \leq 3(cR)^{-1}$, and
$L'_\sosm$ can be set to  $L'_\sosm=3(cR)^{-1}.\quad\blacksquare$
\end{proof}

This result shows that under some natural conditions (The AV objects
should not be too close to the sensors)
the conjugate EM algorithm described in
Section~\ref{sec:indep:em:gem} can be applied. The constant $L$
%\begin{align}
%L&=\frac{6\fpprobsum}{ z_{\mathrm{min}}^2}\left(\frac{\sqrt{3}}{8}\left(\frac{\sqrt{\|\fcorrmat^{-1}\|}}{2} +
%3\|\fcorrmat^{-1}\|\sqrt{\mu(\fcorrmat)}\right)+\|\fcorrmat^{-1}\|\right)+\nonumber\\
%&+ \frac{2\spprobsum}{ R}\left(\frac{3\sqrt{3}}{8}\left(\frac{3\sqrt{\|\scorrmat^{-1}\|}}{2} +
%\frac{\|\param_{\mathrm{M1}}-\param_{\mathrm{M2}}\|^2\|\scorrmat^{-1}\|\sqrt{\mu(\scorrmat)}}{R}\right)+
%\frac{\|\param_{\mathrm{M1}}-\param_{\mathrm{M2}}\|^2\|\scorrmat^{-1}\|}{R}\right)\nonumber
%\end{align}
given by Lemma~\ref{lemma:lipschitz} guarantees a certain
(worst-case) convergence speed. In practice, we can use the
techniques mentioned  in
Sections~\ref{sec:alganal_ls} and \ref{sec:alganal_gs} to accelerate
the algorithm. First, to speed up the local optimization step,
local Lipschitz constants can be computed based on the current
value of parameter $\tilde{\param}^{(\nu)}$.
Equation~(\ref{eq:radius}) gives the largest possible step size
$\rho^{(\nu)}$, so setting $z_{\mathrm{min}}^{(\nu)} = z^{(\nu)} -
\rho^{(\nu)}$ and $R^{(\nu)} =
\min\{\|\tilde{\param}^{(\nu)}-\param_{\mathrm{M_2}}\|,
\|\tilde{\param}^{(\nu)}-\param_{\mathrm{M_1}}\|\} - \rho^{(\nu)}$,
provides local Lipschitz constants that insure the update not to
quit $\pars^{(\nu)}= \{
    |z|>z^{(\nu)}_{\mathrm{min}}\}
\cap
    \Big\{ \min \{
        \|\param-\param_{\mathrm{M_1}}\|,
        \|\param-\param_{\mathrm{M_2}}\|
    \}>R^{(\nu)}\Big\}$.
Second, we propose four possibilities to set the initial object
parameter values $\tilde{\param}^{(0)}_\cind$:
(i)~it can be taken to be the previously estimated object position $\param^{(\iterind-1)}_\cind$,
(ii)~it can be set to $\fosm^{-1}(\fobsavg)$ (as soon
as $\fosm$ is injective in $\pars$), (iii)~it can be found through
sampling of the manifold $\sosm^{-1}(\sobsavg)$ by selecting the
sampled value which gives the largest $Q$ value, or (iv)~similarly
through sampling directly in $\pars$. Comparisons are reported in
the following sections.

\section{Experiments with Simulated Data}
\label{sec:results1}

Our algorithm is first illustrated on simulated data. For simplicity
we consider $(u, d)$ and $(x, z)$
coordinates so that $\fos\subseteq\mathbb{R}^2$ and
$\pars\subseteq\mathbb{R}^2$. Notice however that this preserves the projective nature
of the mapping $\fosm$, it does not qualitatively affect the
results and allows to better understand the algorithm performance. We
consider three objects defined in $\pars$ by $\paramind$,
$\cind=1,2,3$. We simulated three cases: well-separated objects (GoodSep), partially occluded objects
(PoorSep) and poor precision in visual observations for
well-separated objects (PoorPrec). The
ground-truth object
locations $(x, z)$ for the GoodSep and PoorPrec cases are the same,
namely $\vv{s}_1=(-300, 1000)$, $\vv{s}_2=(10, 800)$ and $\vv{s}_3=(500, 1500)$. In the
PoorSep case, the coordinates are respectively $\vv{s}_1=(-300, 1000)$,
$\vv{s}_2=(10, 800)$ and $\vv{s}_3=(100, 1500)$.
The data in both observation spaces $\fos$
and $\sos$ was simulated from a mixture model with three Gaussian
components and a uniform component that models the outliers. The means of the
Gaussian components are computed using $\fosm(\paramind)$ and
$\sosm(\paramind)$, $\cind=1,2,3$. An example of simulated data for the three
mentioned configurations is shown in
Figure~\ref{fig:simudata1}, i.e., $(u,d)$ locations of the
visual observations and ITD values of the auditory observations. 

\begin{figure*}[tb]
\begin{tabular}{ccc}
\hspace{-0.5in}
\includegraphics[width=0.4\textwidth, type=pdf, ext=.pdf, read=.pdf]{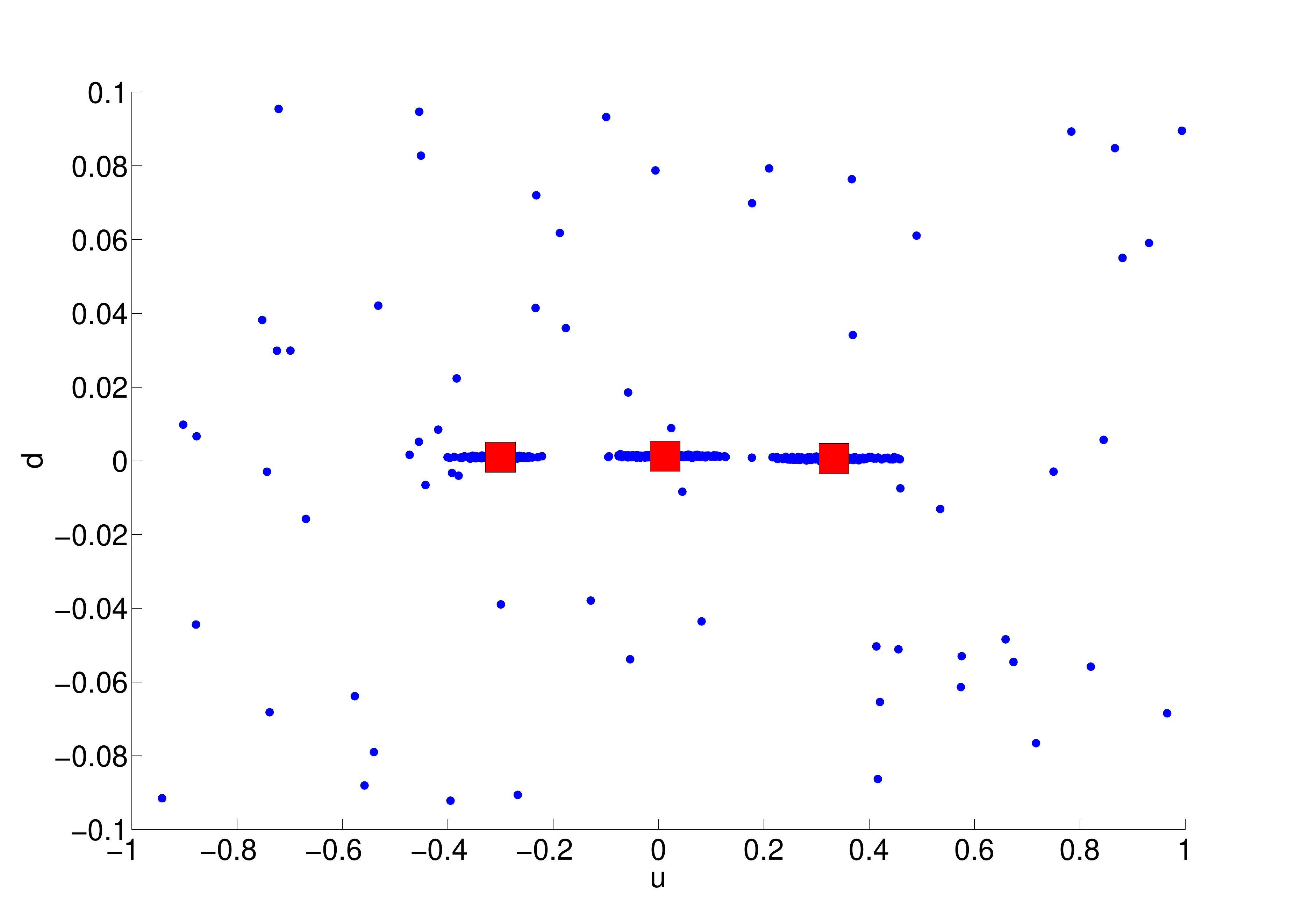} &
\hspace{-0.45in} 
\includegraphics[width=0.4\textwidth, type=pdf, ext=.pdf, read=.pdf]{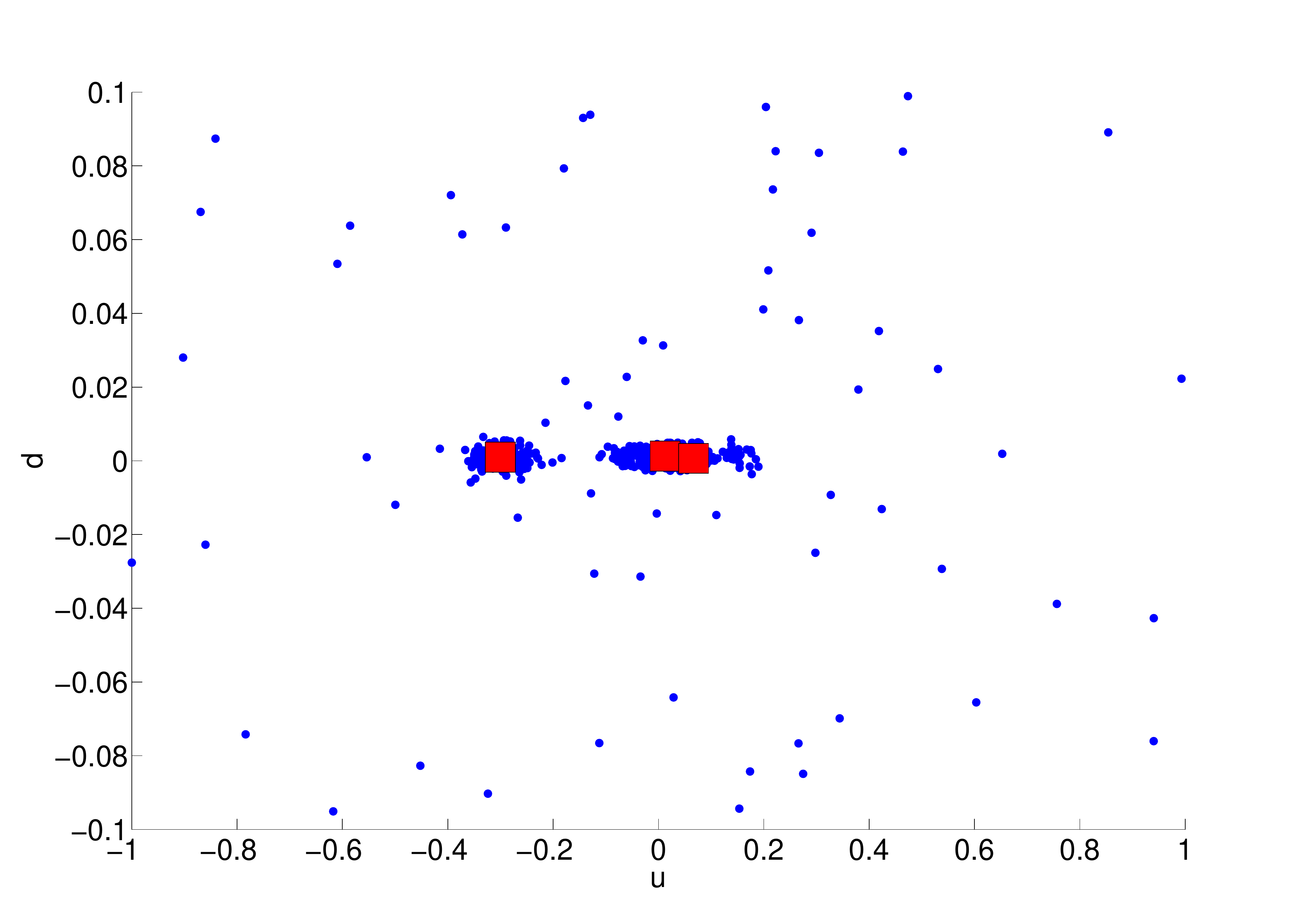}  &
\hspace{-0.45in}
\includegraphics[width=0.4\textwidth, type=pdf, ext=.pdf, read=.pdf]{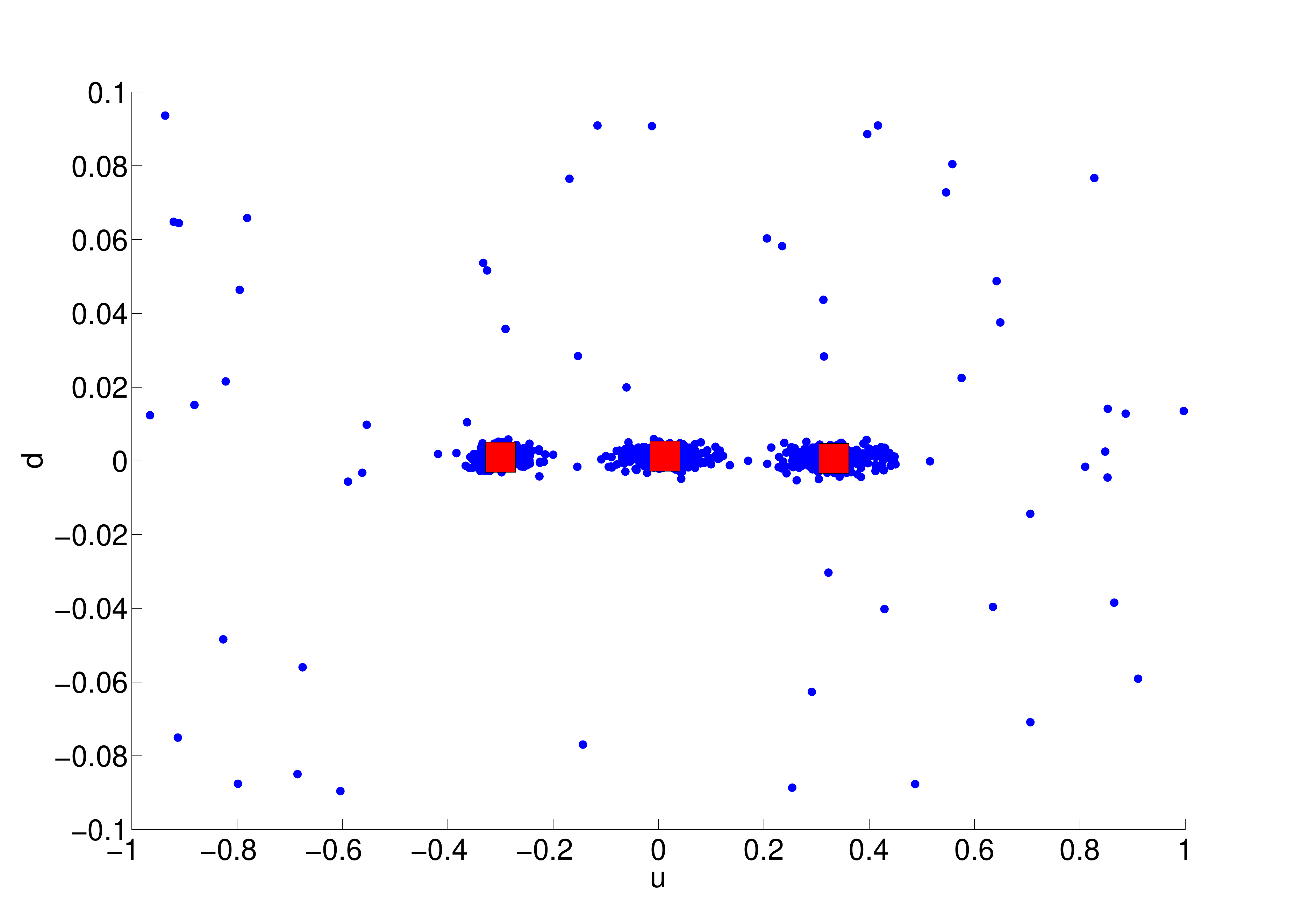}  \\
\hspace{-0.5in}
\includegraphics[width=0.4\textwidth, type=pdf, ext=.pdf, read=.pdf]{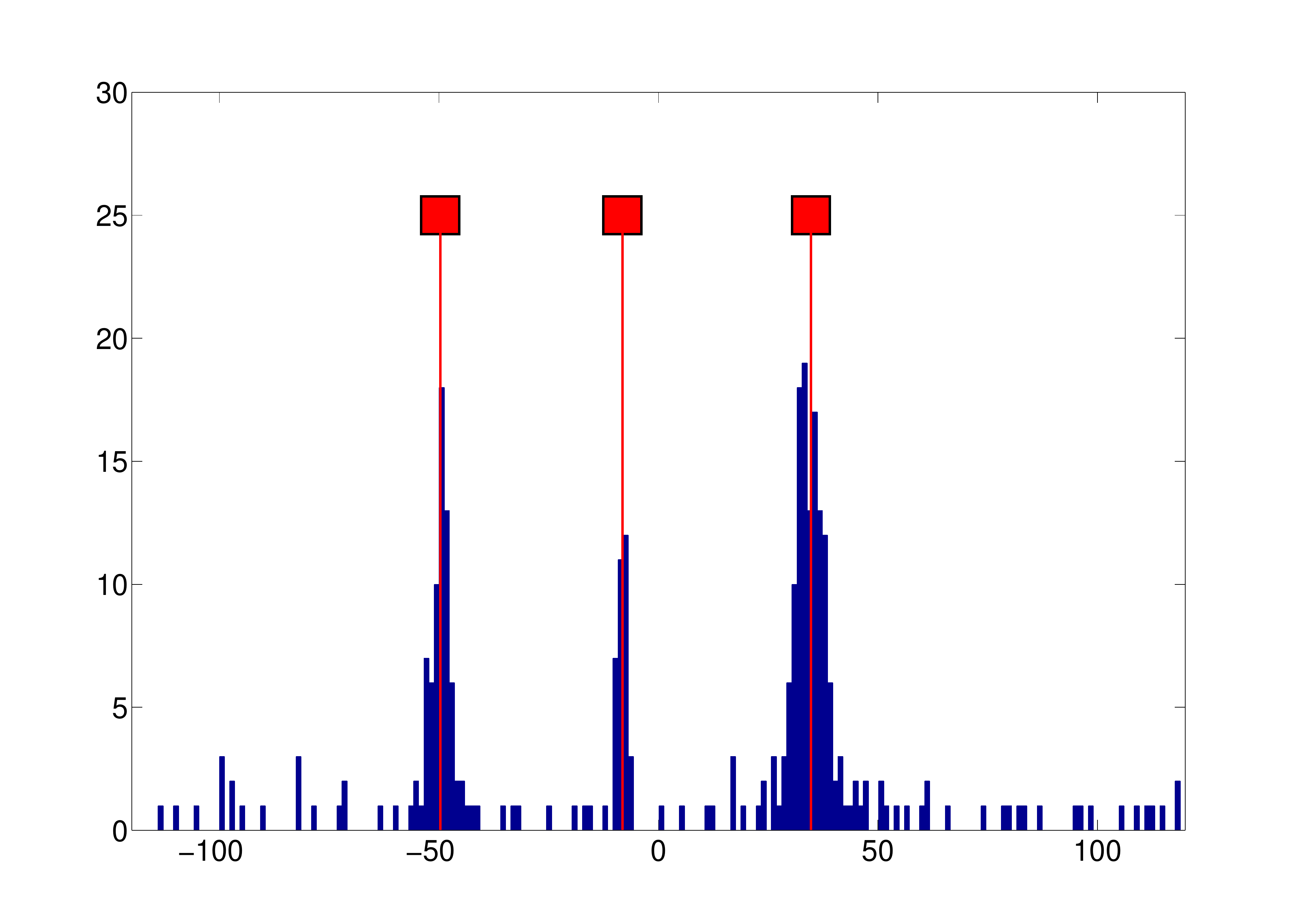} &
\hspace{-0.45in} 
\includegraphics[width=0.4\textwidth, type=pdf, ext=.pdf, read=.pdf]{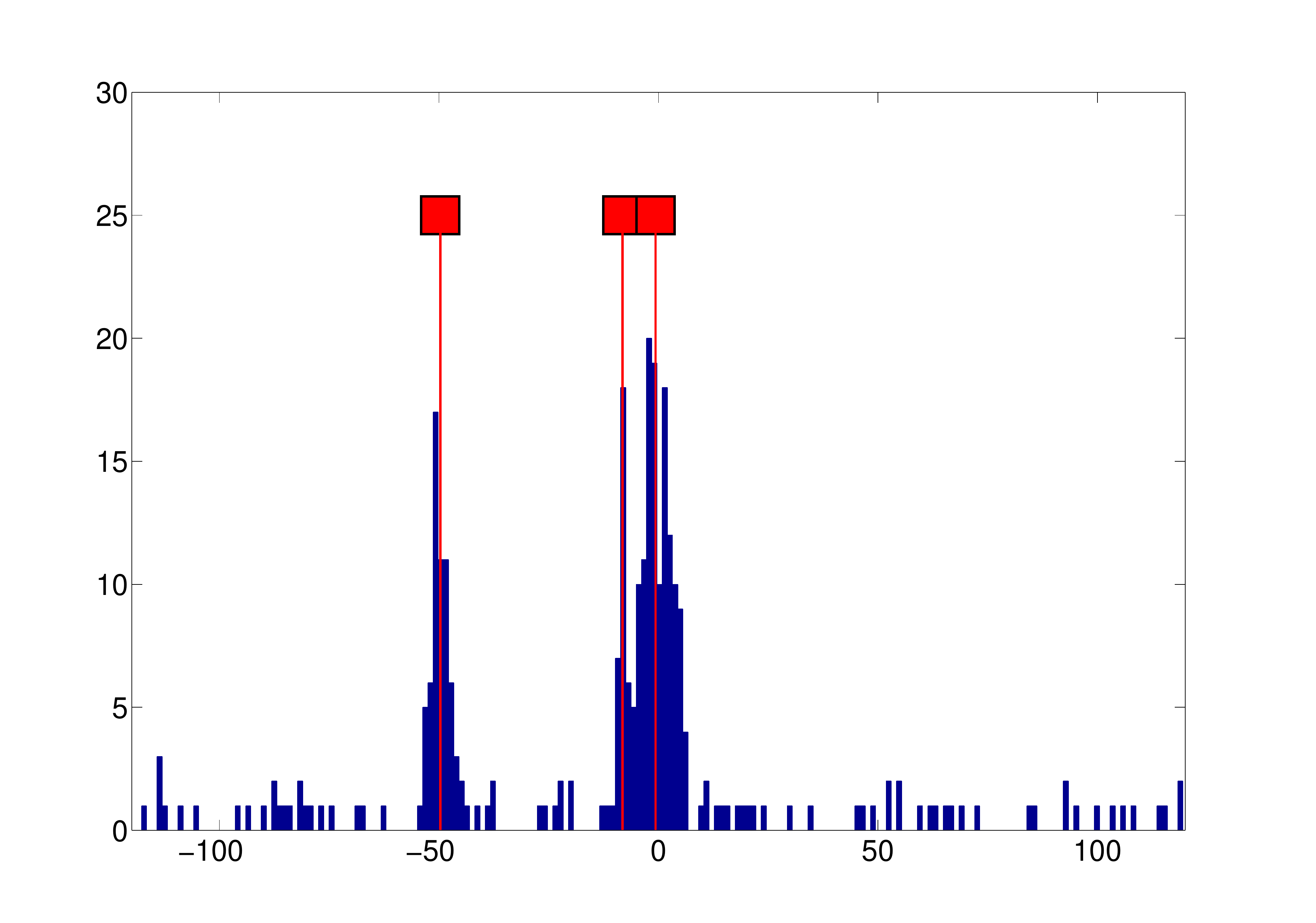}  &
\hspace{-0.45in}
\includegraphics[width=0.4\textwidth, type=pdf, ext=.pdf, read=.pdf]{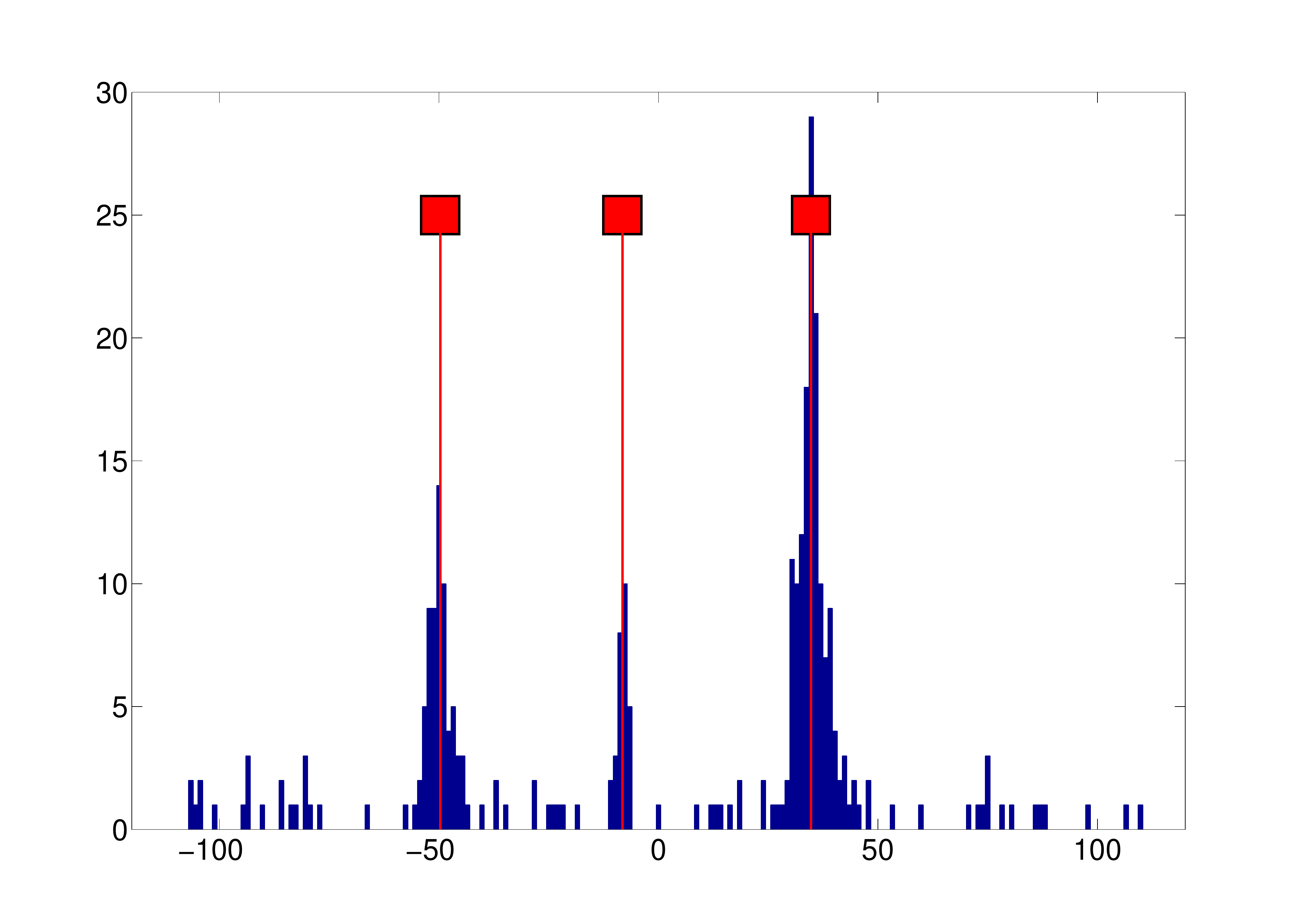}  \\
     (a) GoodSep & (b) PoorSep & (c) PoorPrec
\end{tabular}
\caption{Simulated data in visual (top) and audio (bottom) observation
spaces for three cases:  (a)~well-separated objects, (b)~partially
occluded objects, and (c)~poor precision of visual observations.
The small squares correspond to the ground-truth parameter values. 
%For clarity, in
%the audio case, they are located at the right x-axis positions but
%their y-axis coordinate is arbitrary. 
Each one of the two mixtures models (associated with each sensorial
modality) contains four
components: three objects and one outlier class. }
\label{fig:simudata1}
\end{figure*}

\paragraph{Initialization.}
We compared two strategies, {\it Observation Space Candidates} (OSC) and
{\it Parameter Space Candidates} (PSC) that are proposed in Section~\ref{sec:init}.
Their performance is summarized in Figure~\ref{fig:initcompare}.
It shows the mean and variance of the likelihood value $\lhood(\fobss, \sobss, \params)$
for initial parameters $\params^{(0)}_{\mathrm{OSC}}$ and $\params^{(0)}_{\mathrm{PSC}}$
chosen by OSC and PSC strategies respectively.
For the total number of clusters $N=1, \ldots, 5$ and different object configurations,
we calculate the statistics based on 10 initializations.
The analysis shows that the PSC strategy performs at least as well as the OSC strategy,
or even better in some cases.
Our explanation is that mappings from observation spaces to parameter space are subject
to absolute (and in our case bounded) noise.
Mapping all the observations and calculating a candidate point in the parameter space
has an averaging effect and reduces the absolute error,
compared to the strategy with candidate calculation being performed in an observation space
with subsequent mapping to the parameter space.
Therefore in what follows, all the results are obtained based on the PSC initialization strategy.
\begin{figure*}[tb]
\begin{tabular}{ccc}
\hspace{-0.5in}
\includegraphics[width=0.4\textwidth, type=pdf, ext=.pdf, read=.pdf]{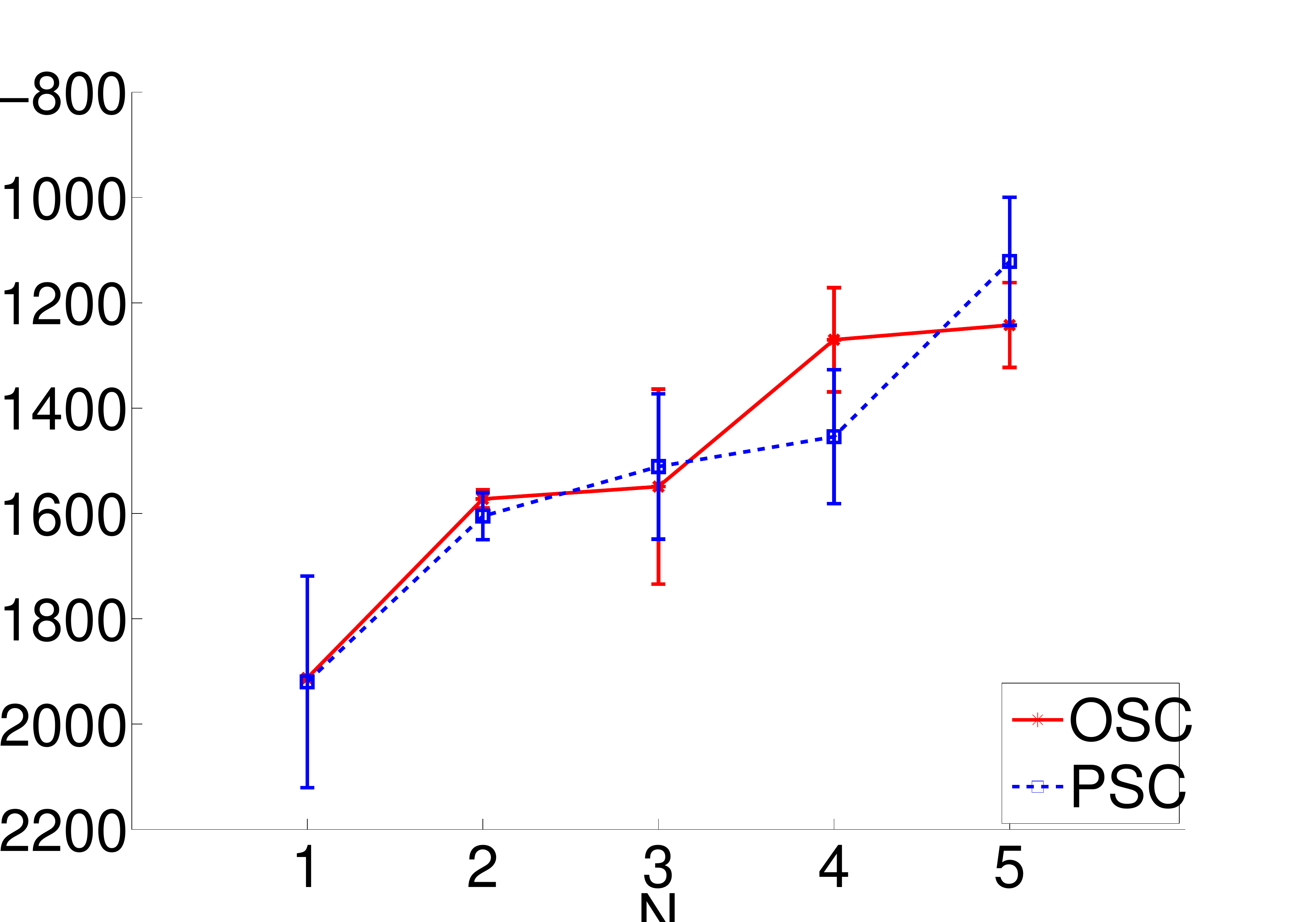} &
\hspace{-0.45in} 
\includegraphics[width=0.4\textwidth, type=pdf, ext=.pdf, read=.pdf]{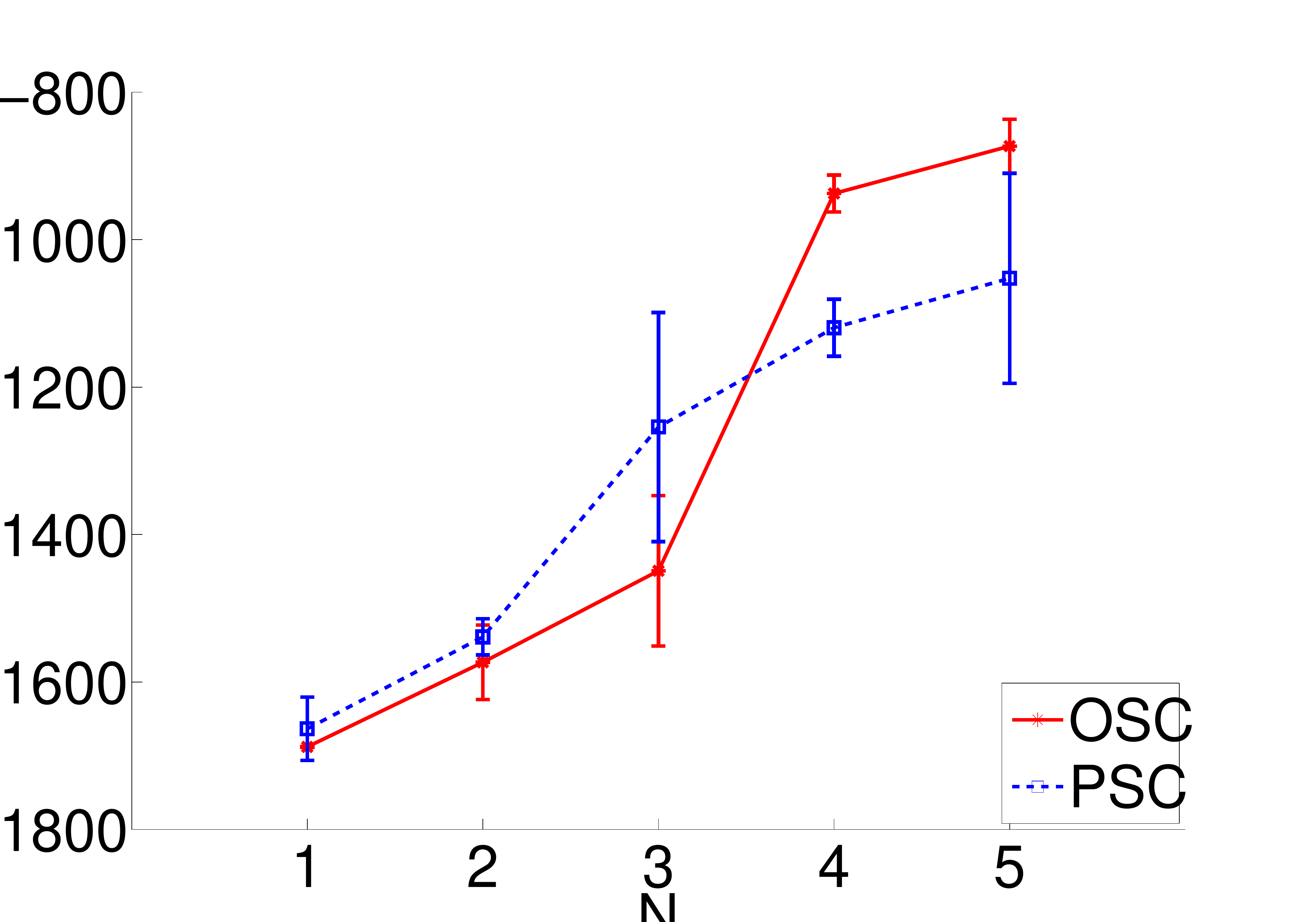}  &
\hspace{-0.45in}
\includegraphics[width=0.4\textwidth, type=pdf, ext=.pdf, read=.pdf]{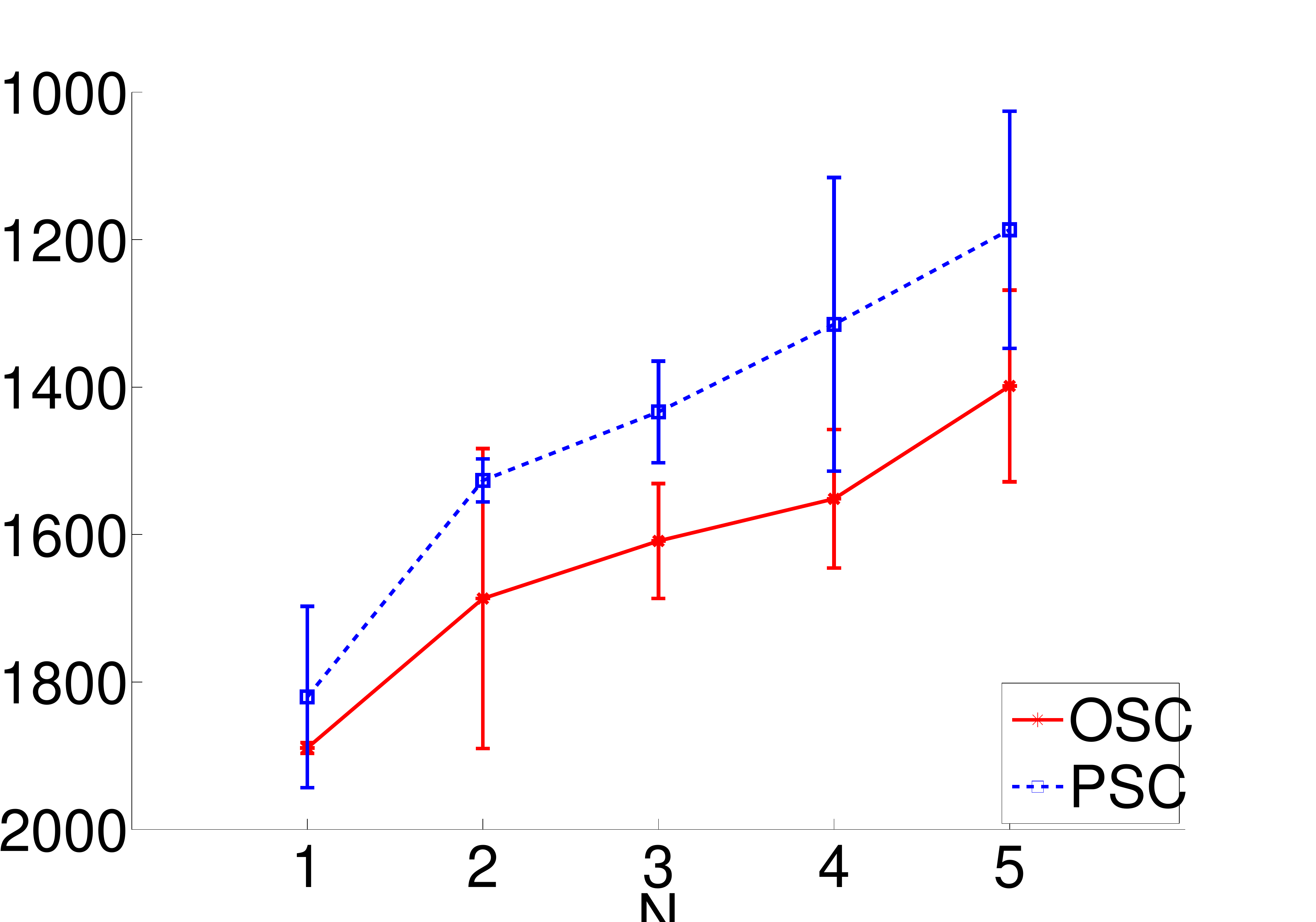}  \\
     (a) GoodSep & (b) PoorSep & (c) PoorPrec
\end{tabular}
\caption{Means and variances of log-likelihood values $\lhood(\fobss, \sobss, \params)$
for initial parameters $\params^{(0)}_{\mathrm{OSC}}$ and $\params^{(0)}_{\mathrm{PSC}}$
chosen by {\it Observation Space Candidates} (OSC, red) and
{\it Parameter Space Candidates} (PSC, blue) strategies respectively,
for different numbers of clusters $\cmind$ and different data configurations.}
\label{fig:initcompare}
\end{figure*}%

\paragraph{Optimization.} We compared several versions of the algorithm based on various
{\it Choose} and {\it Local Search} strategies.
For the initial values $\tilde{\paramind}^{(0)}$, we considered the
following possibilities: the optimal value computed at a
previous run of the algorithm (IP), the value predicted from
visual data (IV), the value predicted from audio data (IA) and the
value obtained by global random search (IG). More specifically:
\begin{itemize}
\item
When initializing from visual data (IV), the average value
$\fobsavg_\cind$, calculated in the current E-step of the
algorithm for every $\cind$, was mapped to the parameter space and
$\tilde{\paramind}^{(0)}$ set to
$\tilde{\paramind}^{(0)}=\fosm^{-1}(\fobsavg_\cind)$ using the
injectivity of $\fosm$. 
\item
When initializing from audio data (IA),
$\sosm^{-1}(\sobsavg_\cind)$ defines a manifold. The general
strategy here would be to find the optimal point that lies on this
surface. We achieved this through random search based on a uniform
sampling on the corresponding part of the hyperboloid
(see~\citep{zhigljavsky91theory} for details on sampling from an
arbitrary distribution on a manifold); in our experiments we used
50 samples to select the one providing the largest $Q$ (likelihood)
value. 
\item
The
most general initialization scheme (IG) was implemented using
global random search in the whole parameter space $\pars$; 200
samples were used in this case.
\end{itemize}

Local optimization was performed either using basic gradient
ascent (BA) or the locally accelerated gradient ascent (AA). The
latter used the local Lipschitz constants to augment the step
size, as described in Section~\ref{sec:alganal_ls}.

Each algorithm run consisted of 70 iterations of the EM algorithm
with 10 non-decreasing iterations during the M step.

To check the convergence speed of different versions of the algorithm
for the three object configurations we compared the likelihood evolution
graphs that are presented in Figure~\ref{fig:lhoods}.
Each graph contains several
curves that correspond to five different versions of the
algorithm. The acronyms  we use to refer to the different versions
(for example, IPAA) consist of two parts encoding the
initialization (IP) and the local optimization (AA) types. The
black dashed line on each graph shows the `ground truth'
likelihood level, that is the likelihood value for the parameters
used to generate the data. The meaning of the acronyms is recalled
in Table~\ref{acro}.

\begin{table*}[tb]
\caption{\label{acro} Acronyms used for five variants of the
conjugate EM algorithm. Variants correspond to different choices
for the {\it Choose} and {\it Local search} procedures.}
\begin{center}
\begin{tabular}{|c|c|c|}
\hline Acronym& $\tilde{\param}^{(0)}$ initialization  ({\it
Choose}) & Local optimization ({\it Search})\\
\hline
IPBA & previous iteration value & basic gradient ascent\\
IGAA & global random search & accelerated gradient ascent\\
IVAA & predicted value from visual data & accelerated gradient ascent\\
IPAA & previous iteration value & accelerated gradient ascent\\
IAAA & audio predicted manifold sampling &
accelerated gradient ascent\\
 \hline
\end{tabular}
\end{center}
\end{table*}

\begin{figure*}[tb]
\begin{center}
     \includegraphics[width=0.48\textwidth, type=pdf, ext=.pdf, read=.pdf]{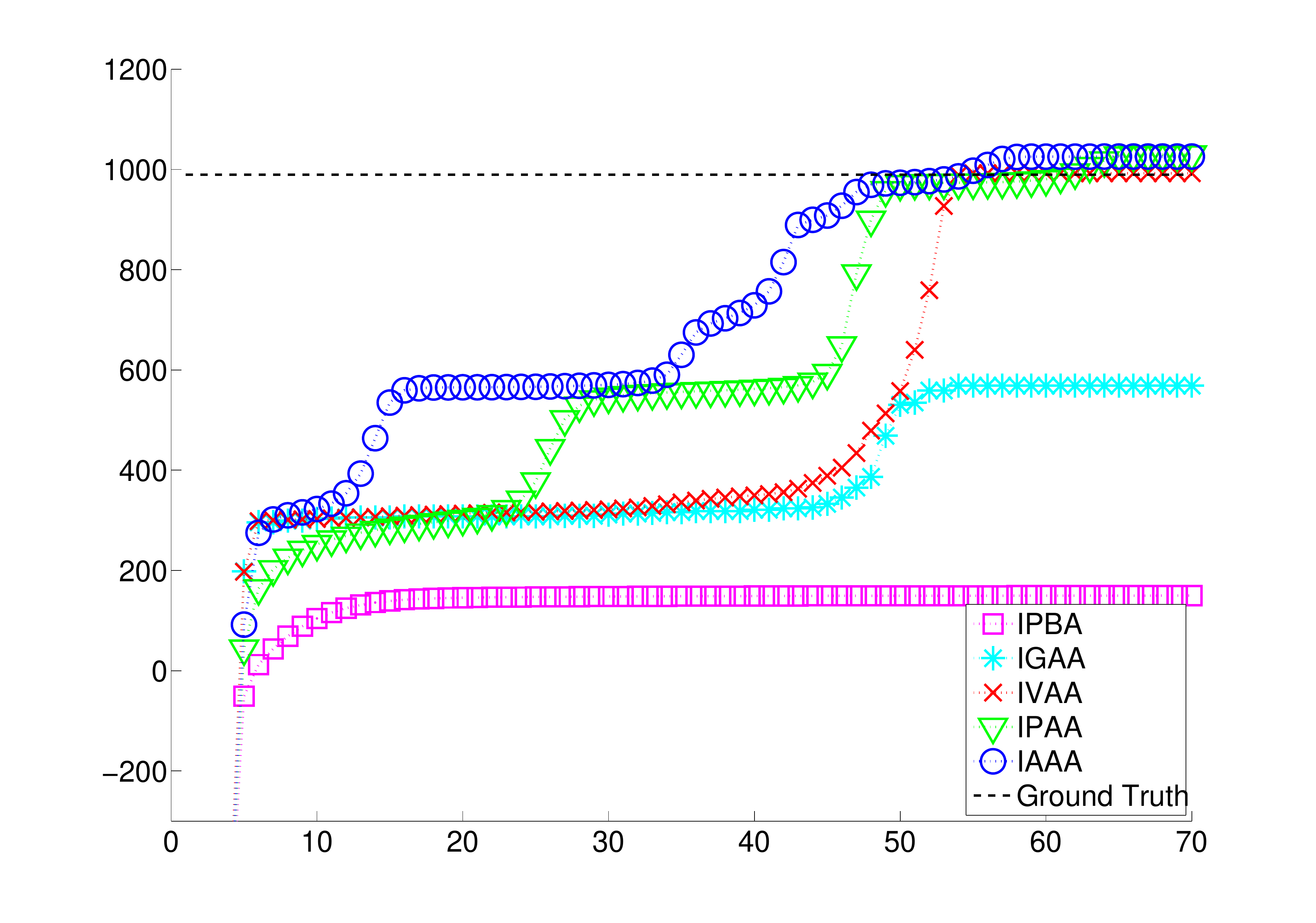}
     \includegraphics[width=0.48\textwidth, type=pdf, ext=.pdf, read=.pdf]{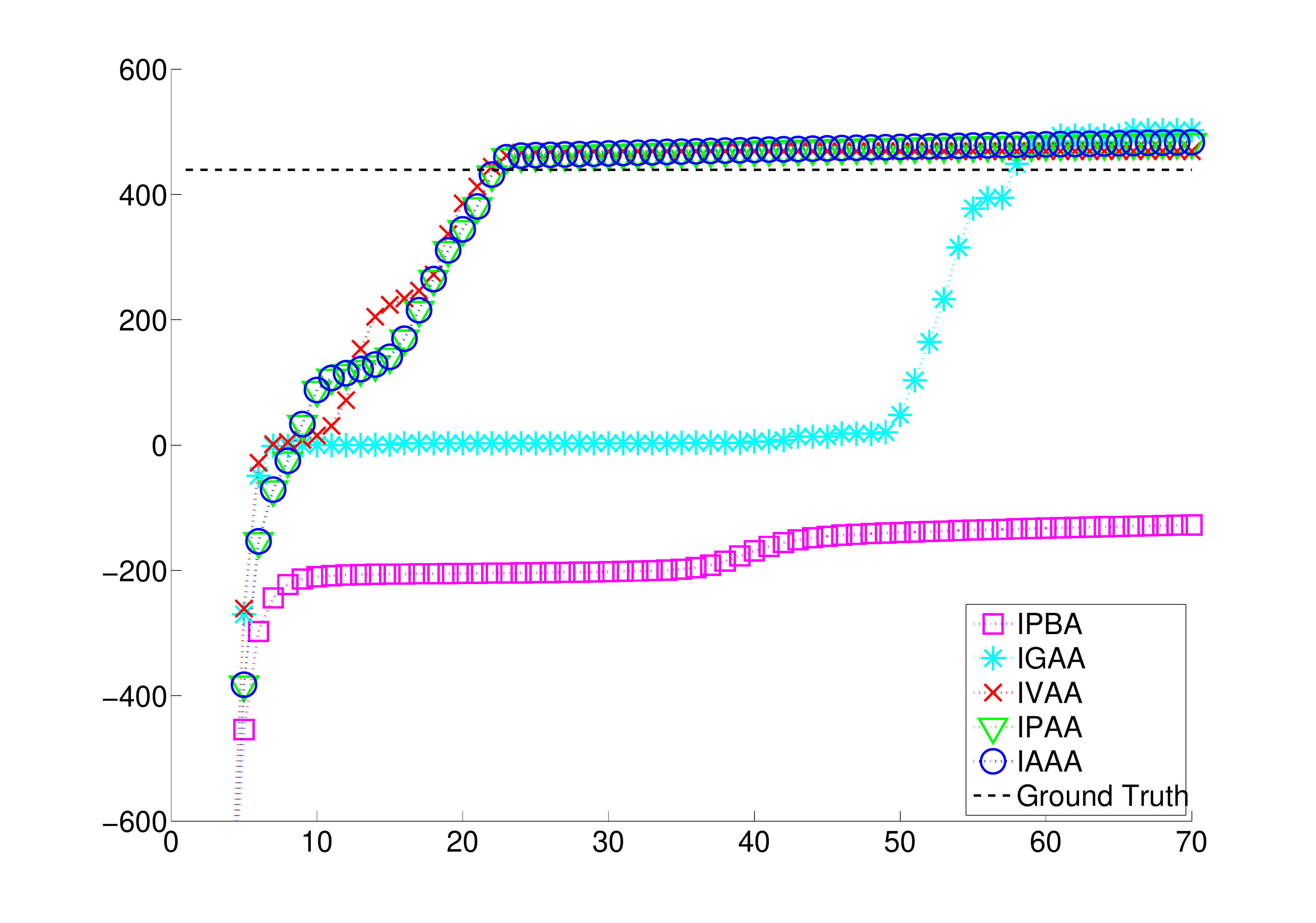}
     \includegraphics[width=0.48\textwidth, type=pdf, ext=.pdf, read=.pdf]{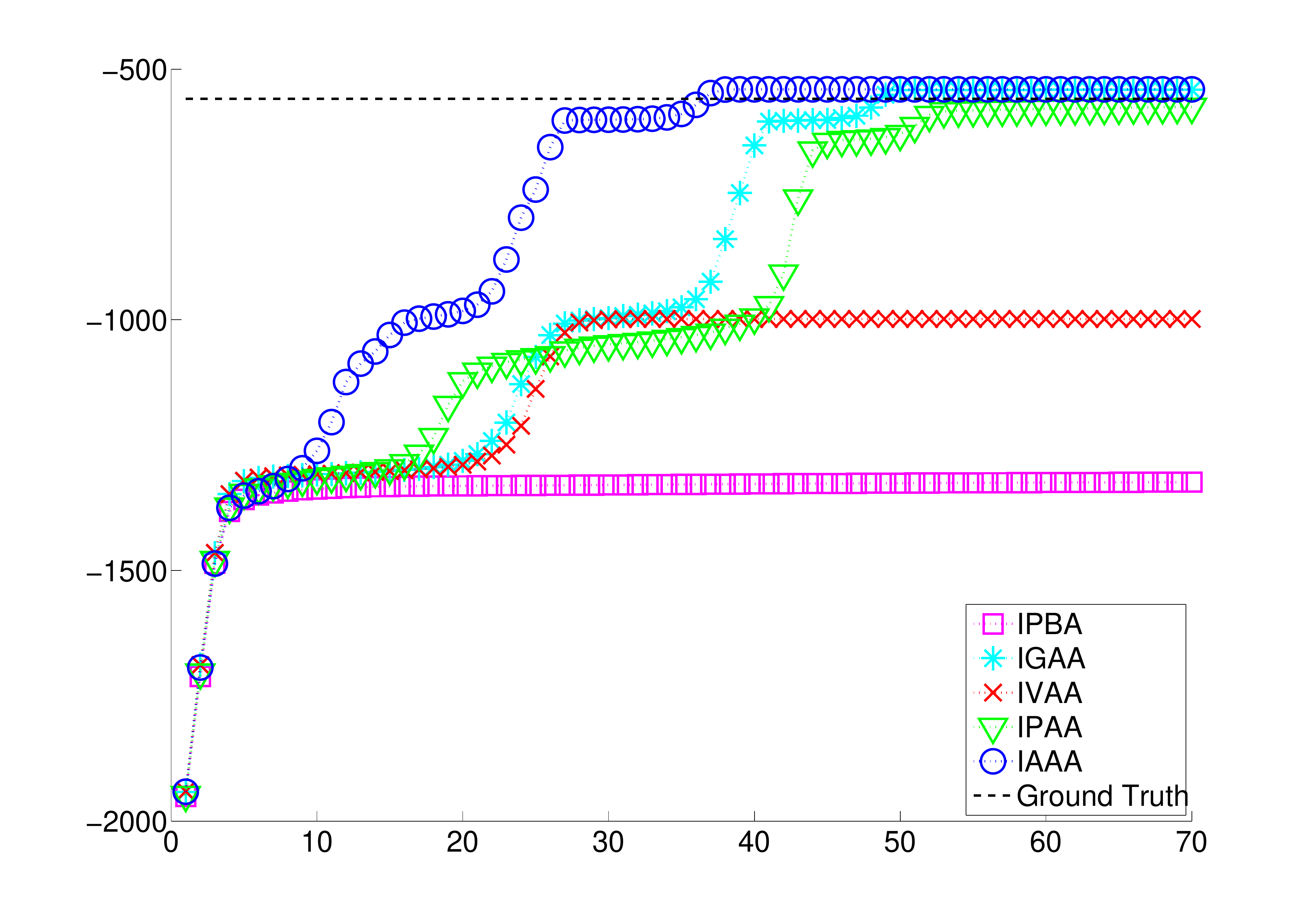}
\end{center}
\caption{Likelihood function evolution for five variants of the
algorithm in three cases. Top-left: well-separated objects;
top-right: poorly separated objects; bottom: well-separated object
but poor observation precision.} \label{fig:lhoods}
\end{figure*}

As expected, the simplest version IPBA that uses none of the
proposed acceleration techniques appears to be the slowest. The
other variants using basic gradient ascent are then not reported.
Predicting a single object parameter value from visual observations
(IVAA) does not give any improvement over IPAA, where
$\tilde{\param}^{(0)}$ is taken from the previous EM iteration.
When $\tilde{\param}^{(0)}$  is obtained by sampling the
hyperboloid predicted from audio observations (IAAA), a
significant impact on the  convergence speed is observed,
especially on early stages of the algorithm, where the predicted
value can be quite far from the optimal one. However, `blind'
sampling of the whole parameter space does not bring any
advantage: it is much less efficient regarding the number of
samples required for the same precision. This suggests that in the
general case,  the best strategy would be to sample the manifolds
$\fosm^{-1}(\fobsavg_\cind)$ and $\sosm^{-1}(\sobsavg_\cind)$ with
possible small perturbations to find the best
$\tilde{\param}^{(0)}$  estimate and to perform an accelerated
gradient ascent afterwards (IAAA).
We note that IAAA succeeds in all the cases
to find parameter values that are well-fitted to the model in
terms of likelihood function (likelihood is greater or equal
than that of real parameter values).

\begin{figure*}[tpb]
\begin{center}
     \includegraphics[width=0.8\textwidth, type=pdf, ext=.pdf, read=.pdf]{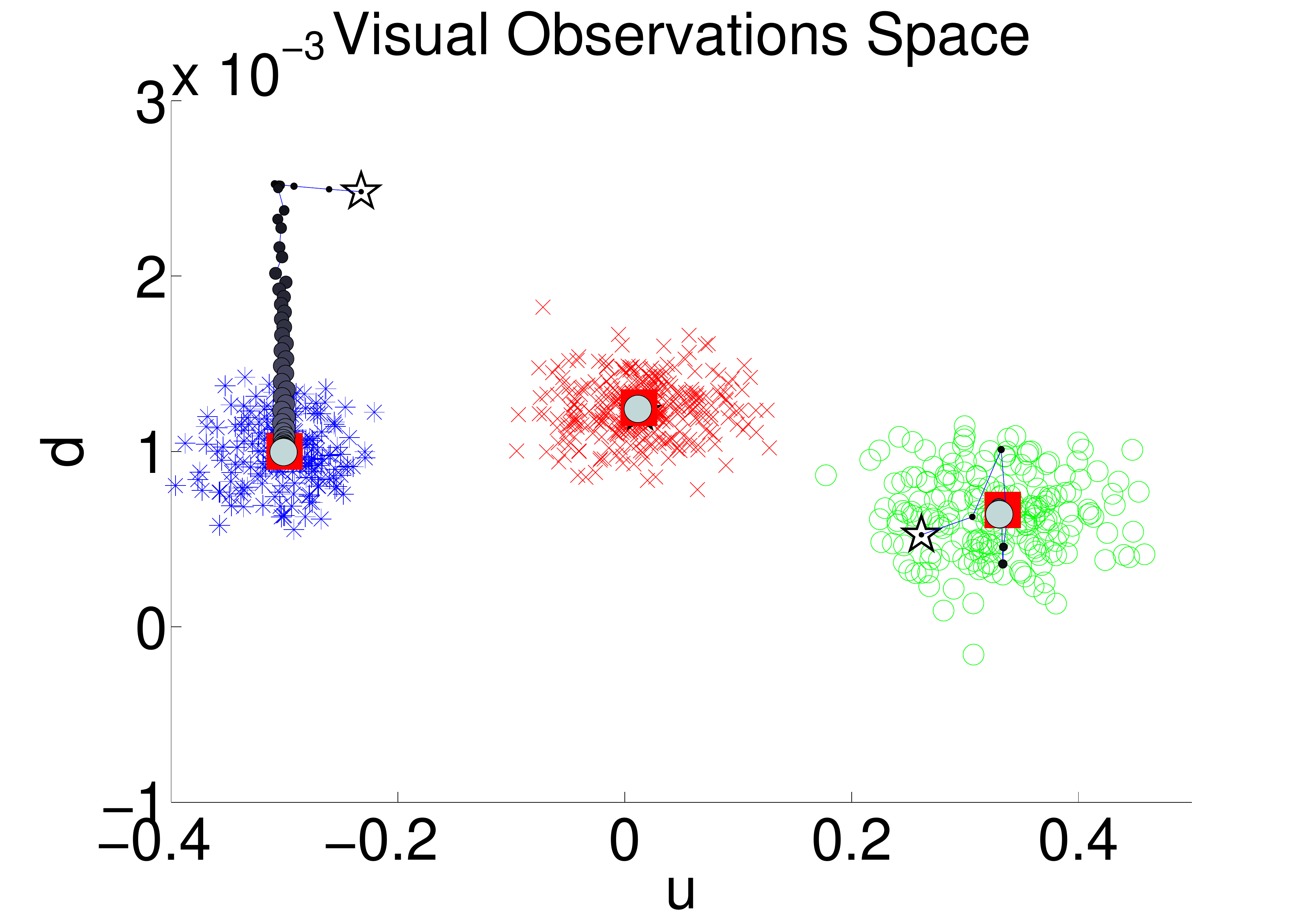}
     \includegraphics[width=0.8\textwidth, type=pdf, ext=.pdf, read=.pdf]{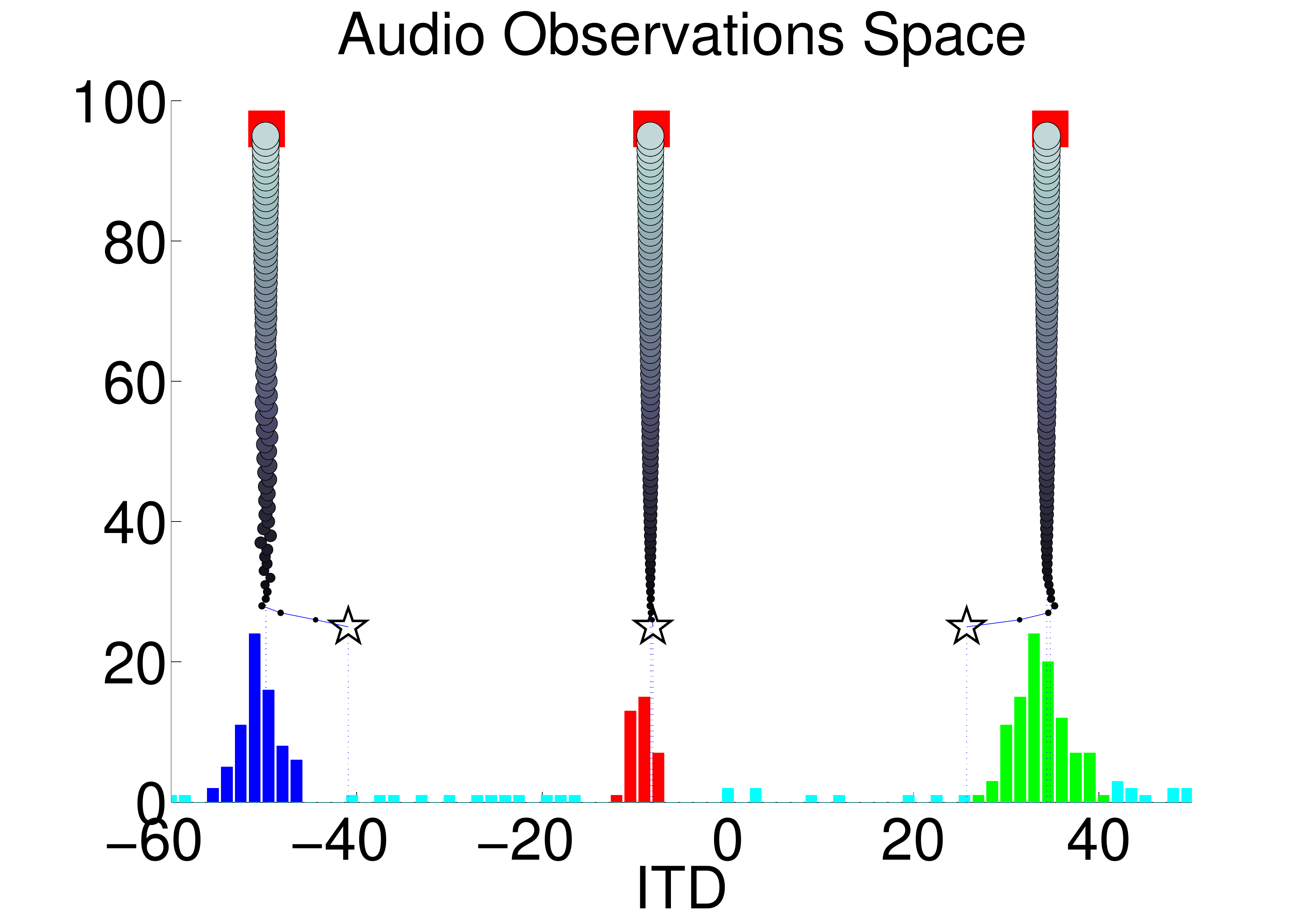}
\end{center}
\caption{IAAA algorithm: parameter evolution and assignment
results for the GoodSep case in audio and visual spaces (note the
scale change which corresponds to a zoom on the cluster centers).
The initialization (white stars) is based on the PSC strategy.
Ground truth means are marked
with squares. The evolution is shown by circles from smaller to
bigger, from darker to brighter. Observations assignments are
depicted by different markers ($\circ$, $*$ and $\times$ for the
three object classes) in visual space and are colour-coded in audio
space. Due to the zoom, outliers are not visible on these
figures.} \label{fig:simures_av}
\end{figure*}

\begin{figure*}[tpb]
\begin{center}
     \includegraphics[width=0.9\textwidth, type=pdf, ext=.pdf, read=.pdf]{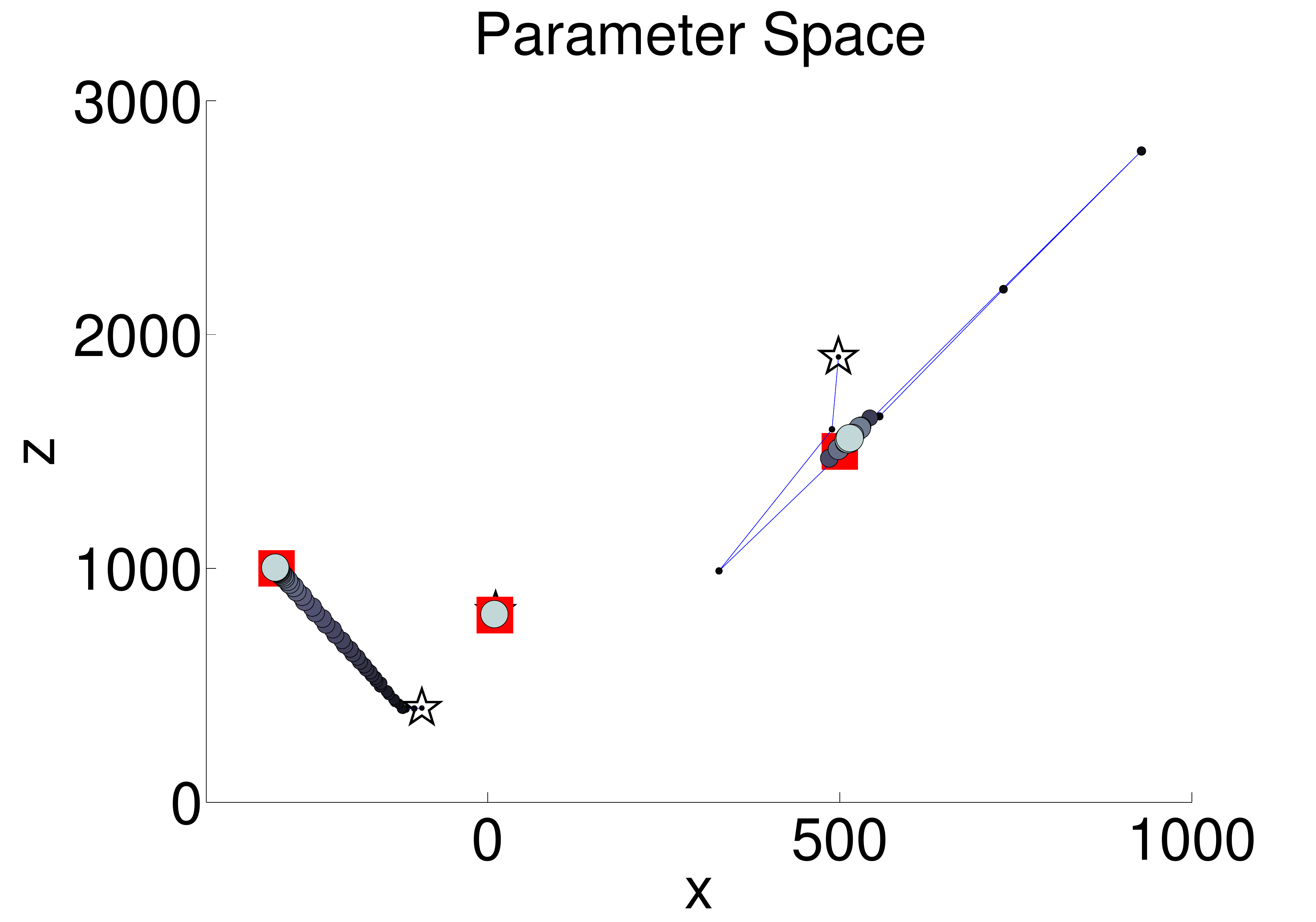}
\end{center}
\caption{IAAA algorithm: parameter evolution for the GoodSep case
in object space. The initialization (white stars) is based on the PSC strategy.
Ground truth means are marked with squares. The evolution is shown by circles
from smaller to bigger, from darker to brighter.}
\label{fig:simures_p}
\end{figure*}

\begin{figure*}[tpb]
\begin{tabular}{ccc}
\hspace{-0.5in}
\includegraphics[width=0.4\textwidth, type=pdf, ext=.pdf, read=.pdf]{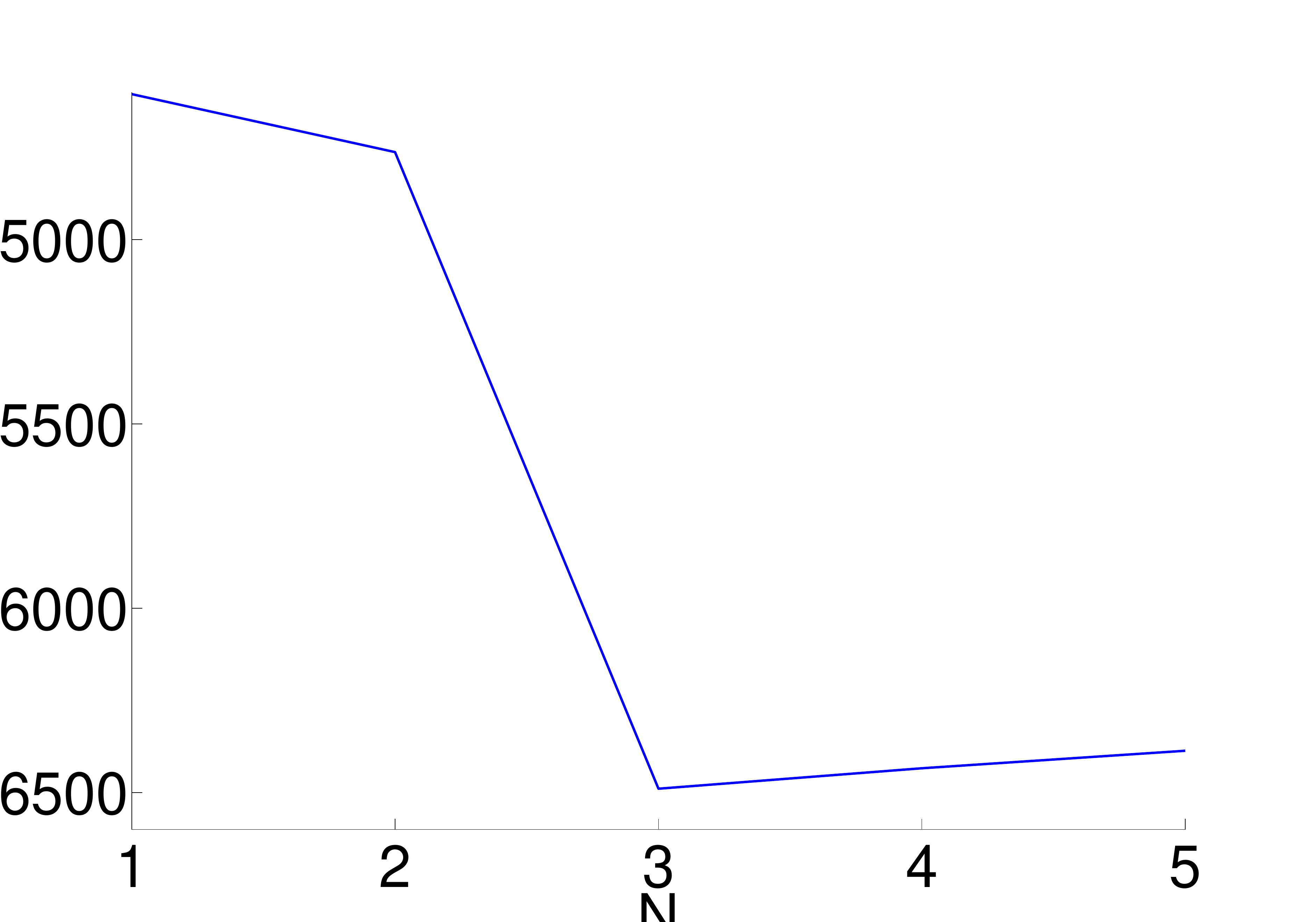} &
\hspace{-0.45in} 
\includegraphics[width=0.4\textwidth, type=pdf, ext=.pdf, read=.pdf]{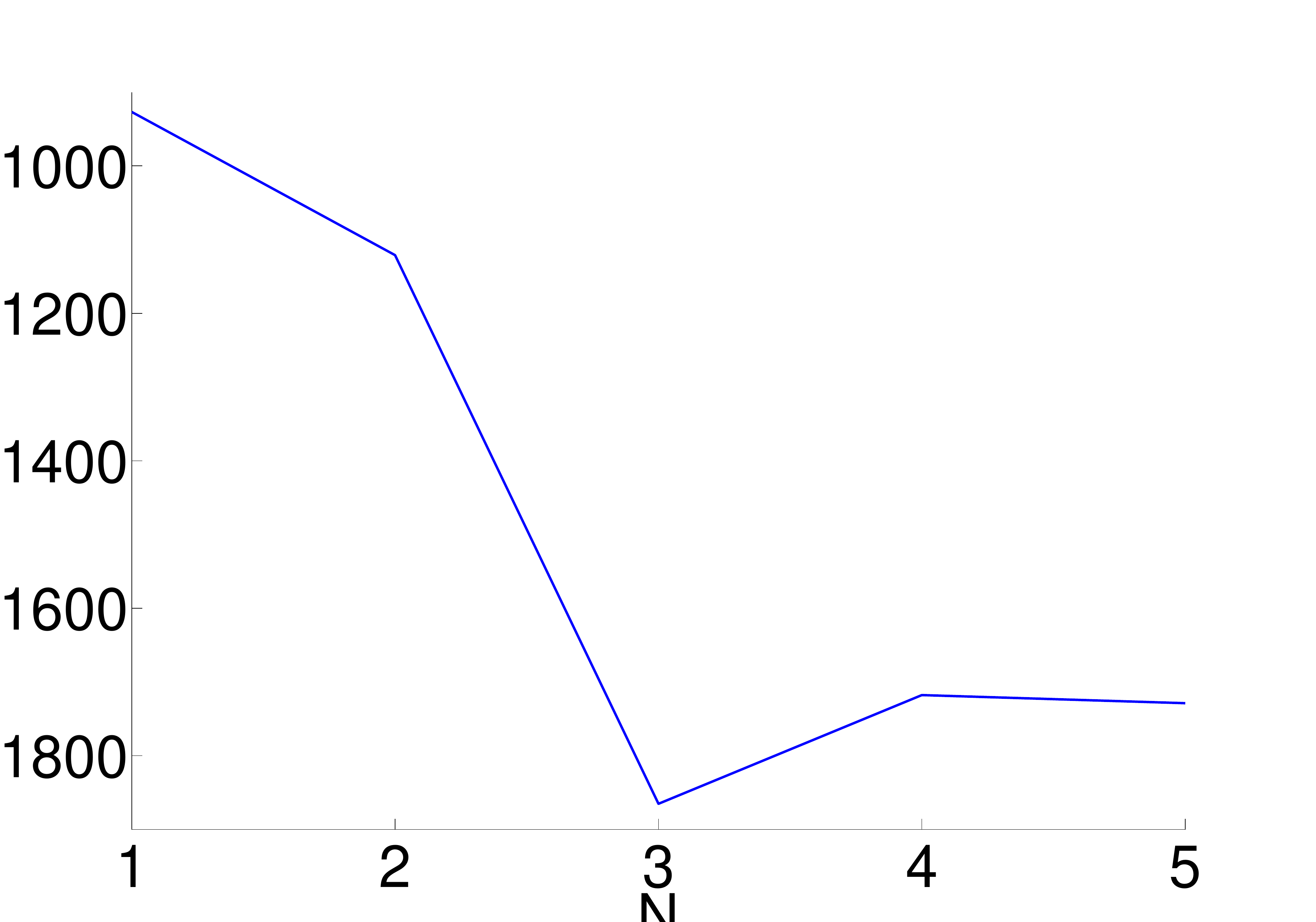}  &
\hspace{-0.45in}
\includegraphics[width=0.4\textwidth, type=pdf, ext=.pdf, read=.pdf]{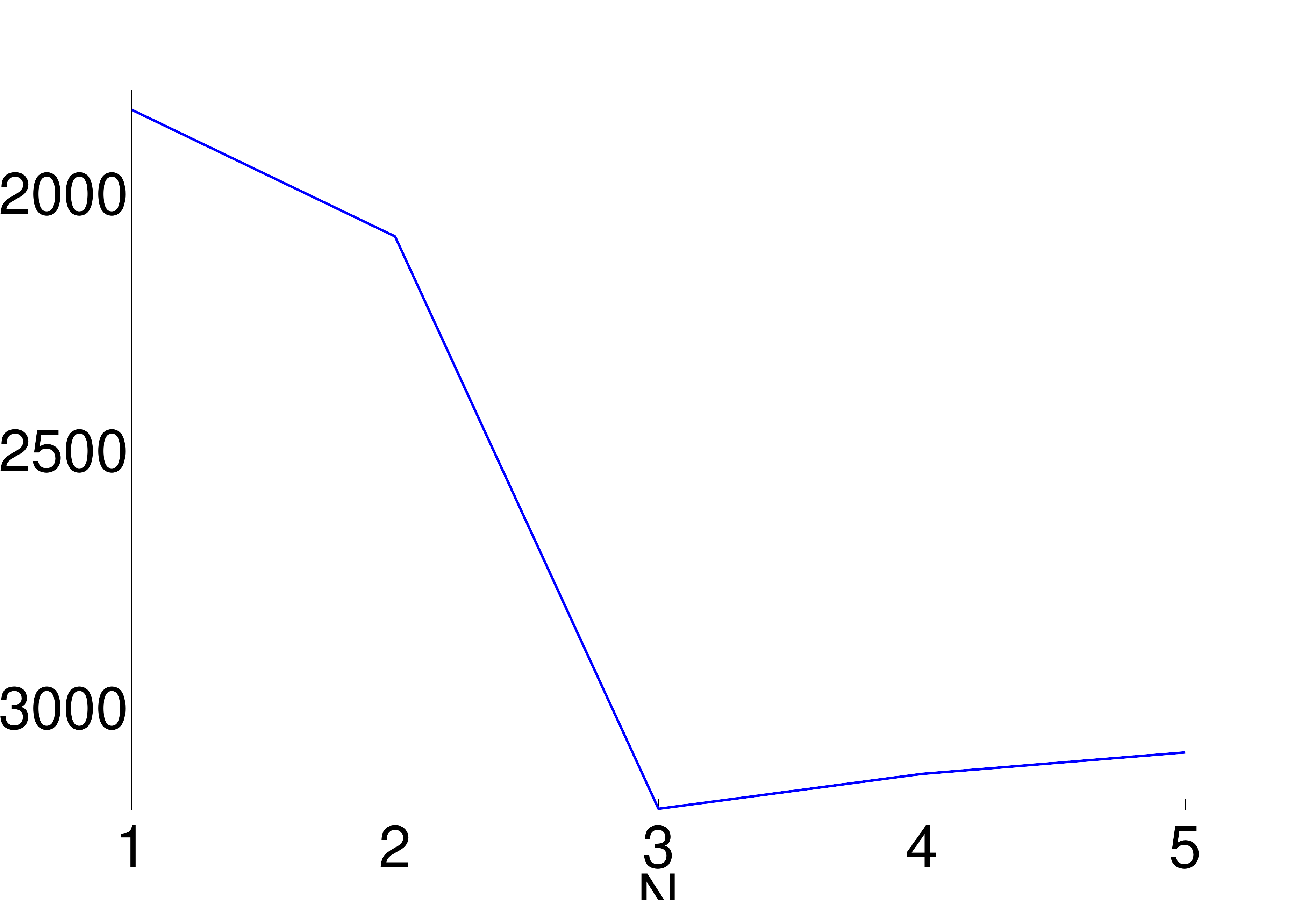}  \\
     (a) GoodSep & (b) PoorSep & (c) PoorPrec
\end{tabular}
\caption{BIC score graphs for the three object configurations, evaluated for models trained
for different total number of clusters $N$.}
\label{fig:simuBIC}
\end{figure*}%

Parameter evolution trajectories for the IAAA version of the
algorithm in the GoodSep case are shown in
Figures~\ref{fig:simures_av}-\ref{fig:simures_p}. The estimate
changes are reflected by the node sizes (from smaller to bigger)
and colours (from darker to lighter). The final values are very
close to the real cluster centers in all three audio, visual and
object spaces. The convergence speed is quite dependent on the
initialization. In the provided example the algorithm spent almost
a half of useful iterations to disentangle the estimates trying to
decide which one corresponds to which class. Another possibility
here would be to predict the initial values through sampling in
the audio domain. We demonstrate this strategy further when
working with real data.

\renewcommand{\arraystretch}{0.85}
\begin{table*}[tpb]
\caption{\label{tab:simures} IAAA algorithm: object location estimates in parameter, visual and audio spaces
for GoodSep, PoorSep and PoorPrec object configurations.
The estimates are calculated based on ten runs of the algorithm with PSC initializations.}
%\hspace{-0.5cm}
%\sbox{\strutbox}{\rule{0pt}{0pt}}
%\sbox{\strutbox}{\rule{0pt}{3ex}}
\begin{center}
\begin{tabular}{|c|c|c|c|c|c|}
\hline
\multicolumn{2}{|c|}{} & Ground Truth & Estimates Mean & Absolute Error & Relative Error \\
\hline
\multicolumn{2}{|c|}{Parameter Space}
& $\param=(\rcx,\rcz)$ & $\hat{\param}=(\hat{\rcx},\hat{\rcz})$ & $e_\mathrm{a}=\|\hat{ \param} - \param \|$ & $e_\mathrm{r}=\|\hat{ \param} - \param \| / \|\param\|$ \\[\smallskipamount]
\hline
\multirow{3}{*}{\rotatebox{90}{GoodSep}}
& Object 1 & \scalebox{0.8}{$(-300, 1000)$} & \scalebox{0.8}{$(-300.13, 997.81)$} & 2.2   & $2.1\cdot10^{-3}$  \\
& Object 2 & \scalebox{0.8}{$(10, 800)$}    & \scalebox{0.8}{$(9.28, 804.46)$}    & 4.52  & $5.7\cdot10^{-3}$  \\
& Object 3 & \scalebox{0.8}{$(500, 1500)$}  & \scalebox{0.8}{$(513.56, 1555.23)$} & 56.86 & $3.5\cdot10^{-2}$  \\
\hline
\multirow{3}{*}{\rotatebox{90}{PoorSep}}
& Object 1 & \scalebox{0.8}{$(-300, 1000)$} & \scalebox{0.8}{$(-307.47, 1028.38)$} & 29.35 & $2.8\cdot10^{-2}$  \\
& Object 2 & \scalebox{0.8}{$(10, 800)$}    & \scalebox{0.8}{$(14.19, 895.69)$}    & 95.79 & $1.2\cdot10^{-1}$  \\
& Object 3 & \scalebox{0.8}{$(100, 1500)$}  & \scalebox{0.8}{$(105.02, 1447.49)$}  & 52.75 & $3.5\cdot10^{-2}$  \\
\hline
\multirow{3}{*}{\rotatebox{90}{PoorPrec}}
& Object 1 & \scalebox{0.8}{$(-300, 1000)$} & \scalebox{0.8}{$(-208.86, 698.51)$} & 314.97 & $0.3$  \\
& Object 2 & \scalebox{0.8}{$(10, 800)$}    & \scalebox{0.8}{$(8.44, 703.97)$}    & 96.04  & $1.2\cdot10^{-1}$  \\
& Object 3 & \scalebox{0.8}{$(500, 1500)$}  & \scalebox{0.8}{$(507.65, 1533.8)$}  & 34.66  & $2.2\cdot10^{-2}$  \\
\hline
\multicolumn{2}{|c|}{Visual Space} & $\fobs=(\vcu,\vcd)$ & $\hat{\fobs}=(\hat{\vcu},\hat{\vcd})$ & $e_\mathrm{a}=\|\hat{\fobs} - \fobs\|$ & $e_\mathrm{r}=\|\hat{\fobs} - \fobs\| / \|\fobs\|$ \\
\hline
\multirow{3}{*}{\rotatebox{90}{GoodSep}}
& Object 1 & \scalebox{0.8}{$(-0.3, 0.001)$}     & \scalebox{0.8}{$(-0.3008, 0.001)$}  & $7.87\cdot10^{-4}$  & $2.6\cdot10^{-3}$  \\
& Object 2 & \scalebox{0.8}{$(0.0125, 0.00125)$} & \scalebox{0.8}{$(0.0115, 0.00124)$} & $9.59\cdot10^{-4}$  & $7.6\cdot10^{-2}$  \\
& Object 3 & \scalebox{0.8}{$(0.3333, 0.00067)$} & \scalebox{0.8}{$(0.3302, 0.00064)$} & $31.21\cdot10^{-4}$ & $9.3\cdot10^{-3}$  \\
\hline
\multirow{3}{*}{\rotatebox{90}{PoorSep}}
& Object 1 & \scalebox{0.8}{$(-0.3, 0.001)$}     & \scalebox{0.8}{$(-0.299, 0.001)$}   & $1.02\cdot10^{-3}$  & $3.4\cdot10^{-3}$  \\
& Object 2 & \scalebox{0.8}{$(0.0125, 0.00125)$} & \scalebox{0.8}{$(0.0159, 0.00112)$} & $3.36\cdot10^{-3}$  & $2.6\cdot10^{-1}$  \\
& Object 3 & \scalebox{0.8}{$(0.6667, 0.00067)$} & \scalebox{0.8}{$(0.7131, 0.00238)$} & $4.95\cdot10^{-3}$  & $7.4\cdot10^{-2}$  \\
\hline
\multirow{3}{*}{\rotatebox{90}{PoorPrec}}
& Object 1 & \scalebox{0.8}{$(-0.3, 0.001)$}     & \scalebox{0.8}{$(-0.299, 0.0014)$} & $10.8\cdot10^{-4}$  & $3.5\cdot10^{-3}$  \\
& Object 2 & \scalebox{0.8}{$(0.0125, 0.00125)$} & \scalebox{0.8}{$(0.012, 0.00142)$} & $5.38\cdot10^{-4}$  & $4.3\cdot10^{-2}$  \\
& Object 3 & \scalebox{0.8}{$(0.3333, 0.00067)$} & \scalebox{0.8}{$(0.331, 0.00065)$} & $23.56\cdot10^{-4}$ & $7.1\cdot10^{-3}$  \\
\hline
\multicolumn{2}{|c|}{Audio Space} & $\sobs$ & $\hat{\sobs}$ & $e_\mathrm{a}=|\hat{\sobs} - \sobs|$ & $e_\mathrm{r}=|\hat{\sobs} - \sobs| / |\sobs|$ \\
\hline
\multirow{3}{*}{\rotatebox{90}{GoodSep}}
& Object 1 & $-49.71$ & $-49.8$ & $0.09$ & $1.9\cdot10^{-3}$  \\
& Object 2 & $-8.22$ & $-8.35$ & $0.13$  & $1.6\cdot10^{-2}$  \\
& Object 3 & $34.75$ & $34.37$ & $0.38$  & $1.1\cdot10^{-2}$  \\
\hline
\multirow{3}{*}{\rotatebox{90}{PoorSep}}
& Object 1 & $-49.71$ & $-49.59$ & $0.12$ & $2.3\cdot10^{-3}$  \\
& Object 2 & $-8.22$ & $-7.76$  & $0.46$  & $5.6\cdot10^{-2}$  \\
& Object 3 & $-0.66$ & $-0.02$  & $0.65$  & $9.7\cdot10^{-1}$  \\
\hline
\multirow{3}{*}{\rotatebox{90}{PoorPrec}}
& Object 1 & $-49.71$ & $-49.49$ & $0.22$ & $4.4\cdot10^{-3}$  \\
& Object 2 & $-8.22$ & $-8.28$ & $0.06$   & $7.6\cdot10^{-3}$  \\
& Object 3 & $34.75$ & $34.47$ & $0.29$   & $8.3\cdot10^{-3}$  \\
\hline
\end{tabular}
\end{center}
\end{table*}

We compared the performance of our algorithm for the three object
configurations. For each of them, we computed absolute and relative
errors for the object parameter estimations in the different coordinate
systems (object, audio and visual spaces). The averages were taken
over 10 runs of the algorithm for different PSC
initializations, as described above. The results are reported in
Table~\ref{tab:simures}. We give object location estimates
$\hat{\param}=(\hat{\rcx}, \hat{\rcz})$,
$\hat{\fobs}=(\hat{\vcu},\hat{\vcd})$ and
$\hat{\sobs}$ in parameter, visual and audio spaces respectively.
It appears that the localization precision is quite high. In a
realistic setting such as that of Section~\ref{section:experimental-validation}, the
measurement unit can be set to a millimeter. In that case, the
observed precision, in a  well-separated objects configuration, it
is at worse about 6cm. However, precision in the  $\rcz$ coordinate is quite
sensible to the variance of the visual data and the object
configuration. To  get a better idea of the relationship between
the variance in object space and the variance in visual space,
$\fosm^{-1}$ can be replaced by its linear approximation  given by
a first order Taylor expansion. Assuming then that visual data are
distributed according to some probability distribution with mean
$\mu_\fosm$ and variance $\Sigma_\fosm$, it follows that through
the linear approximation of $\fosm^{-1}$, the variance in object
space is
$\frac{\partial\fosm^{-1}(\mu_\fosm)}{\partial\fobs}\fvar_\fosm\frac{\partial\fosm^{-1}(\mu_\fosm)}{\partial\fobs}\tp$.
Then, the  $\rcz$ coordinate covariance for an object $\cind$ is
approximately proportional to the $\vcd$ covariance for the
object multiplied by $\rcz_\cind^4$. For distant objects, a very
high precision in $\vcd$ is needed to get a satisfactory precision
in $\rcz$. At the same time we observe that the likelihood of the
estimate configuration often exceeds the likelihood for real
parameter values. This suggests that the model performs well for
the given data, but cannot get better precision than that imposed
by the data.

\paragraph{Selection.}
To select the optimal number of clusters $N$ we applied the BIC criterion~(\ref{eq:bic-criterion})
to the models, trained for that $N$. The BIC score graphs are shown on Figure~\ref{fig:simuBIC}.
The total number of objects $N$ is correctly determined in all the 3 cases of object
configurations, from which we conclude that the BIC criterion provides reliable
model selection in our case.

%%% Local Variables: 
%%% mode: latex
%%% TeX-master: t
%%% End: 

%\input{real_exp}

\section{Experiments with Real Data}
\label{section:experimental-validation}

%%\begin{changes}
In this section we evaluate
the effectiveness of our algorithms in
estimating the 3D locations of AV objects, i.e., a person localization
task. The examples used below are from a database of realistic AV scenarios
described in detail in~\citep{arnaud08cava}.

The experimental setup consists of a \textit{mannequin} equipped
with a pair of microphones fixed into its ears and a pair of
stereoscopic cameras mounted onto its forehead (this device was
developed within the
POP\footnote{\url{http://perception.inrialpes.fr/POP/}} project).
Each data set comprises two audio tracks, two image sequences, as
well as the calibration information. All the recordings were
performed in an ordinary room with no special adjustments to its
acoustics or appearance. Thus the data contain both visual
background information, and auditory noise, reverberations in
particular. This configuration best mimics what a person would
hear and see in a standard indoor environment.

We tested our multimodal clustering method with two scenarios: a
\textit{meeting} and a \textit{cocktail party}, Table~\ref{tab:scenarios}:
\begin{itemize}
 \item The meeting scenario\footnote{
    \url{http://perception.inrialpes.fr/CAVA_Dataset/Site/data.html#M1}}
    is a recording of a discussion held by five persons sitting around a table,
    only three of them being visible. It lasts 25 seconds and contains
    a total of about 8000 visual and 600 audio observations.
    The three visible persons perform head and body movements while
    taking speech turns. Sometimes two persons (visible or not) speak simultaneously.
 \item The cocktail party scenario\footnote{
    \url{http://perception.inrialpes.fr/CAVA_Dataset/Site/data.html#CTMS3}}
    shows a dynamic scene with three persons walking in a room and
    taking speech turns. Occasionally, one speaker is hidden by another
    person and two persons may speak simultaneously. Speakers may go
    in and out of the two cameras field of view. Moreover,
    there are sounds emitted by the persons' steps.
    The recording lasts 30 seconds and contains a total of about
    12500 visual and 3400 audio observations.
\end{itemize}
\begin{table}[tb]
  \begin{center}
  \begin{tabular}{|c|l|l|l|l|l|l|}
  \hline
  scenario & \begin{minipage}[c]{4em}
        \setlength{\baselineskip}{.7\baselineskip}
        visible persons
             \end{minipage}
  & \begin{minipage}[c]{4em}
    \setlength{\baselineskip}{.7\baselineskip}
   speaking persons
    \end{minipage}
  & \begin{minipage}[c]{5em}
    \setlength{\baselineskip}{.7\baselineskip}
   visual background
    \end{minipage}
  &
  \begin{minipage}[c]{4em}
    \setlength{\baselineskip}{.7\baselineskip}
   audio noise
    \end{minipage}
  & \begin{minipage}[c]{4em}
    \setlength{\baselineskip}{.7\baselineskip}
   occluded speakers
    \end{minipage}
  & \begin{minipage}[c]{4em}
    \setlength{\baselineskip}{.7\baselineskip}
   audio overlap
    \end{minipage}
  \\
  \hline
  meeting & 3 & 5 & yes & yes & no & yes \\
  \hline
  \begin{minipage}[c]{4em}
        \setlength{\baselineskip}{.7\baselineskip}
        cocktail party
             \end{minipage}
  & 3 & 3 & yes & yes & yes & yes \\
  \hline
  \end{tabular}
  \end{center}
  \caption{\label{tab:scenarios} Summary of the main characteristics
    of the two scenarios used to evaluate the multimodal clustering
    algorithm.}
\end{table}

\subsection{Preprocessing and Algorithm Initialization}

Visual observations, $\mathbf{f}$, are obtained as follows. First we
detect points of interest (POI) in both the left and right images and we
select those points that correspond to a moving scene object. Second
we perform stereo matching such that a disparity value is associated
with each matched point.

In practice we used the POI detector described in
\citep{harris88combined}. This detector is known to have high
repeatability in the presence of texture and to be photometric
invariant. We analyse each image point detected this way and we
select those points associated with a significant motion pattern.
Motion patterns are obtained in a straightforward manner. A
temporal intensity variance $\sigma_t$ is estimated at each POI.
Assuming stable lighting conditions, the POI belongs to a static
scene object if its temporal intensity variance is low and
non-zero due to a camera noise only. For image points belonging to
a dynamic scene object, the local variance is higher and depends
on the texture of the moving object and on the motion speed. In
our experiments, we estimated the local temporal intensity
variance $\sigma_t$ at each POI, from a collection of 5
consecutive frames. The point is labelled ``motion''  if
$\sigma_t>5$ (for 8-bit gray-scale images), otherwise it is
labelled as ``static''. The motion-labelled points are then
matched and the associated disparities are estimated using
standard stereo methods. In practice the results shown in this
paper are obtained with the method described in
\citep{hansard07patterns} using the INTEL's OpenCV camera
calibration software
\footnote{\url{http://www.intel.com/technology/computing/opencv}}.
Overall, this provides the $(\vcu,\vcv,\vcd)\tp$ to
$(\rcx,\rcy,\rcz)\tp$ mapping~(\ref{eq:vfuncdef}). Examples are
shown on Figure~\ref{fig:visobs}. Alternatively, we could have
used the spatiotemporal point detector described in
\citep{Laptev-IJCV-2005}. This methods is designed to detect
points in a video stream having large local variance in both the
spatial and temporal domains, thus representing abrupt events in
the stream. However, such points are quite rare in our dataset.

\begin{figure}[p]
\begin{center}
\subfigure[Meeting: There are five speakers but only three are
visible.]{\includegraphics[width=\textwidth, type=pdf, ext=.pdf,
  read=.pdf]{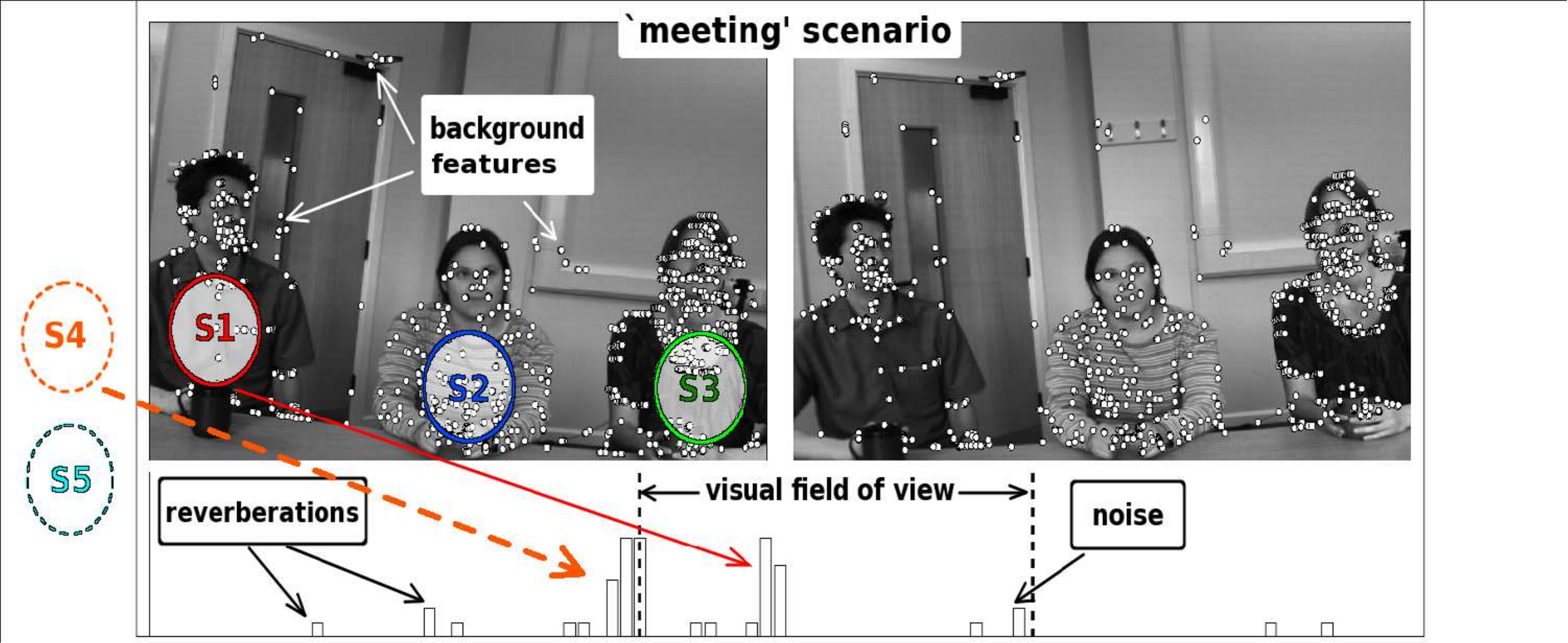}}
\subfigure[Cocktail party: The three speakers walk in the room.]{
  \includegraphics[width=\textwidth, type=pdf, ext=.pdf,
  read=.pdf]{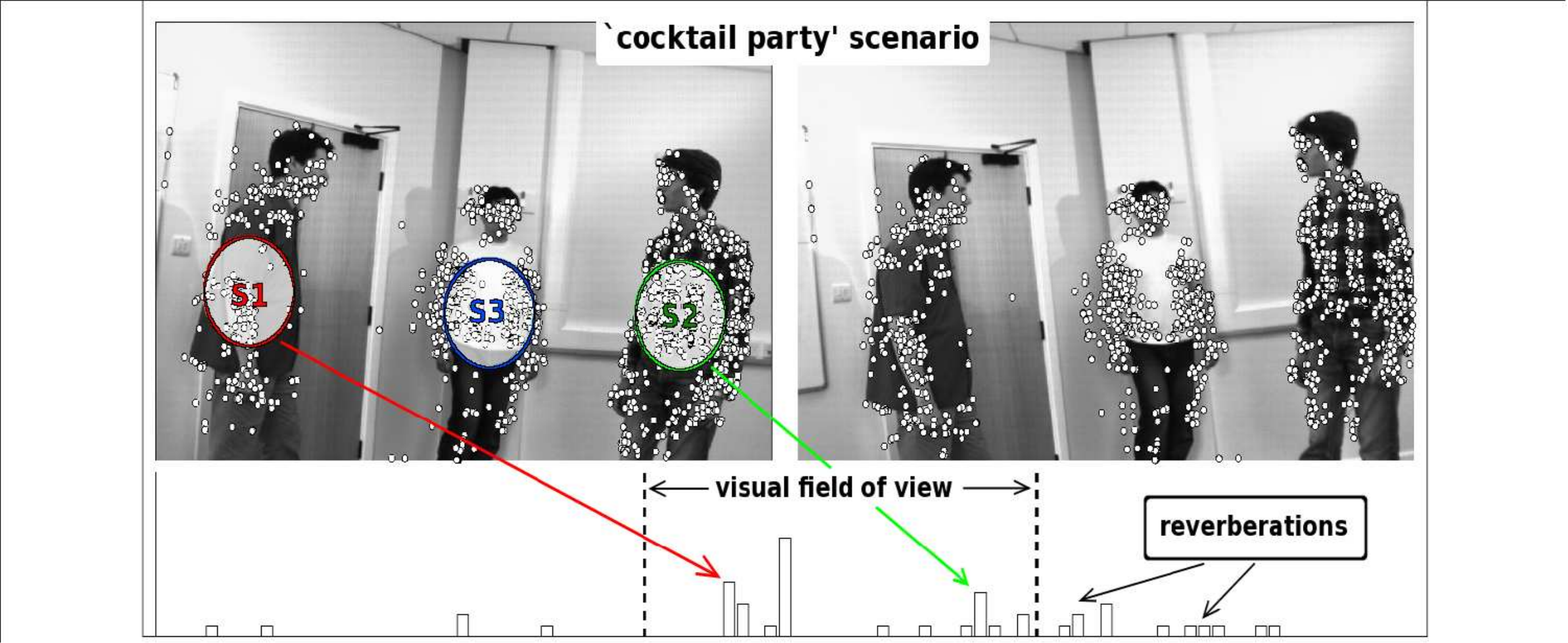}}
\end{center}
\caption{\label{fig:visobs} This figure illustrates how the
  audio-visual data are preprocessed. Visual points of interest (POI)
  associated with scene motion are matched between the left and right
  images. The histograms of the interaural time difference (ITD)
  observations correspond to a ``segment'' of 0.3 seconds. The
  audio-visual calibration allows us to filter out auditory data that
  falls outside the field of view of the two cameras. Notice the
  large number of auditory perturbations corresponding to noise,
  reverberations, as well as to speakers that are outside the visual
  field of view. In these examples, there are two simultaneous
  speakers: (a) S1 and S4 and (b) S1 and S2. Notice that S4 is easily
  eliminated because its associated ITD falls outside the visual field of view.
}
\end{figure}

Auditory observations, $\mathbf{g}$, are obtained as follows. Our
method uses \textit{interaural time differences} (ITD) which are
detected through the analysis of the cross-correlogram of the
filtered left- and right-microphone one-dimensional signals for
every frequency band~\citep{christensen07integrating}. Like any
other audio-visual fusion method, one needs to perform
audio-visual calibration, namely to estimate the positions of the
microphones and the positions and orientations of the cameras in a
common world coordinate system. This is done using the method
described in \citep{arnaud08cava}.

In order to initialize the algorithm's parameter values we used the Parameter Space Candidates
(PSC) initialization strategy described in Section~\ref{sec:init}.
Although real-data distributions do not strictly correspond to the case of Gaussian mixtures,
the initialization strategy that we have adopted remains relevant.
This originates from the fact that parameter space sampling with
configuration restrictions plays the role of a global optimization method
similar to Monte-Carlo sampling in the method of generations~\citep{zhigljavsky08stochastic}.
It helps to avoid local maxima
and allows to quickly find a set of appropriate initial parameters. Local distribution density
modes occur to be good candidates to initialize cluster centers.
As in the case of simulated data,
we used the BIC score, i.e., Section~\ref{sec:bic} to select the optimal
number of audio-visual clusters.

\subsection{Results and Discussion}

The experimental validation described below was performed with two
goals in mind. Firstly, we wanted to check that our method was
stable and robust with real data gathered in complex situations,
that it correctly finds the number of clusters and that it
efficiently determines  the model's parameters, i.e., the 3D
positions of the audio-visual objects composing a scene. Secondly,
we wanted to test the model's capability to deal with dynamic
changes in the scene, yet in the presence of acoustic
noise/reverberations and visually occluded persons, etc. Below we
provide a detailed account of the results obtained with the
meeting and cocktail-party audio-visual sequences.

The audio-visual recordings are split into ``segments'', each segment
lasts 0.3 seconds. At 25 frames/second this corresponds to
approximately eight video frames. The initialization method described
in Section~\ref{sec:init} and the model selection method
described
in Section \ref{sec:bic} are combined and applied to the first segment
in order to find initial parameter values and to
estimate the number of components (the number of audio-visual objects)
to be used by the conjugate EM algorithm. Consequently, the parameters
estimated for one segment are used to initialize the parameters for
the next segment, while the number of components remains constant.

\begin{itemize}
 \item Quasi-static scene. The meeting situation corresponds to the
   well-separated case which is referred to as \textit{GoodSep}
   in the previous section.
The initialization strategy performs well and the candidate
configuration obtained by the initialization step is relatively
close to the optimal one found by the EM algorithm described in
detail in Section~\ref{sec:indep:em:gem}. In fact, the likelihood
evolution reported in Figure~\ref{fig:realres} shows that
convergence is reached in about 20 iterations of EM, which is
three times faster than in the simulated GoodSep case reported in
Figure~\ref{fig:lhoods}. The 3D position estimates are quite
accurate, in particular the natural alignment of the speakers
along the table is clearly seen in the $XZ$ plane. Even though in
practice, the data are not piecewise Gaussian and the outliers are
not uniformly distributed, our method performs quite well, which
illustrates its robustness when dealing with real-data
distributions. Figure~\ref{fig:m1res} shows sequential results
obtained in this case. The speech sources are correctly detected
even in the case when two persons are simultaneously active.

 \item Dynamic scene. The cocktail party situation corresponds to the
   partially occluded case which is referred to as \textit{PoorSep} in
   the previous section. In this case, the locations of the
   audio-visual objects varies over time, as well as their
   number. Nevertheless, we assume that these changes are rather
   slow. We did not attempt to tune our algorithm to the dynamic
   case. Hence, we use the same initialization strategy as in the
   quasi-static case  which is briefly summarized above.
   Figure~\ref{fig:visres} shows the results obtained in this case.
%To adopt the model to dynamic case we assumed
%changes in the scene to be rather slow, so that the model can use its natural
%capacity as a dynamical system to converge to the optimal parameter values,
%that belong to the attractor set of the model.
%We split the AV recording into smaller sub-intervals
%and perform the initialization on the first one of them (0.3s long),
%using techniques from Sections~\ref{sec:init} and~\ref{sec:bic}.
%We initialize parameter values for the next time interval using
%the optimal values from the previous one, keeping the number of AV objects fixed.
%This allows one to avoid running the costly model selection procedure every time.
\end{itemize}

\begin{figure}[p!]
\begin{center}
     \includegraphics[width=0.48\textwidth, type=pdf, ext=.pdf, read=.pdf]{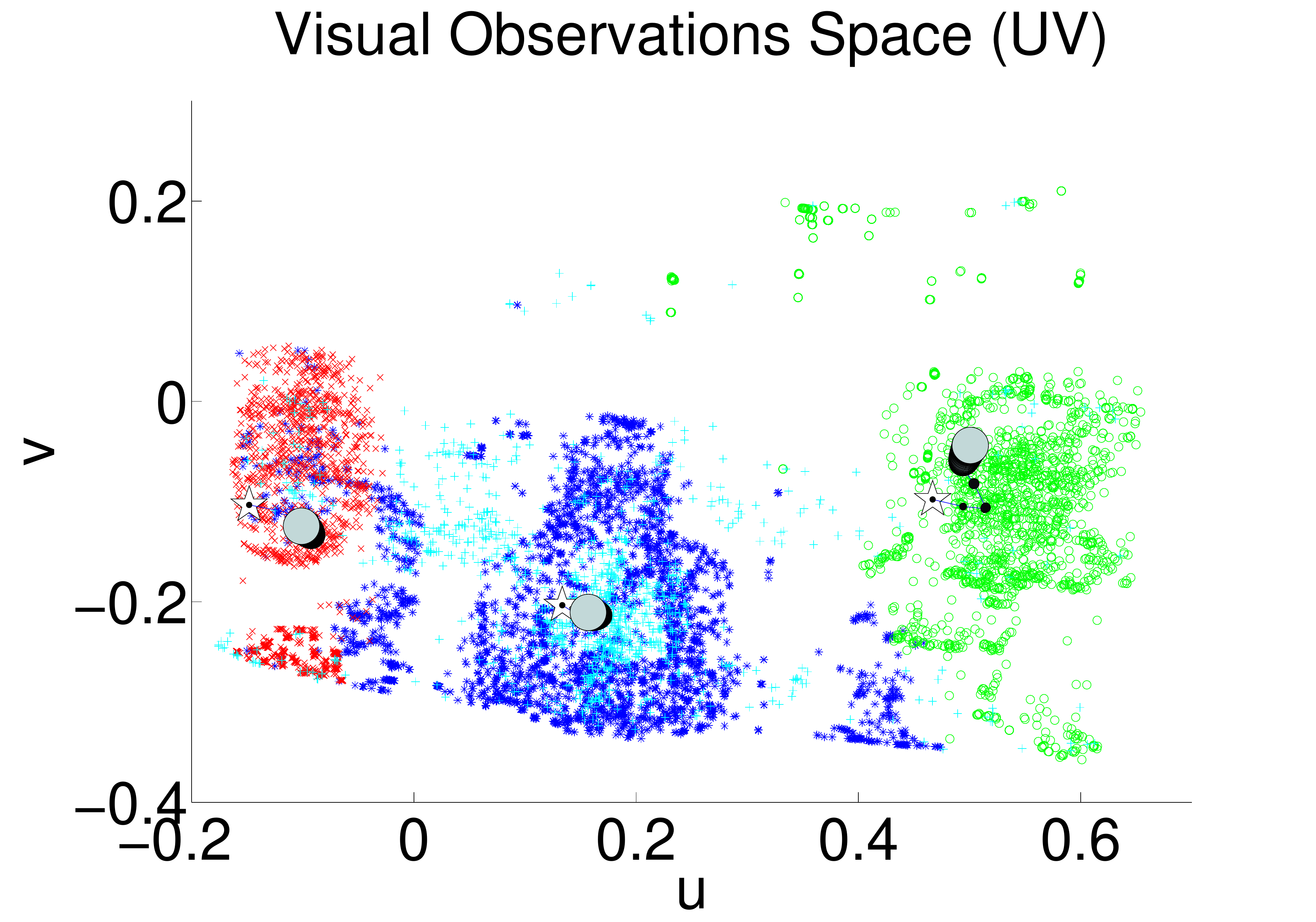}
     \includegraphics[width=0.48\textwidth, type=pdf, ext=.pdf, read=.pdf]{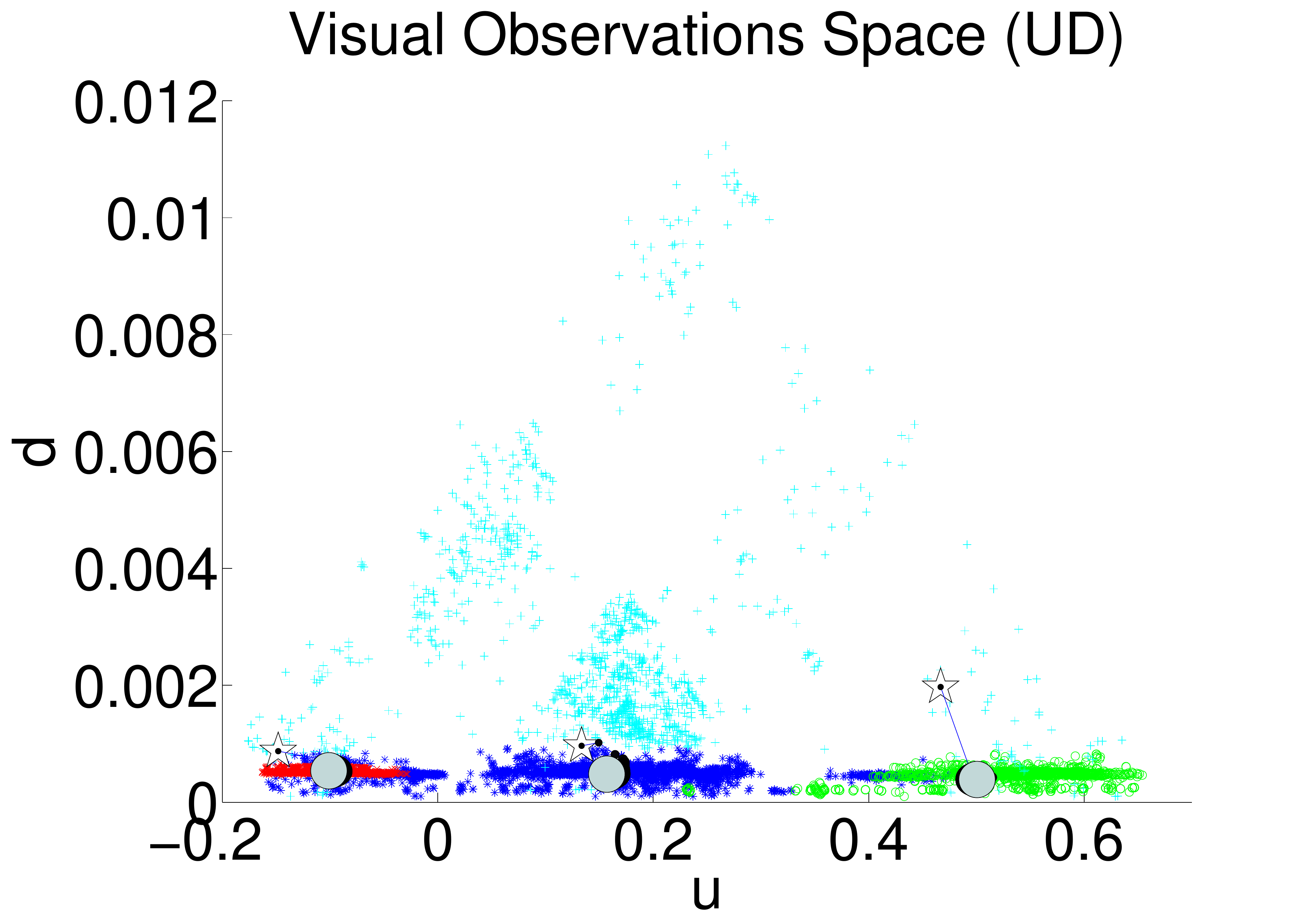}
     \includegraphics[width=0.48\textwidth, type=pdf, ext=.pdf, read=.pdf]{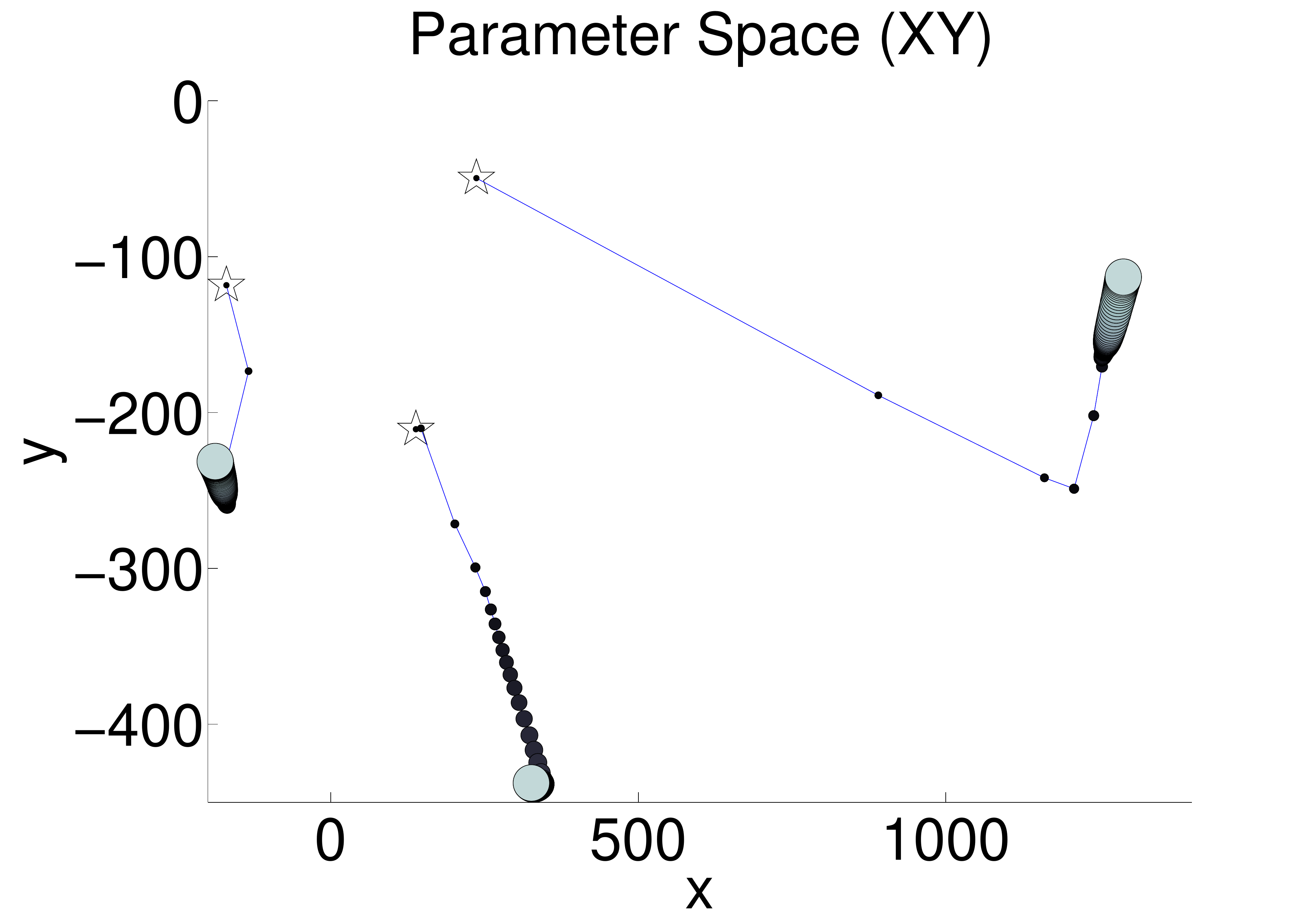}
     \includegraphics[width=0.48\textwidth, type=pdf, ext=.pdf, read=.pdf]{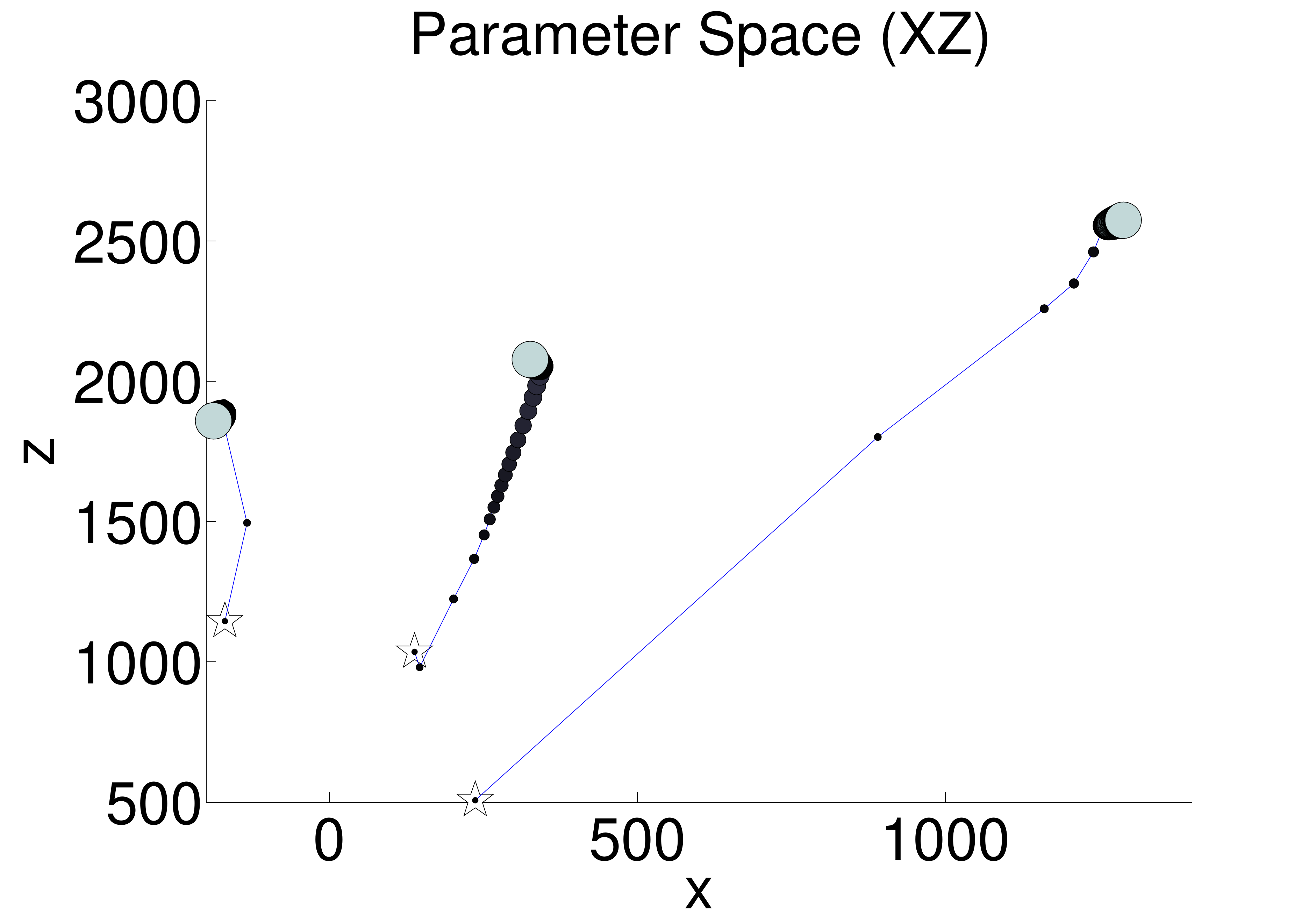}
     \includegraphics[width=0.48\textwidth, type=pdf, ext=.pdf, read=.pdf]{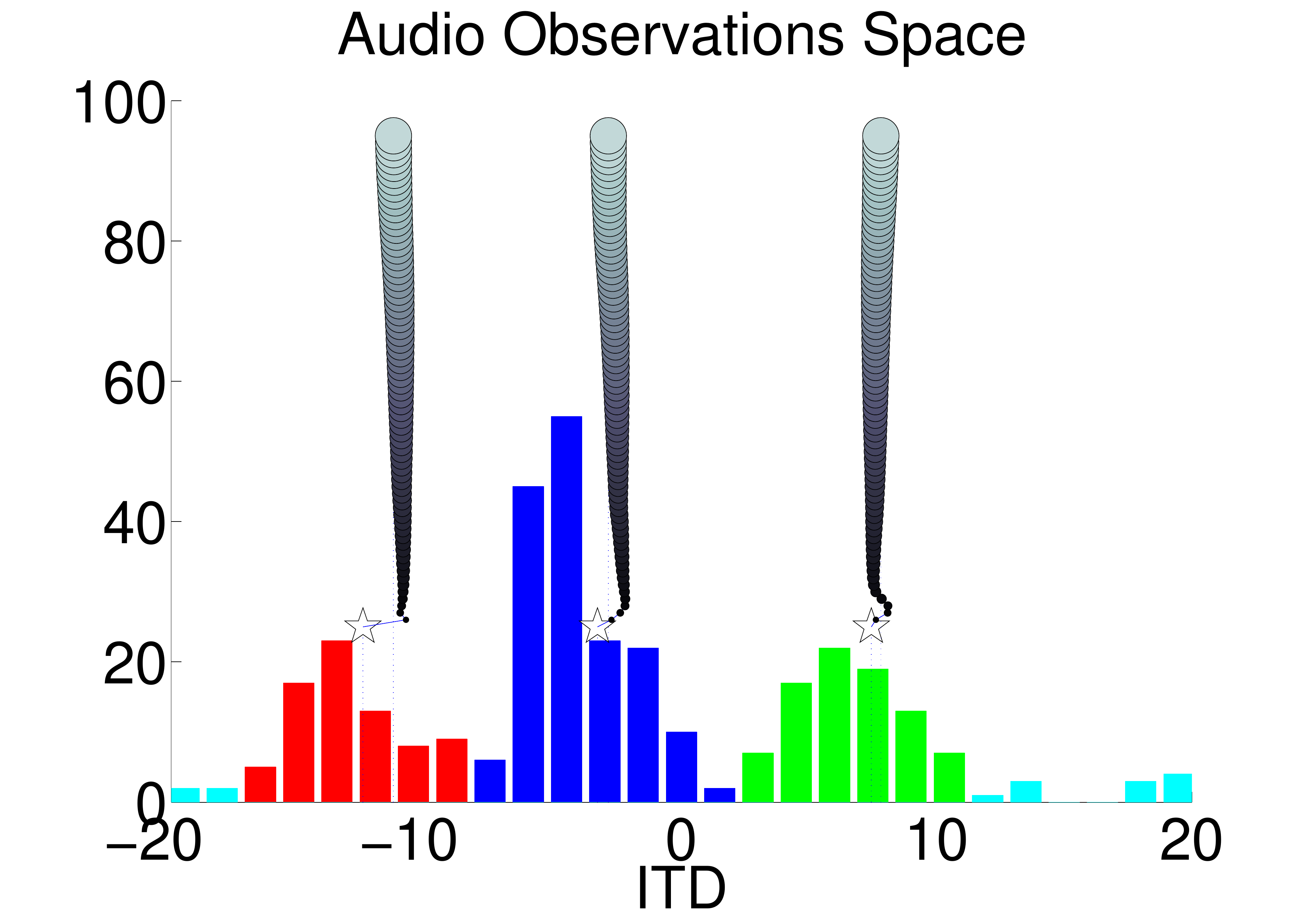}
     \includegraphics[width=0.48\textwidth, type=pdf, ext=.pdf, read=.pdf]{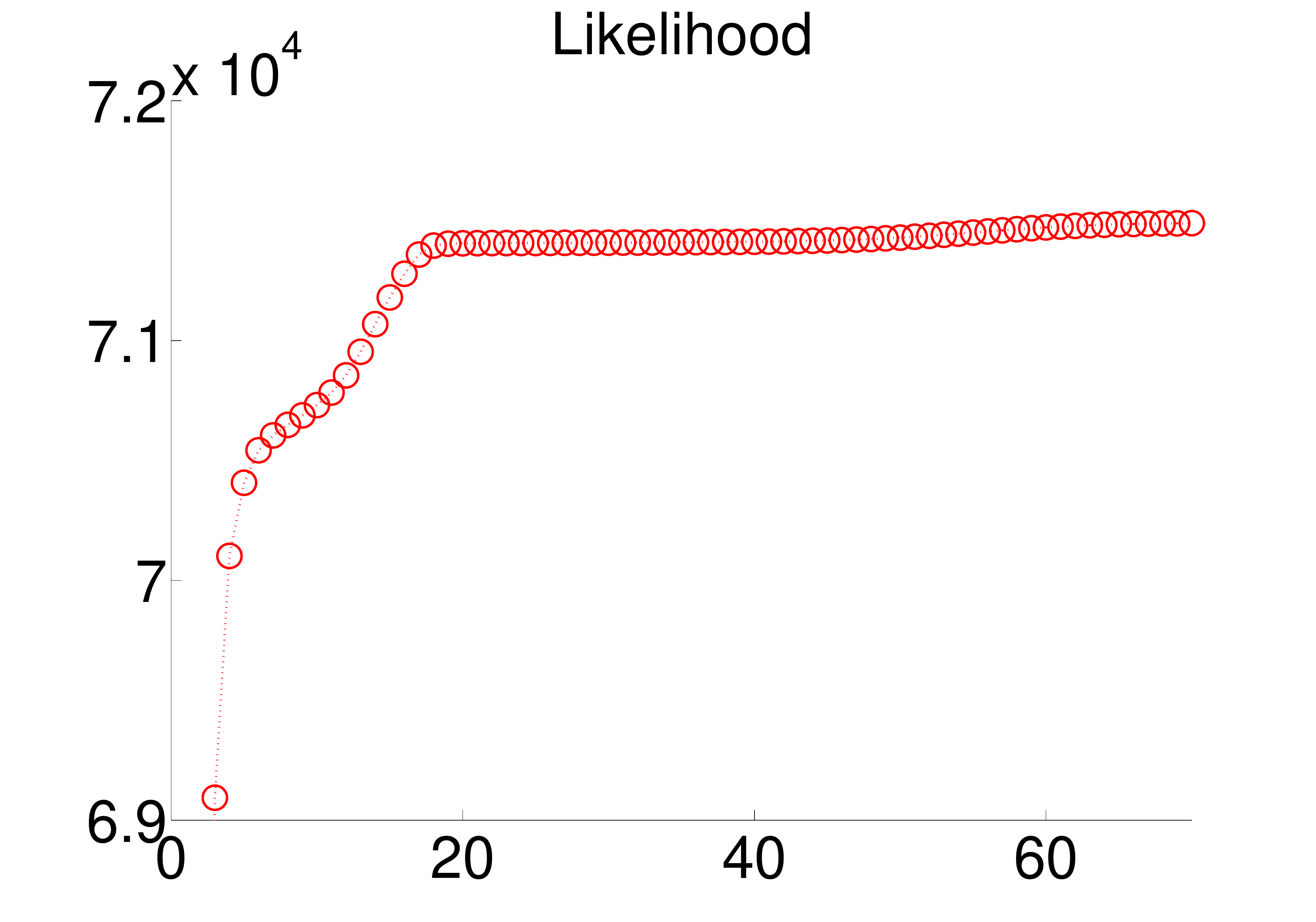}
\end{center}
\caption{An example of applying the proposed EM algorithm to a time interval of
  20 seconds of the meeting scenario. The results are shown in the
  visual and auditory observation spaces as well as in the parameter
  space. The initial parameter values are shown with three stars while
  the parameter evolution trajectories are shown with circles of
  increasing size. The final observation-to-cluster assignments are
  shown in color: red, blue, and green for the three Gaussian components
  and light-blue for the outlier component. The log-likelihood curve
  (bottom-right) shows that the algorithm converged after 20 iterations.
} \label{fig:realres}
\end{figure}

\begin{figure}[p!]
\begin{tabular}{ccc}
%\begin{center}
     \includegraphics[width=0.45\textwidth, type=pdf, ext=.pdf, read=.pdf]{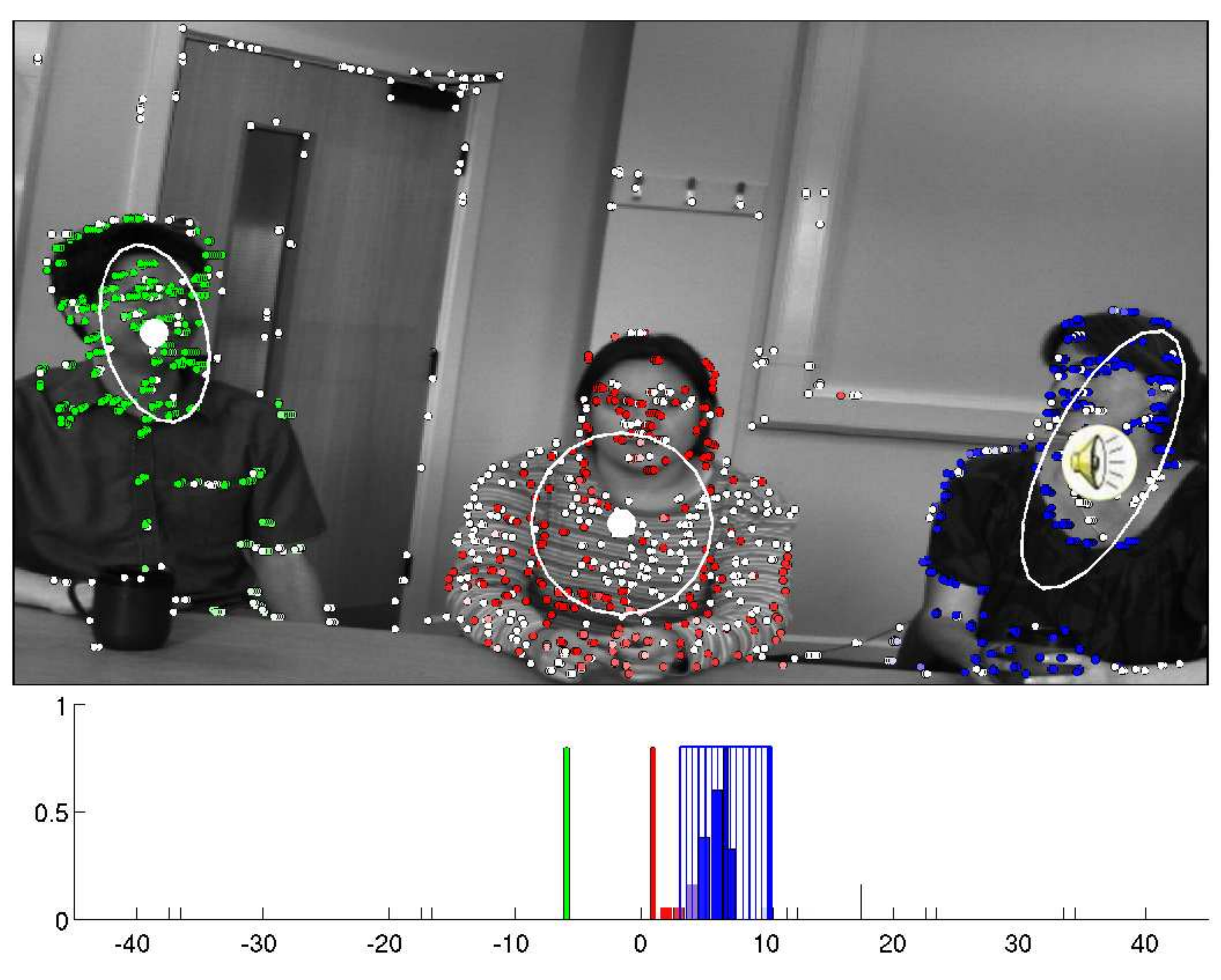} &
     \includegraphics[width=0.45\textwidth, type=pdf, ext=.pdf, read=.pdf]{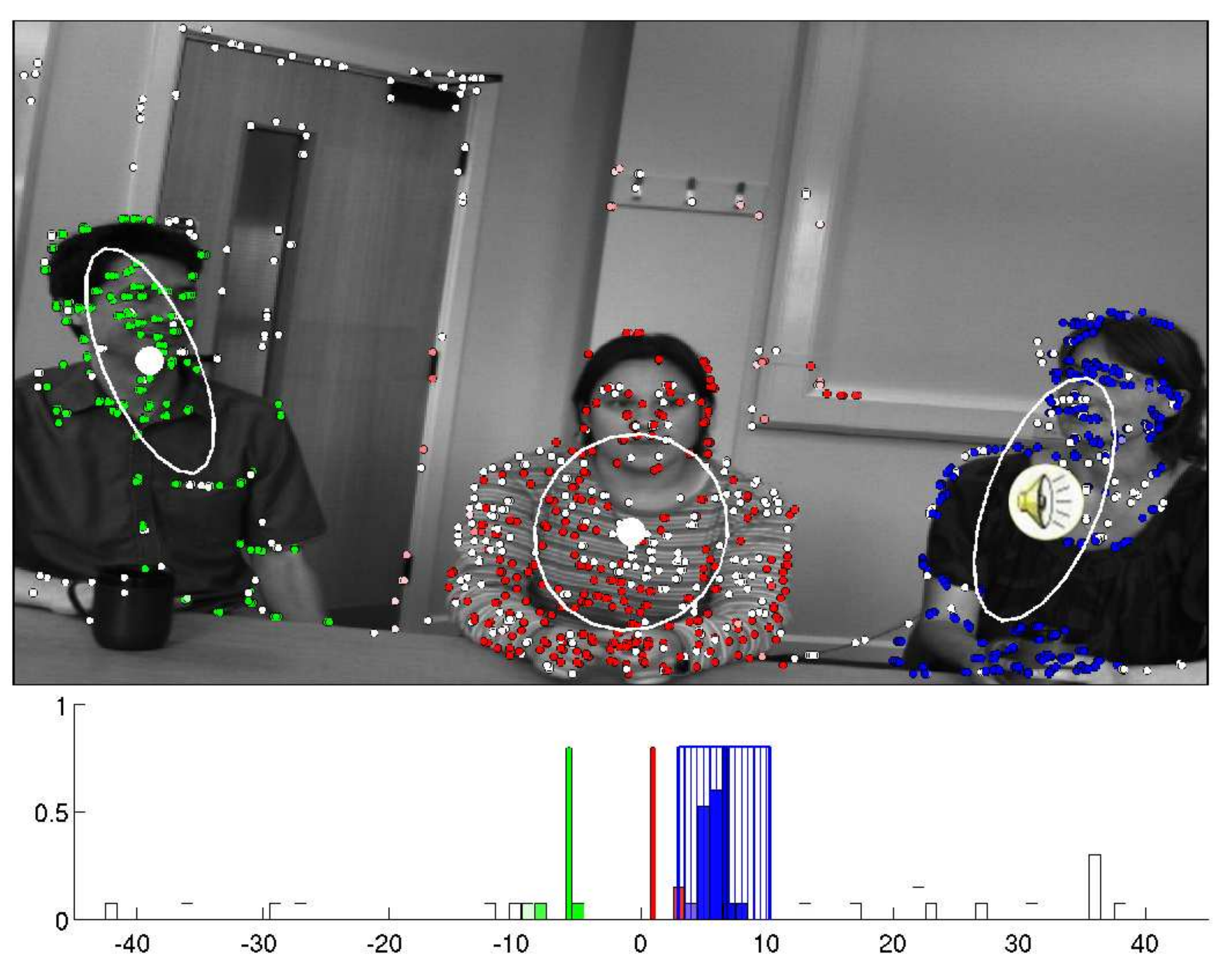} \\
      (a) frames 1001-1010 & (b) frames 1011-1020 \\
     \includegraphics[width=0.45\textwidth, type=pdf, ext=.pdf, read=.pdf]{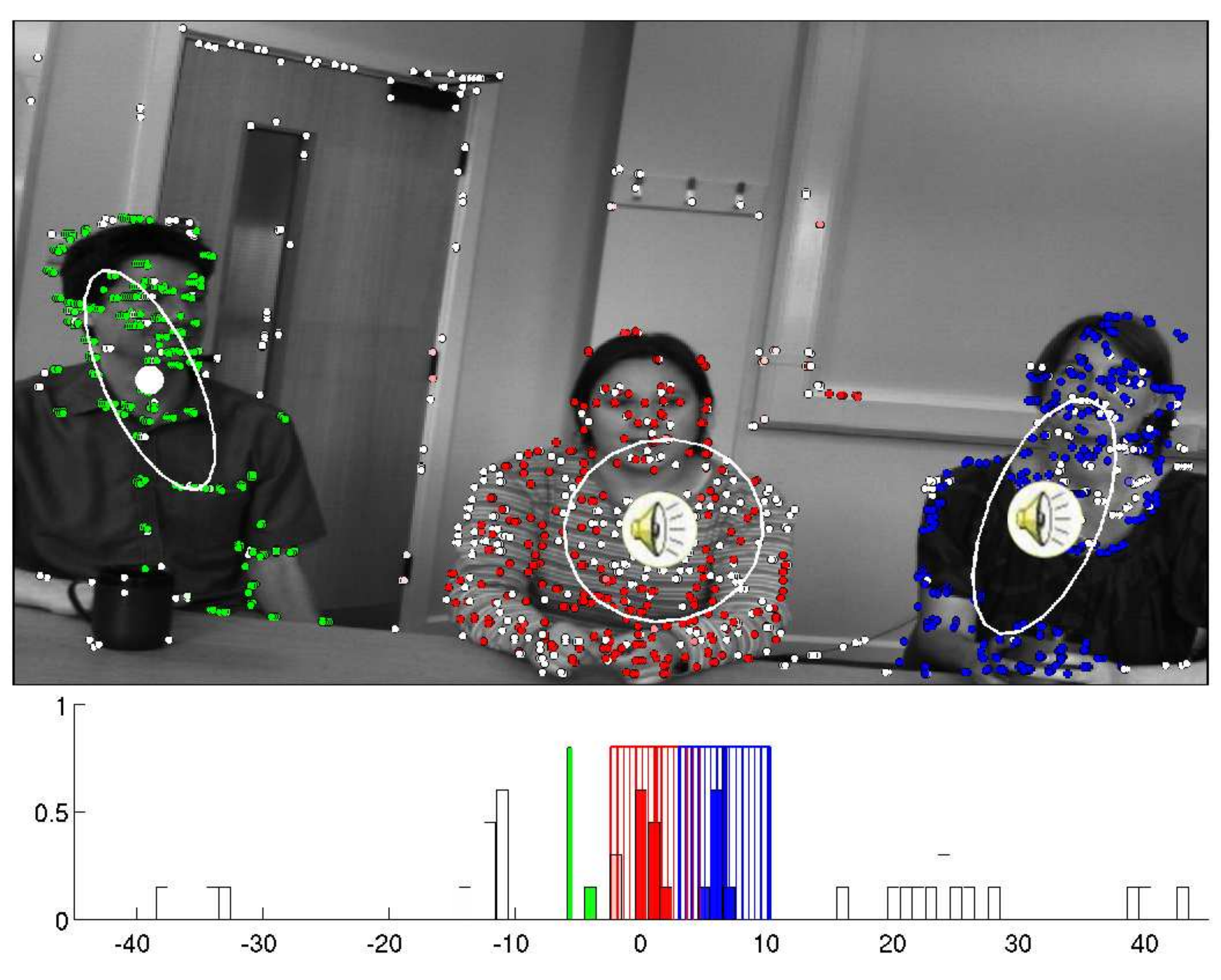} &
     \includegraphics[width=0.45\textwidth, type=pdf, ext=.pdf, read=.pdf]{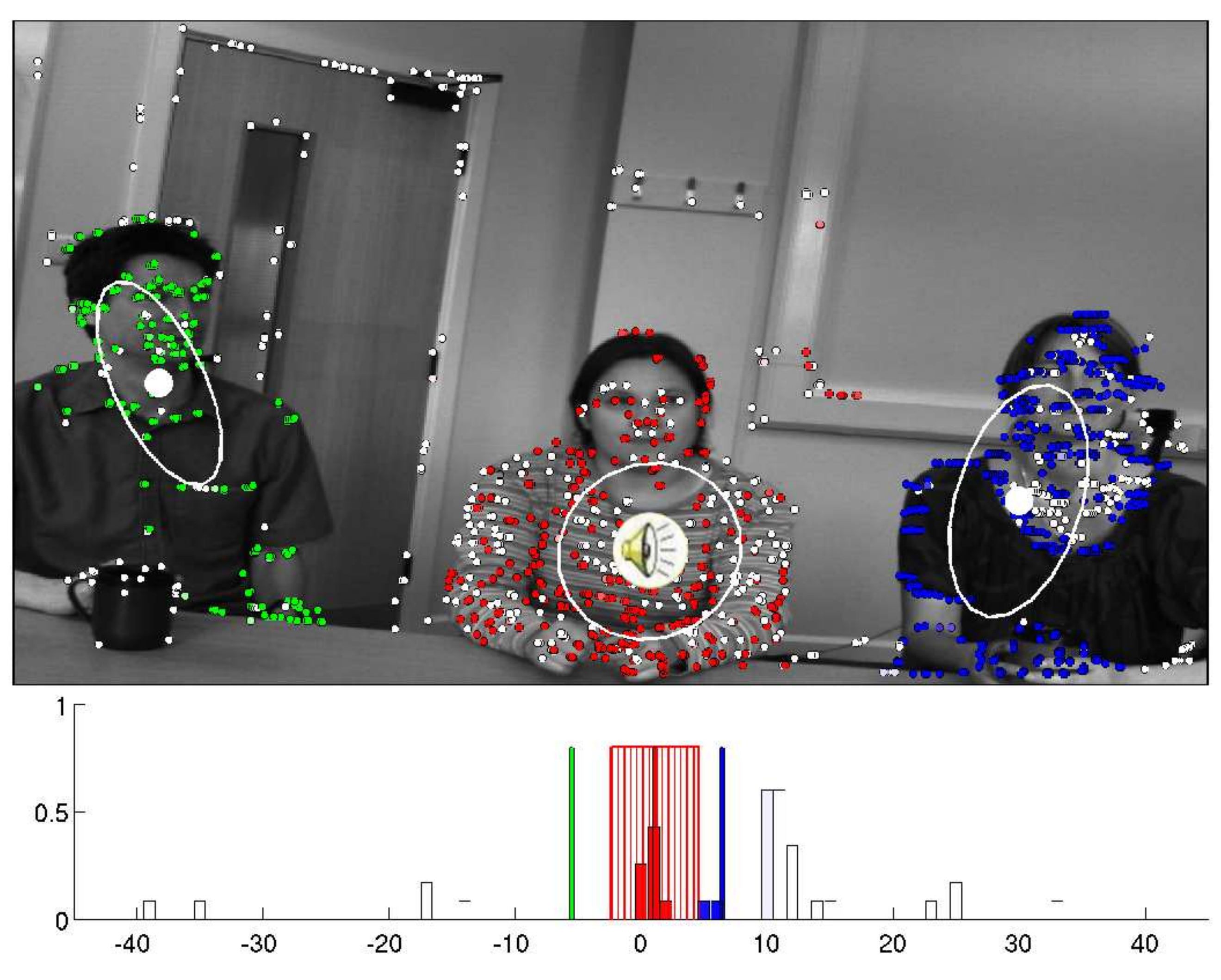} \\
      (c) frames 1021-1030 & (d) frames 1031-1040 \\
     \includegraphics[width=0.45\textwidth, type=pdf, ext=.pdf, read=.pdf]{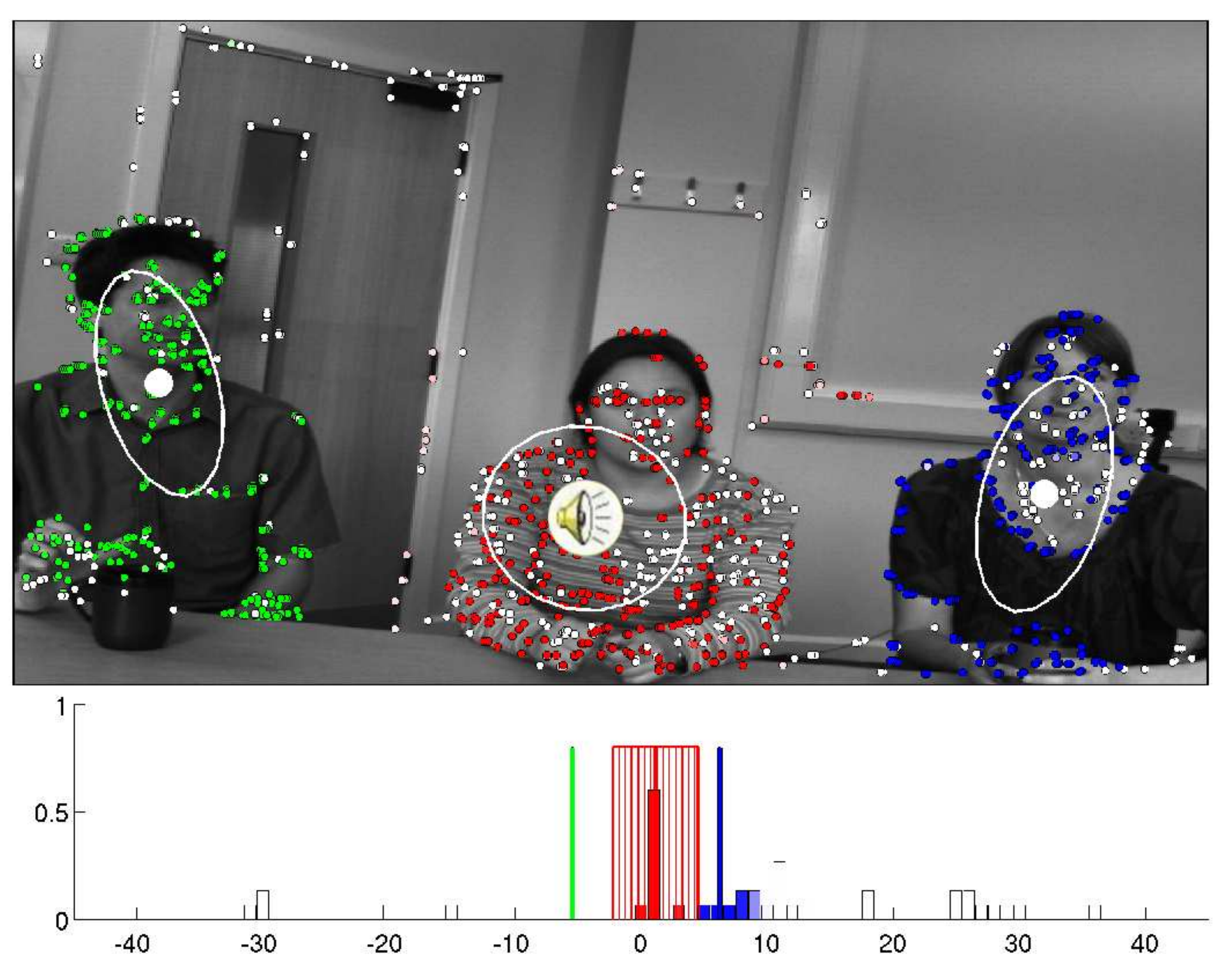} &
     \includegraphics[width=0.45\textwidth, type=pdf, ext=.pdf, read=.pdf]{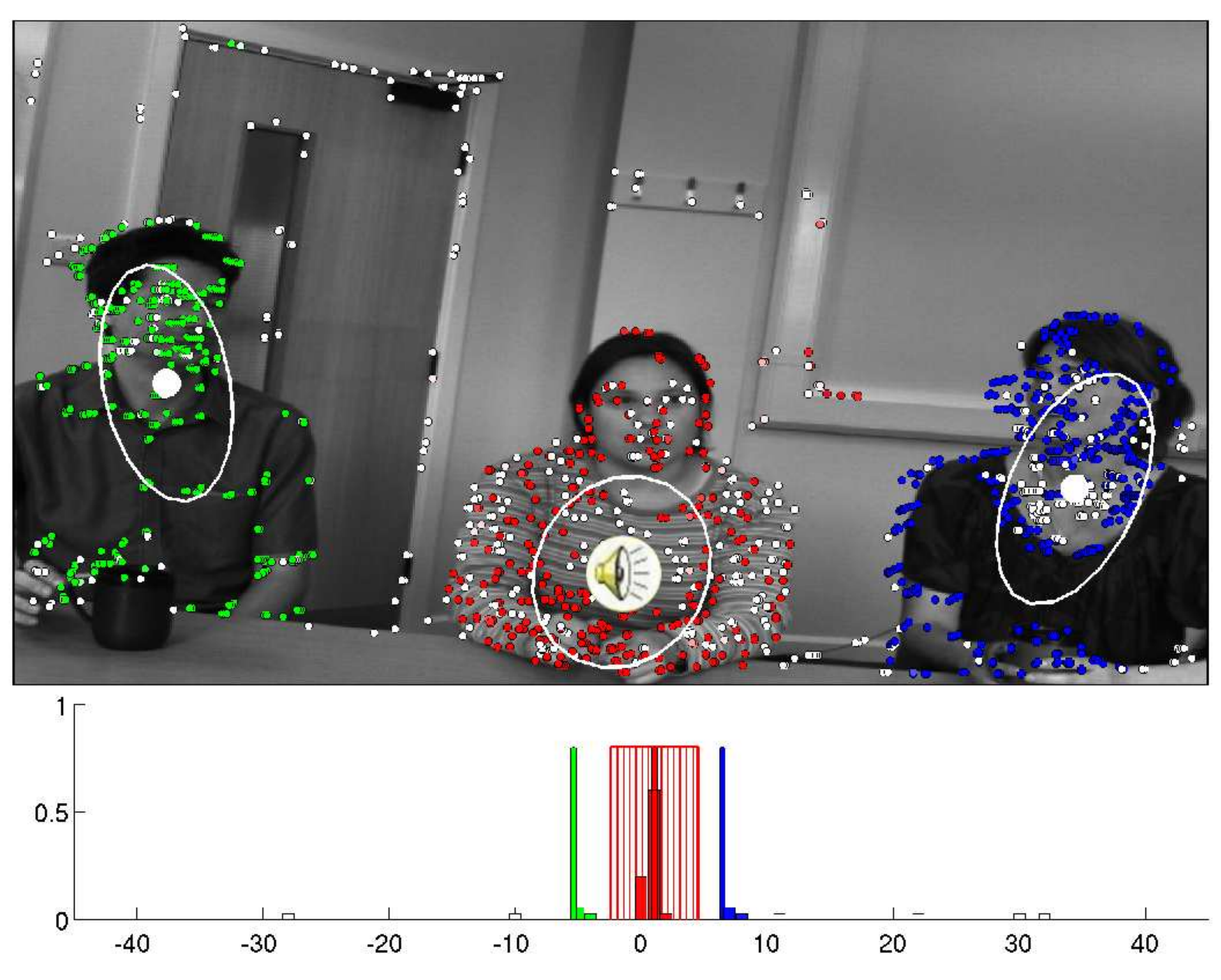} \\
     (e) frames 1041-1050 & (f) frames 1051-1060
%\end{center}
\end{tabular}
\caption{ \label{fig:m1res} Results obtained in the case of the
meeting scenario shown overlapped onto the left image. Sixty
frames (1001 to 1060) were split into six segments. Parameter
initialization and model selection were performed on the first
segment (frames 1-10) and are not shown. The ``visual'' covariance
matrices associated with the 3 Gaussian components are projected
onto the image plane. The white dots correspond to the projected
3D locations estimated by the algorithm. The blue, green, and red
colors encode the observation-to-cluster assignments and the
active speaker is marked with a corresponding symbol. The
algorithm correctly estimates speech sources, even in the case
when two speakers are active. }
\end{figure}

\begin{figure}[p!]
\begin{tabular}{ccc}
%\begin{center}
     \includegraphics[width=0.45\textwidth, type=pdf, ext=.pdf, read=.pdf]{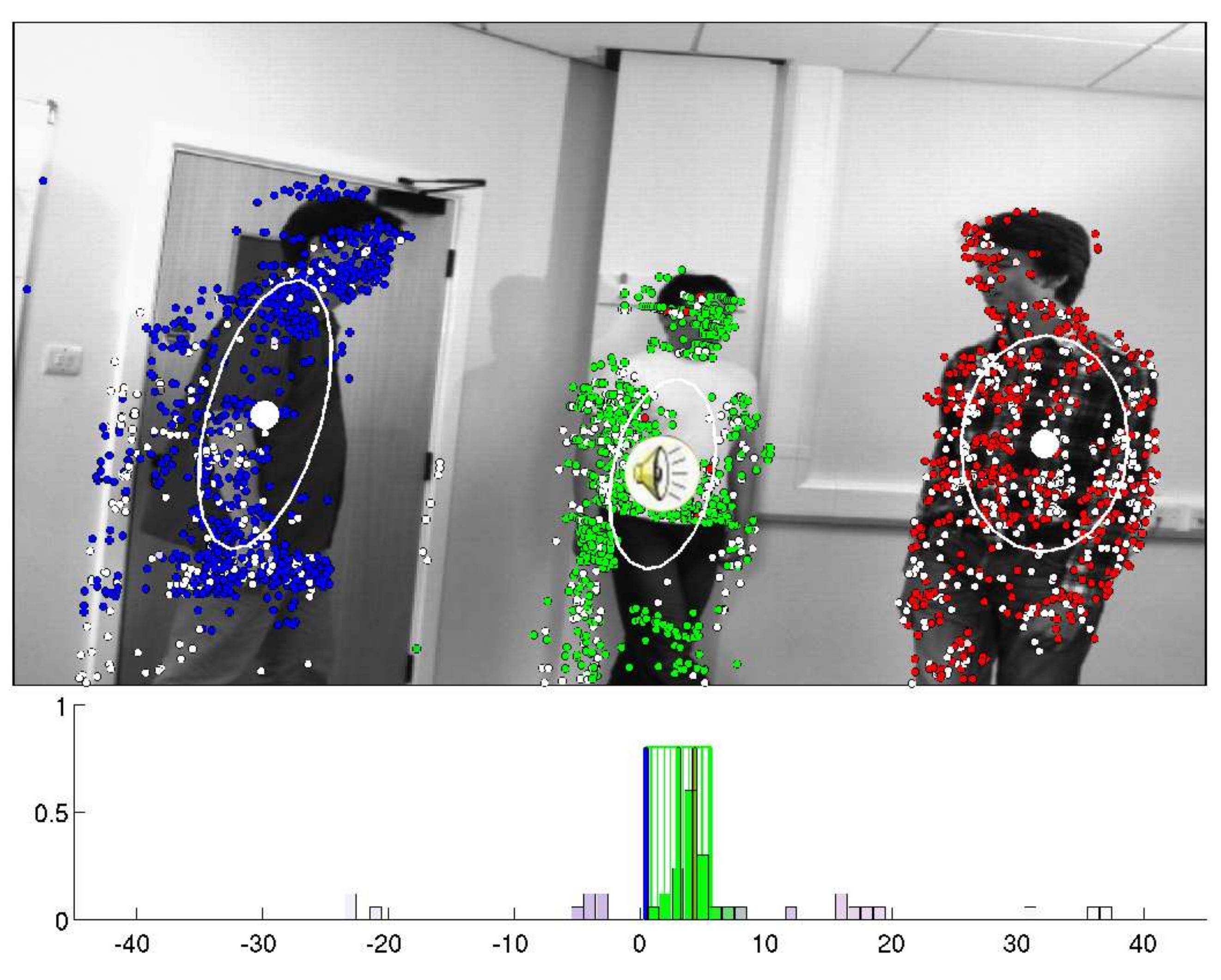} &
     \includegraphics[width=0.45\textwidth, type=pdf, ext=.pdf, read=.pdf]{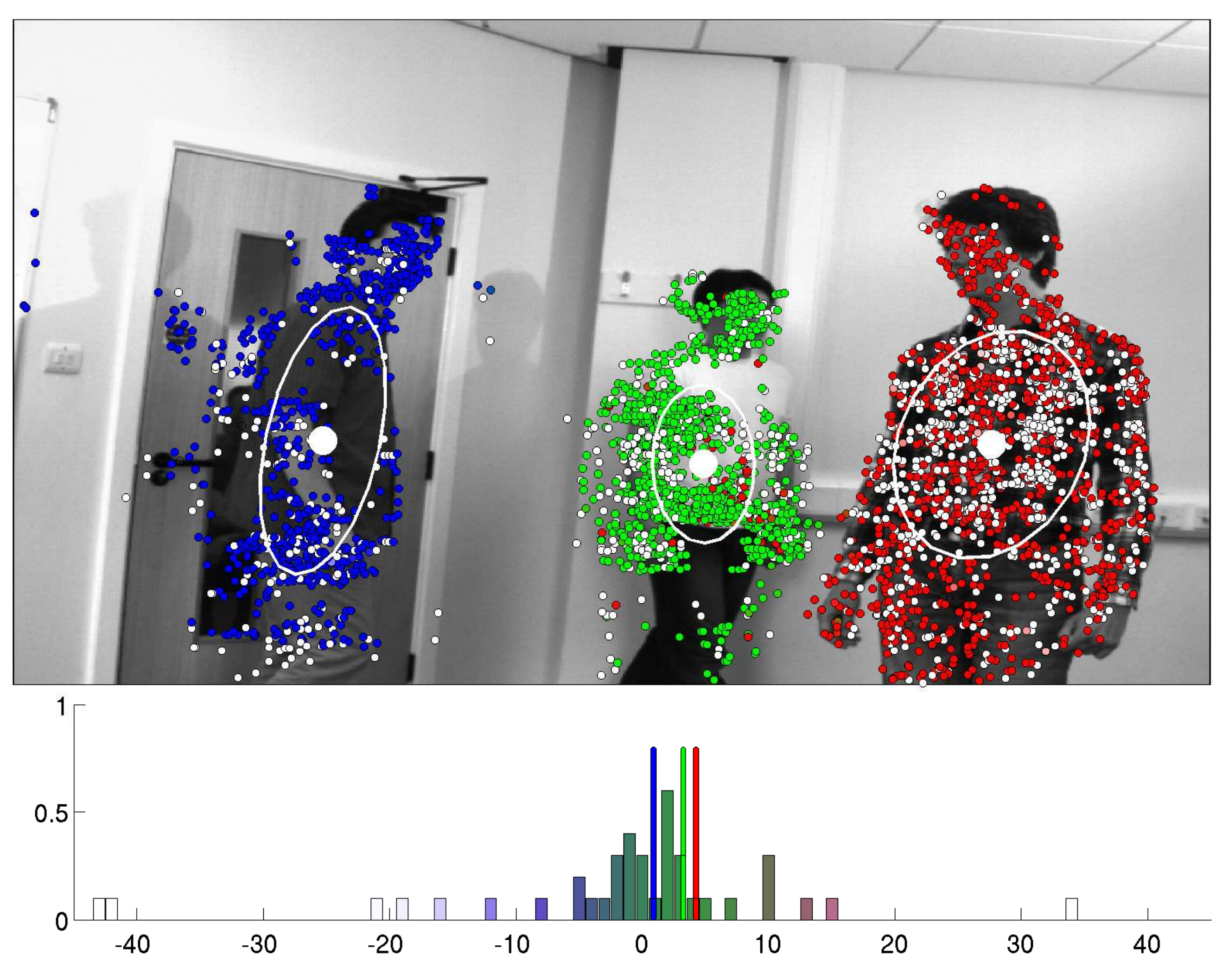} \\
      (a) frames 181-190 & (b) frames 191-200 \\
     \includegraphics[width=0.45\textwidth, type=pdf, ext=.pdf, read=.pdf]{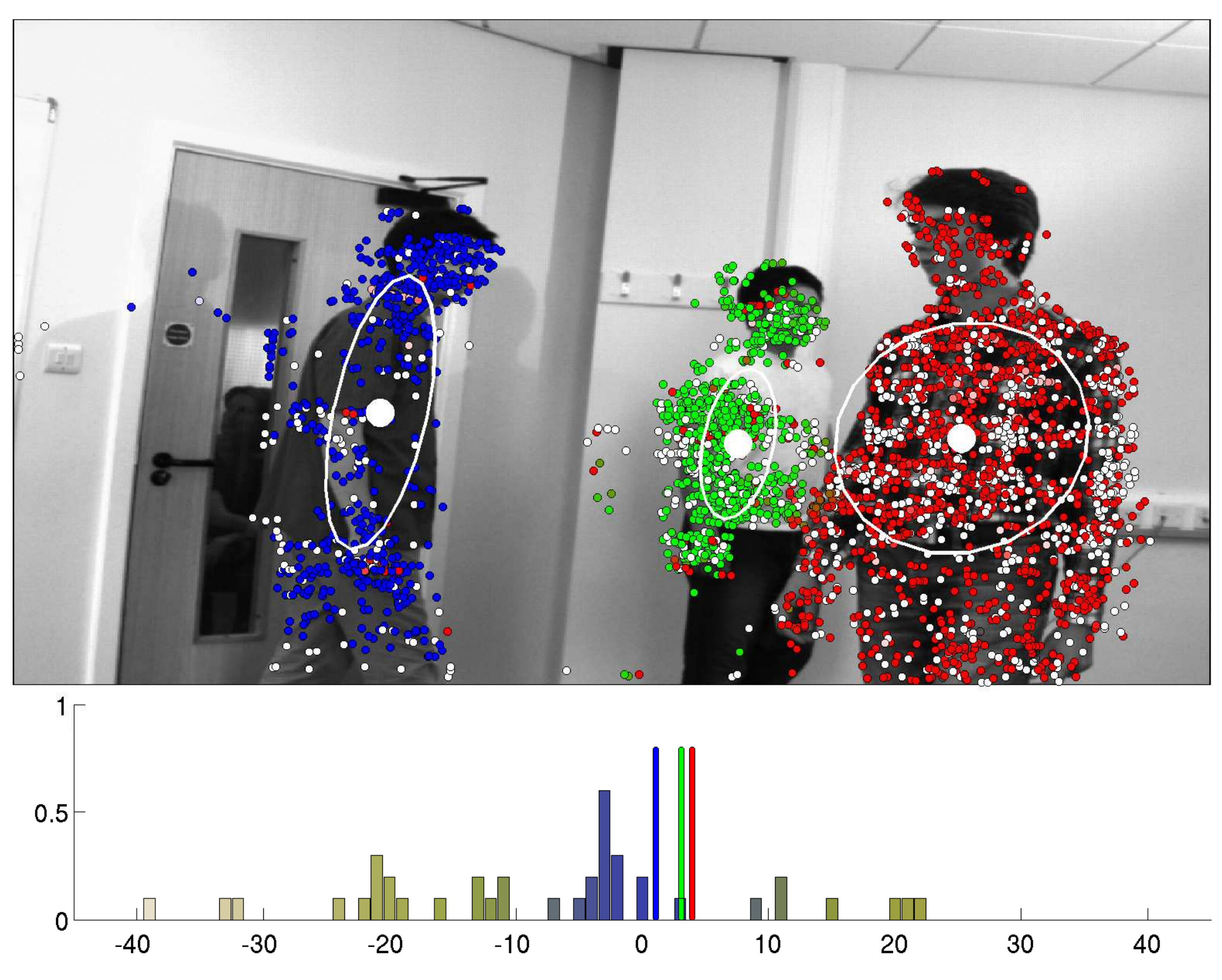} &
     \includegraphics[width=0.45\textwidth, type=pdf, ext=.pdf, read=.pdf]{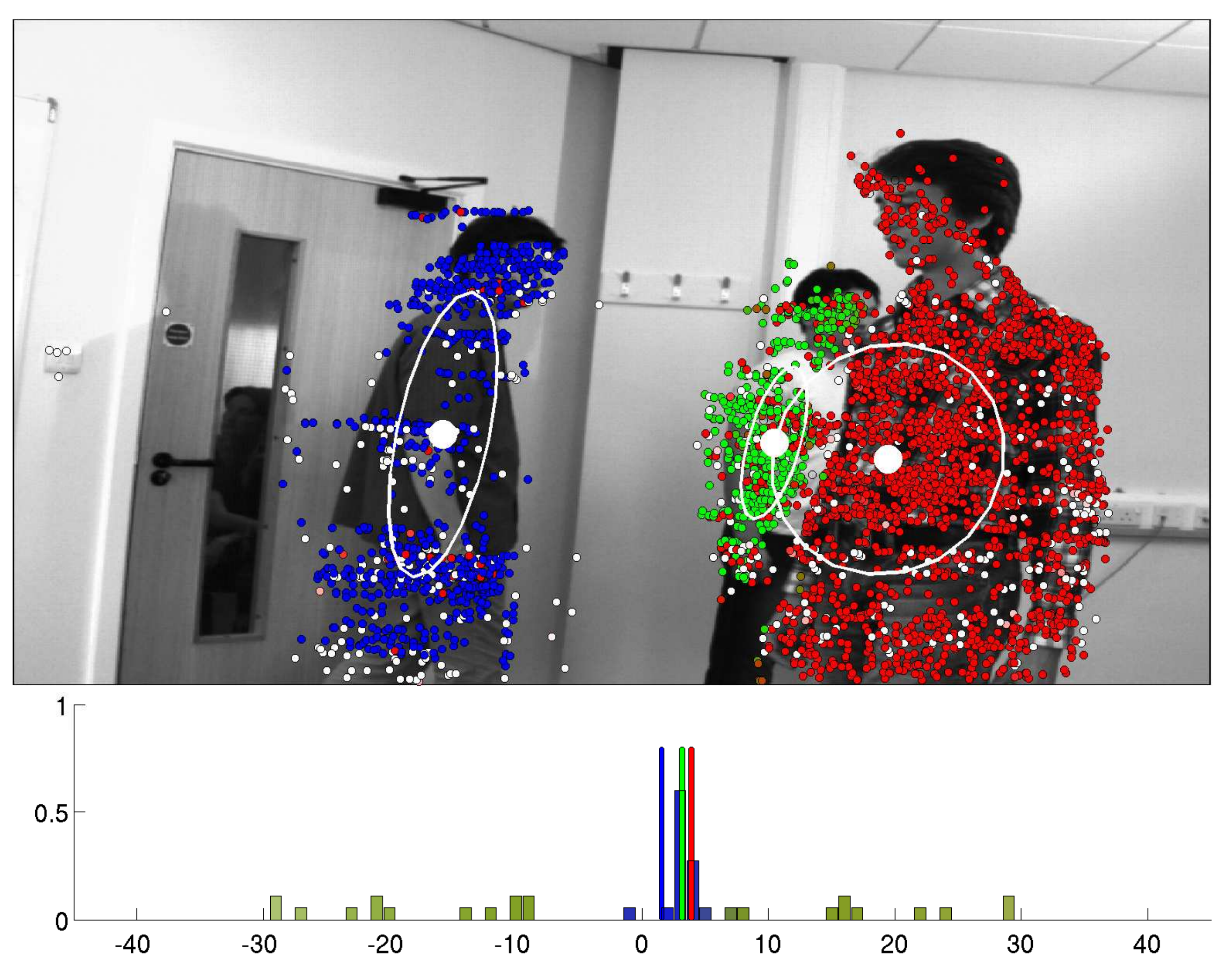} \\
      (c) frames 201-210 & (d) frames 211-220 \\
     \includegraphics[width=0.45\textwidth, type=pdf, ext=.pdf, read=.pdf]{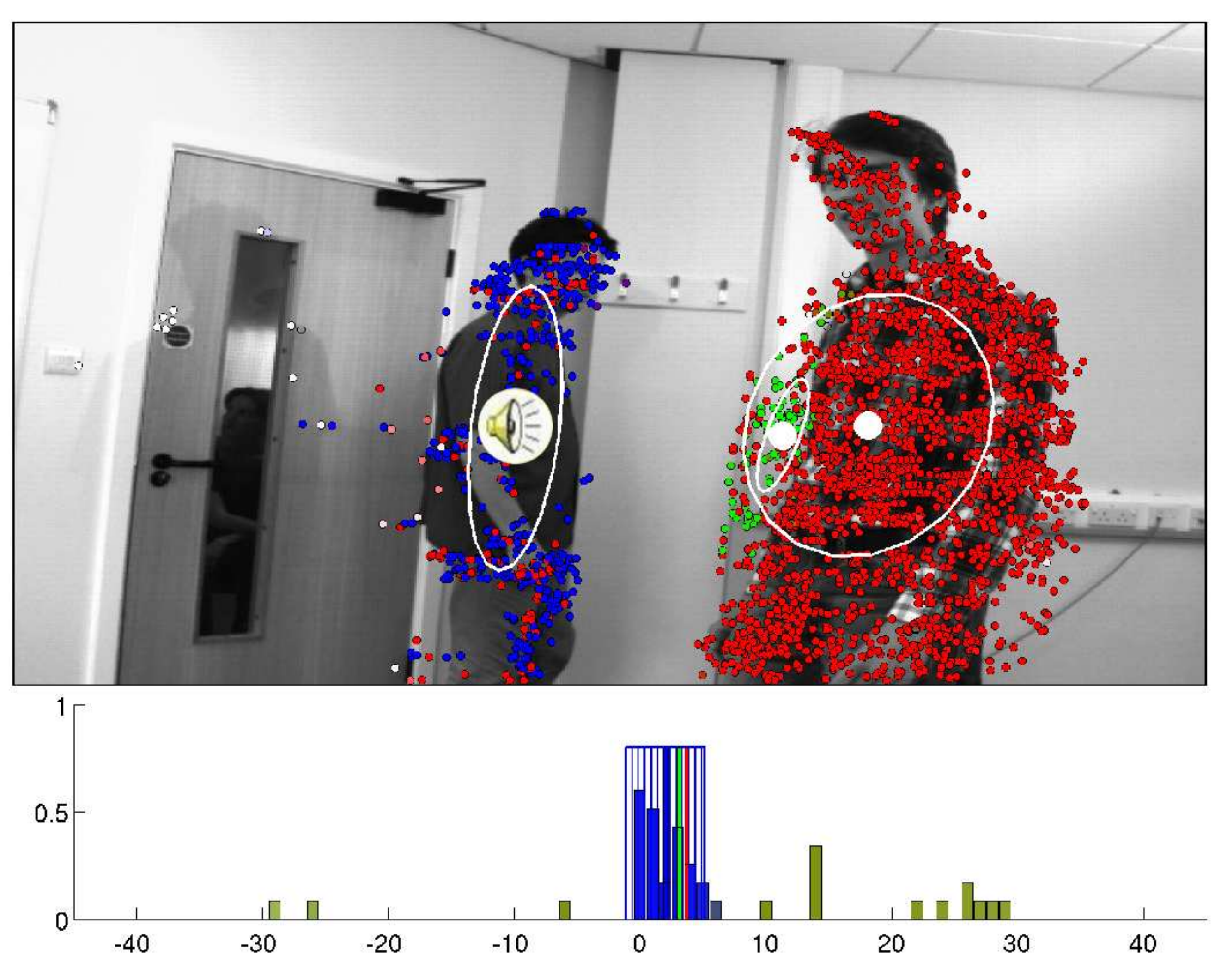} &
     \includegraphics[width=0.45\textwidth, type=pdf, ext=.pdf, read=.pdf]{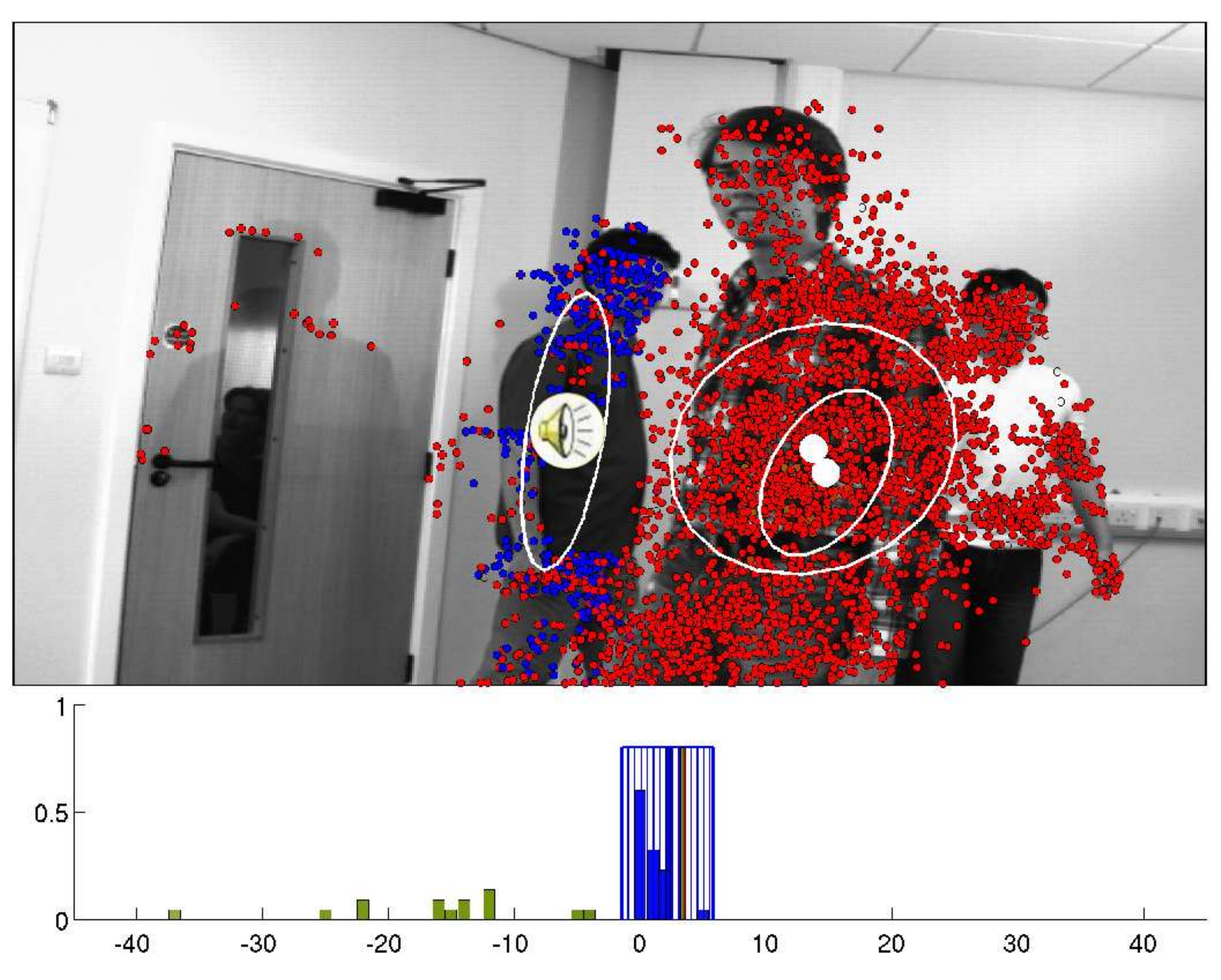} \\
     (e) frames 221-230 & (f) frames 231-240
%\end{center}
\end{tabular}
\caption{ \label{fig:visres} Results obtained in the case of the
cocktail party scenario shown overlapped onto the left image. As
in the previous case, sixty frames (181 to 240) were split into
six segments. Parameter initialization and model selection were
performed on the first segment (frames 1-10) and are not shown. As
expected, well separated objects, (a)-(c), are correctly handled.
While partial occlusion, (d)-(e) is also handled correctly, the
algorithm fails to deal with a complete occlusion, (f). }
\end{figure}

Overall, the proposed method performs well on data collected in a
natural environment.
 The initialization strategy and the
model selection criterion proved to be robust to noise and to
minor deviations from the Gaussian distribution assumption. It
possesses the features of a global optimization method which
enables to find initial parameter values that are close to optimal
ones. In both examples, the parameter initialization and model
selection were performed on the first audio-visual data segment.
This certainly biases the overall results. Indeed, in both cases,
the initialization and model selection algorithms dealt with a
case were the objects were well separated. One could rerun
initialization and model selection on every data segment, at the
cost of a less efficient procedure.

The conjugate clustering method automatically weights the auditory and
visual modalities, in terms of precision
and amount of observations, to infer the parameter values. We noticed
that, in general,
the visual data are considered by the algorithm as more reliable. 
%\begin{changes}
This can be explained by the fact that,
in practice, the auditory signals are contaminated with noise and reverberations.
This typically smooths the histogram peaks in the ITD domain and adds false peaks,
as can be seen in Figures~\ref{fig:m1res} and~\ref{fig:visres}. As reverberations
are natural for most of the environments and sound sources, we added auditory
cluster variances to model the local smoothing effect, as well as an outlier category to treat
false peaks. In general, if the data is gathered using a small time interval,
reverberations and noise have higher effect, the observations are scattered
and auditory spatial localization is poor.
At the same time, widening the time interval would result in sharper peaks for sound sources
that are smoothed due to reverberations and dynamics of the scene, and hence the
auditory temporal localization will be less accurate.
Thus the auditory data are typically sparse both in time
and space. The temporal discontinuity of the auditory data together with
the lack of resolution makes it less reliable than the visual data.
%\end{changes}

Although our multimodal clustering model has no built-in dynamic capability, as is the case
with target-tracking methods based on the Kalman filter, the
implemented algorithm performs quite well in the case of partial
visual occlusions, as illustrated in the cocktail party scenario.

\section{Conclusions}
\label{sec:discuss}

We proposed a novel framework to cluster
heterogeneous data gathered with physically different sensors.
Our approach differs from other existing approaches
% based on generative models
in that it combines in a single statistical model a number of clustering
tasks while ensuring the consistency of their results.
In addition, the fact that the clustering is performed in observation spaces
allows one to get useful statistics on the data, which is an advantage of
our approach over particle filtering models.
The task of simultaneous clustering in spaces of
different nature, related through known functional dependencies to
a common parameter space, was formulated as a likelihood
maximization problem. Using the ideas underlying the classical EM
algorithm we built the conjugate EM algorithm to perform the
multimodal clustering task, while keeping attractive convergence
properties. The analysis of the conjugate EM algorithm and, more
specifically, of the optimization task arising in the M-step,
revealed several possibilities to increase the convergence speed.
We  proposed to decompose the M-step into two procedures,  namely
the {\it Local Search} and {\it Choose} procedures,
which allowed us to derive a number of acceleration strategies. We
exhibited appealing properties of the target function which
induced  several implementations of these procedures resulting in
a significantly improved convergence speed.
We introduced the {\it Initialize} and {\it Select} procedures to
efficiently choose initial parameter values and determine
the number of clusters in a consistent manner respectively.
A non trivial
audio-visual localization task was considered to illustrate the
conjugate EM performance on both simulated and real data.
Simulated data experiments allowed us to assess the average method
behaviour in various configurations. They showed that the obtained
clustering results were precise as regards the observation spaces
under consideration. They also illustrated the theoretical
dependency between the precisions in observation and parameter
spaces. Real data experiments then showed that the observed data
precision was high enough to guarantee high precision in the
parameter space.

One of the strong points of the formulated model is that it is
open to different useful extensions. It can be easily extended to
an arbitrary number $J$ of observation spaces $\fos_1,\ldots,\fos_J$.
The main results, including {\it Local Search} and {\it Choose}
acceleration strategies stay valid with minor changes. The sum of
two terms, related to spaces $\fos$ and $\sos$, would have
to be replaced by a sum of $J$ terms  corresponding to
$\fos_1,\ldots,\fos_J$ in the formulas of
Section~\ref{sec:conjem}.

%\begin{changes}
In particular, adding Gaussian priors
on parameters (i.e., priors, covariance matrices and objet locations)
would not essentially change the formulae.
For a large class of dynamics equations, the update
expressions~(\ref{eq:msteppriors})-(\ref{eq:optvars-more})
for priors and variances will remain in closed form,
whereas the function $Q_{\cind}^{\iter}(\param)$
in~(\ref{eq:mstepmax}) will receive an additional term $\log P(\param)$.
For instance, multimodal dynamic inference of parameter values for
Brownian dynamics~\citep{vankampen07stochastic}
can be performed by means of the formulated model.
Gaussian priors would add a quadratic term similar to the others in~(\ref{eq:mstepmax}),
that can be viewed as an `observation' from the ambient space modality.
Thus the optimization algorithm would not require any changes
and would give an unbiased estimate.
%\end{changes}

Also, the  assumption that assignment
variables $\fsasss$ and $\ssasss$ are independent could be
 relaxed. An appropriate approach  to perform
inference in a non independent case would be to consider
variational approximations \citep{jordan98introduction} and in
particular a variational EM (VEM) framework. The general idea
would be to approximate the joint distribution $P(\fsasss)$ by a
distribution from a restricted class of probability distributions
that factorize as
$\tilde{P}(\fsasss)=\prod\limits_{\find=1}^{\fmind}
\tilde{P}(\fsassind)$. For any such distribution, our model would
be  applicable without any changes so that for  a variational
version of the conjugate EM algorithm, all the results from
Section~\ref{sec:conjem} would hold.

It appears that as a generalization of Gaussian mixture models,
our model has larger modelling capabilities. It is entirely based
on a mathematical framework in which each step is theoretically
well-founded. Its ability to provide good results in a non trivial
multimodal clustering task is particularly promising for
applications requiring the integration of  several heterogenous
information sources. Therefore, it has advantages over other
methods that include ad-hoc processing while being open to
incorporation of more task dependent information.

%\begin{itemize}
%\item multiple feature spaces
%\item dependent labels within modality (MRF), approximations
%\item dependent labels across modalities, approximations and cooperative estimation
%\item gaussian prior on parameters
%\item student distribution
%\end{itemize}

\section*{Acknowledgements}

The authors would like to thank Miles Hansard (INRIA Grenoble Rh{\^o}ne-Alpes)
and Heidi Christensen (Department of Computer Science, University of Sheffield)
for providing their software for visual and auditory feature detection.

%\clearpage

%\bibliographystyle{natbib}
%\bibliography{multisens_fusion,morecitations}

\end{document}